\documentclass{article}
\pdfoutput=1

\usepackage[accepted]{icml2023}

\usepackage[utf8]{inputenc} % allow utf-8 input
\usepackage[T1]{fontenc}    % use 8-bit T1 fonts
\usepackage{url}            % simple URL typesetting
\usepackage{booktabs}       % professional-quality tables
\usepackage{amsfonts}       % blackboard math symbols
\usepackage{nicefrac}       % compact symbols for 1/2, etc.
\usepackage{microtype}      % microtypography
\usepackage{wrapfig}
\usepackage{subcaption}
% \usepackage[dvipsnames]{xcolor}         % colors
% \usepackage[usenames,dvipsnames]{color}
% \usepackage[ruled]{algorithm2e}
% \usepackage{algorithmic}
% Attempt to make hyperref and algorithmic work together better:
% \newcommand{\theHalgorithm}{\arabic{algorithm}}

\usepackage{amssymb,amsmath}
\usepackage{amsthm}
\usepackage{mathrsfs}
\usepackage{thmtools, thm-restate}
\usepackage[mathscr]{eucal} % for mathscr
\usepackage{natbib}
\usepackage{bbm}
\usepackage{arydshln}
\usepackage{enumitem}
\usepackage{cancel}
\usepackage{graphicx}
\usepackage{mdframed}
\usepackage{pgfplots}

\theoremstyle{plain}
\newtheorem{theorem}{Theorem}[section]

\newtheorem{lemma}[theorem]{Lemma}

\theoremstyle{definition}

\theoremstyle{remark}

\pgfplotsset{compat=1.18}
\definecolor{mydarkblue}{rgb}{0,0.08,0.45}
\definecolor{mydarkgreen}{rgb}{0,0.45,0.08}

\usepackage[colorlinks=true,
    linkcolor=mydarkblue,
    citecolor=mydarkblue,
    filecolor=mydarkblue,
    urlcolor=mydarkblue,
    pagebackref=true]{hyperref} 
\usepackage[capitalize]{cleveref}
\renewcommand*\backref[1]{\ifx#1\relax \else (cited on #1) \fi}

\newcommand{\blue}[1]{\textcolor{mydarkblue}{#1}}
\newcommand{\darkgreen}[1]{\textcolor{mydarkgreen}{#1}}

\mdfdefinestyle{mdframedthmbox}{%
	leftmargin=.0\textwidth,
	rightmargin=.0\textwidth,%
	innertopmargin=0.75em,
	innerleftmargin=.5em,
	innerrightmargin=.5em,
}

\setlength{\intextsep}{1\baselineskip}

\usepackage{float}

\crefname{lemmanew}{Lemma}{Lemmas}

\newcommand{\grad}[1]{\nabla \ell(#1)}
\newcommand{\gradh}[1]{\nabla h(#1)}

\newcommand{\inner}[2]{\langle #1,\, #2 \rangle}
\newcommand{\normsq}[1]{\left\|#1 \right\|_{2}^{2}}
\newcommand{\norm}[1]{\left\|#1 \right\|}

\newcommand{\indnormsq}[2]{\left\|#1\right\|_{#2}^{2}}

\newcommand{\tht}{\theta_t}
\newcommand{\thtt}{\theta_{t+1}}
\newcommand{\bthtt}{\bar{\theta}_{t+1}}
\newcommand{\tthtt}{\tilde{\theta}_{t+1}}

\newcommand{\pthtt}{\theta^{\prime}_{t+1}}

\newcommand{\z}{z}
\newcommand{\zt}{z_t}

\newcommand{\ztt}{z_{t+1}}
\newcommand{\tztt}{\tilde{z}_{t+1}}
\newcommand{\cZ}{\mathcal{Z}}
\newcommand{\cP}{\mathcal{P}}

\newcommand{\fopt}{{z^*}}
\newcommand{\ft}{{z_{t}}}

\newcommand{\ftt}{{z_{t+1}}}
\newcommand{\tftt}{\tilde{z}_{t+1}}
\newcommand{\bftt}{{\bar{z}_{t+1}}}
\newcommand{\fth}{{z_{t + \nicefrac{1}{2}}}}

\def\1{\bm{1}}

\newcommand{\transpose}{^\mathsf{\scriptscriptstyle T}}
\newcommand\inv{\raisebox{1.15ex}{$\scriptscriptstyle-\!1$}}

\def\epst{{\epsilon_t}}
\def\epstsq{{\epsilon^2_{t}}}
\def\epstt{{\epsilon_{t+1}}}
\def\epsttsq{{\epsilon^2_{t+1}}}
\def\alphat{{\alpha_t}}
\def\alphatsq{{\alpha^{2t}}}

\newcommand{\g}{\textsl{g}^{p}}
\newcommand{\gz}{g}

\newcommand{\gradt}[1]{{\nabla \ell_t(#1)}}

\newcommand{\etat}{{\eta_t}}

\newcommand{\petat}{{\eta'_t}}
\newcommand{\petatsq}{{\eta'^{2}_t}}

% Random variables

% rm is already a command, just don't name any random variables m

% Random vectors

% Elements of random vectors

% Random matrices

% Elements of random matrices

% Vectors

% Elements of vectors

% Matrix

% Tensor
\DeclareMathAlphabet{\mathsfit}{\encodingdefault}{\sfdefault}{m}{sl}
\SetMathAlphabet{\mathsfit}{bold}{\encodingdefault}{\sfdefault}{bx}{n}

% Graph

% Sets

% Entries of a matrix

% entries of a tensor
% Same font as tensor, without \bm wrapper

% The true underlying data generating distribution

% The empirical distribution defined by the training set

% The model distribution

% Stochastic autoencoder distributions

 % Laplace distribution

\newcommand{\E}{\mathbb{E}}

\newcommand{\R}{\mathbb{R}}

% Wolfram Mathworld says $L^2$ is for function spaces and $\ell^2$ is for vectors
% But then they seem to use $L^2$ for vectors throughout the site, and so does
% wikipedia.

 % See usage in notation.tex. Chosen to match Daphne's book.

\DeclareMathOperator*{\argmax}{arg\,max}
\DeclareMathOperator*{\argmin}{arg\,min}

\def\FMAPG/{{FMA-PG}}
\def\DFMAPG/{{DFMA-PG}}
\def\SFMAPG/{{SFMA-PG}}

\def\SSO/{\texttt{SSO}}

\icmltitlerunning{Target-based Surrogates for Stochastic Optimization}

\begin{document}

\twocolumn[
\icmltitle{Target-based Surrogates for Stochastic Optimization}

\icmlsetsymbol{equal}{*}

\begin{icmlauthorlist}
\icmlauthor{Jonathan Wilder Lavington}{equal,ubc}
\icmlauthor{Sharan Vaswani}{equal,sfu}
\icmlauthor{Reza Babanezhad}{samsung}
\icmlauthor{Mark Schmidt}{ubc,cifaramii}
\icmlauthor{Nicolas Le Roux}{msr,cifarmila}

\end{icmlauthorlist}

\icmlaffiliation{ubc}{University of British Columbia}
\icmlaffiliation{sfu}{Simon Fraser University}
\icmlaffiliation{samsung}{Samsung - SAIT AI Lab, Montreal}
\icmlaffiliation{msr}{Microsoft Research}
\icmlaffiliation{cifaramii}{ Canada CIFAR AI Chair (Amii)}
\icmlaffiliation{cifarmila}{ Canada CIFAR AI Chair (MILA)}

\icmlcorrespondingauthor{Jonathan Wilder Lavington}{wilderlavington@gmail.com}
\icmlcorrespondingauthor{Sharan Vaswani}{vaswani.sharan@gmail.com}

\icmlkeywords{Stochastic optimization, Imitation learning, Surrogate optimization, Composition structure}

\vskip 0.3in
]

\printAffiliationsAndNotice{\icmlEqualContribution} % otherwise use the standard text.

\begin{abstract}
We consider minimizing functions for which it is expensive to compute the (possibly stochastic) gradient. Such functions are prevalent in reinforcement learning,  imitation learning and adversarial training. Our target optimization framework uses the (expensive) gradient computation to construct surrogate functions in a \emph{target space} (e.g. the logits output by a linear model for classification) that can be minimized efficiently. This allows for multiple parameter updates to the model, amortizing the cost of gradient computation. In the full-batch setting, we prove that our surrogate is a global upper-bound on the loss, and can be (locally) minimized using a black-box optimization algorithm. We prove that the resulting majorization-minimization algorithm ensures convergence to a stationary point of the loss. Next, we instantiate our framework in the stochastic setting and propose the $\SSO/$ algorithm, which can be viewed as projected stochastic gradient descent in the target space. This connection enables us to prove theoretical guarantees for $\SSO/$ when minimizing convex functions. Our framework allows the use of standard stochastic optimization algorithms to construct surrogates which can be minimized by any deterministic optimization method. To evaluate our framework, we consider a suite of supervised learning and imitation learning problems. Our experiments indicate the benefits of target optimization and the effectiveness of $\SSO/$.
\end{abstract}
\section{Introduction}
\label{sec:introduction} 

Stochastic gradient descent (SGD)~\citep{robbins1951stochastic} and its variants~\citep{duchi2011adaptive,kingma2014adam} are ubiquitous optimization methods in machine learning (ML). For supervised learning, iterative first-order methods require computing the gradient over individual mini-batches of examples, and that cost of computing the gradient often dominates the total computational cost of these algorithms. For example, in reinforcement learning (RL)~\citep{williams1992simple,sutton2000policy} or online imitation learning (IL)~\citep{ross2011reduction}, policy optimization requires gathering data via potentially expensive interactions with a real or simulated environment. 
 
We focus on algorithms that access the expensive gradient oracle to construct a sequence of surrogate functions. Typically, these surrogates are chosen to be global upper-bounds on the underlying function and hence minimizing the surrogate allows for iterative minimization of the original function. Algorithmically, these surrogate functions can be minimized efficiently \emph{without additional accesses to the gradient oracle}, making this technique advantageous for the applications of interest. The technique of incrementally constructing and minimizing surrogate functions is commonly referred to as \emph{majorization-minimization} and includes the Expectation-Maximization (EM) algorithm~\citep{dempster1977maximum} as an example. In RL, common algorithms~\citep{schulman2015trust, schulman2017proximal} also rely on minimizing surrogates. 

Typically, surrogate functions are constructed by using the convexity and/or smoothness properties of the underlying function. Such surrogates have been used in the stochastic setting~\citep{mairal2013stochastic, mairal2015incremental}. The prox-linear algorithm~\citep{drusvyatskiy2017proximal} for instance, uses the composition structure of the loss function and constructs surrogate functions in the parametric space. Unlike these existing works, we construct surrogate functions over a well-chosen \emph{target space} rather than the parametric space, leveraging the \emph{composition structure} of the loss functions prevalent in ML to build better surrogates. For example, in supervised learning, typical loss functions are of the form $h(\theta) = \ell(f(\theta))$, where $\ell$ is (usually) a convex loss (e.g. the squared loss for regression or the logistic loss for classification), while $f$ corresponds to a transformation (e.g. linear or high-dimensional, non-convex as in the case of neural networks) of the inputs. Similarly, in IL, $\ell$ measures the divergence between the policy being learned and the ground-truth expert policy, whereas $f$ corresponds to a specific parameterization of the policy being learned. More formally, if $\Theta$ is the feasible set of parameters, $f: \Theta \rightarrow \cZ$ is a potentially non-convex mapping from the \emph{parametric space} $\Theta \subseteq \R^d$ to the \emph{target space} $\cZ \subseteq \R^{p}$ and $\ell: \cZ \rightarrow \R$ is a convex loss function. For example, for linear regression, $h(\theta) = \frac{1}{2} \, \normsq{X \theta - y}$, $\z = f(\theta) = X \theta$ and $\ell(\z) = \frac{1}{2} \normsq{\z - y}$. In our applications of interest, computing $\nabla_{z} \ell(\z)$ requires accessing the expensive gradient oracle, but $\nabla_{\theta} f(\theta)$ can be computed efficiently. Unlike~\citet{nguyen2022finite} who exploit this composition structure to prove global convergence, we will use it to construct surrogate functions in the target space.~\citet{johnson2020guided} also construct surrogate functions using the target space, but require access to the (stochastic) gradient oracle for each model update, making the algorithm proposed inefficient in our setting. Moreover, unlike our work,~\citet{johnson2020guided} do not have theoretical guarantees in the stochastic setting. Concurrently with our work,~\citet{woodworth2023two} also consider minimizing expensive-to-evaluate functions, but do so by designing ``proxy'' loss function which is similar to the original function. We make the following contributions. 

\textbf{Target smoothness surrogate}: In~\cref{sec:method-deterministic}, we use the smoothness of $\ell$ with respect to $\z$ in order to define the \emph{target smoothness surrogate} and prove that it is a global upper-bound on the underlying function $h$. In particular, these surrogates are constructed using tighter bounds on the original function. This  ensures that additional progress can be made towards the minimizer using multiple model updates before recomputing the expensive gradient.

%In particular, these surrogates are tighter bounds of the original function, allowing multiple parameter updates to be performed before recomputing the gradient.

\textbf{Target optimization in the deterministic setting}: In~\cref{sec:method-deterministic}, we devise a majorization-minimization algorithm where we iteratively form the target smoothness surrogate and then (locally) minimize it using any black-box algorithm. Although forming the target smoothness surrogate requires access to the expensive gradient oracle, it can be minimized without additional oracle calls resulting in multiple, computationally efficient updates to the model. We refer to this framework as \emph{target optimization}. This idea of constructing surrogates in the target space has been recently explored in the context of designing efficient off-policy algorithms for reinforcement learning~\citep{vaswani2021general}. However, unlike our work,~\citet{vaswani2021general} do not consider the stochastic setting, or provide theoretical convergence guarantees. In~\cref{alg:generic}, we instantiate the target optimization framework and prove that it converges to a stationary point at a sublinear rate (\cref{lemma:deterministic-algo} in~\cref{app:proofs-deterministic}). 

\textbf{Stochastic target smoothness surrogate}: In~\cref{sec:method-stochastic}, we consider the setting where we have access to an expensive stochastic gradient oracle that returns a noisy, but unbiased estimate of the true gradient. Similar to the deterministic setting, we access the gradient oracle to form a \emph{stochastic} target smoothness surrogate. Though the surrogate is constructed by using a stochastic gradient in the target space, it is a deterministic function with respect to the parameters and can be minimized using any standard optimization algorithm. In this way, our framework disentangles the stochasticity in $\nabla_{z} \ell(\z)$ (in the target space) from the potential non-convexity in $f$ (in the parametric space). 

\textbf{Target optimization in the stochastic setting}: Similar to the deterministic setting, we use $m$ steps of GD to minimize the stochastic target smoothness surrogate and refer to the resulting algorithm as stochastic surrogate optimization ($\SSO/$). We then interpret $\SSO/$ as inexact projected SGD in the target space. This interpretation of $\SSO/$ allows the use of standard stochastic optimization algorithms to construct surrogates which can be minimized by any deterministic optimization method. Minimizing surrogate functions in the target space is also advantageous since it allows us to choose the space in which to constrain the size of the updates. Specifically, for overparameterized  models such as deep neural networks, there is only a loose connection between the updates in the parameter and target space. In order to directly constrain the updates in the target space, methods such as natural gradient~\citep{Amari1998,kakade2001natural} involve computationally expensive operations. In comparison, $\SSO/$ has direct control over the updates in the target space and can also be implemented efficiently.

\textbf{Theoretical results for $\SSO/$}: Assuming $h(\theta)$ to be smooth, strongly-convex, in~\cref{sec:theory}, we prove that $\SSO/$ (with a constant step-size in the target space) converges linearly to a neighbourhood of the true minimizer (\cref{prop:gd_sc_bnd}). In~\cref{prop:proj-err}, we prove that the size of this neighbourhood depends on the noise ($\sigma^2_{z}$) in the stochastic gradients in the target space and an error term $\zeta^2$ that depends on the dissimilarity in the stochastic gradients at the optimal solution. In~\cref{prop:sso-lb}, we provide a quadratic example that shows the necessity of this error term in general. However, as the size of the mini-batch increases or the model is over-parameterized enough to interpolate the data~\citep{schmidt2013fast, vaswani2019fast}, $\zeta^2_t$ becomes smaller. In the special case when interpolation is exactly satisfied, we prove that $\SSO/$ with $O\left(\log(\nicefrac{1}{\epsilon})\right)$ iterations is sufficient to guarantee convergence to an $\epsilon$-neighbourhood of the true minimizer. Finally, we argue that $\SSO/$ can be more efficient than using parametric SGD for expensive gradient oracles and common loss functions (\cref{sec:theory-benefits}).   

\textbf{Experimental evaluation}: To evaluate our target optimization framework, we consider online imitation learning (OIL) as our primary example. For policy optimization in OIL, computing $\nabla_{z} \ell$ involves gathering data through interaction with a computationally expensive simulated environment. Using the Mujoco benchmark suite~\citep{todorov2012mujoco} we demonstrate that $\SSO/$ results in superior empirical performance  (\cref{sec:experiments}). We then consider standard supervised learning problems where we compare $\SSO/$ with different choices of the target surrogate to standard optimization methods. These empirical results indicate the practical benefits of target optimization and $\SSO/$.

\section{Problem Formulation}
\label{sec:problem}
We focus on minimizing functions that have a composition structure and for which the gradient is expensive to compute. Formally, our objective is to solve the following problem: $\min_{\theta \in \Theta} h(\theta) := \ell(f(\theta))$ where $\Theta \subseteq \R^d$, $\cZ \subseteq \R^p$, $f: \Theta \rightarrow \cZ$ and $\ell: \cZ \rightarrow \R$. Throughout this paper, we will assume that $h$ is $L_{\theta}$-smooth in the parameters $\theta$ and that $\ell(\z)$ is $L$-smooth in the targets $\z$. For all generalized linear models including linear and logistic regression, $f = X^\top\theta$ is a linear map in $\theta$ and $\ell$ is convex in $\z$. 

\textit{Example}: For logistic regression with features $X \in \R^{n \times d}$ and labels $y \in \{-1,+1\}^{n}$, $h(\theta)=$ $ \sum_{i = 1}^{n} \log\left(1 + \exp(-y_i \langle X_i, \theta \rangle) \right)$. If we ``target'' the logits\footnote{The target space is not unique. For example, we could directly target the classification probabilities for logistic regression resulting in different $f$ and $\ell$.}, then $\cZ = \{\z \vert z = X \theta \} \subseteq \R^n$ and $\ell(\z) =$ $ \sum_{i = 1}^{n} \log\left(1 + \exp(-y_i z_i) \right)$. In this case, $L_\theta$ is the maximum eigenvalue of $X\transpose X$, whereas $L = \frac{1}{4}$. A similar example follows for linear regression.
 
In settings that use neural networks, it is typical for the function $f$ mapping $X$ to $y$ to be non-convex, while for the loss $\ell$ to be convex. For example in OIL, the target space is the space of parameterized policies, where $\ell$ is the cumulative loss when using a policy distribution $\pi := f(\theta)$ who's density is parameterized by $\theta$. Though our algorithmic framework can handle non-convex $\ell$ and $f$, depending on the specific setting, our theoretical results will assume that $\ell$ (or $h$) is (strongly)-convex in $\theta$ and $f$ is an affine map.

For our applications of interest, computing $\nabla_{\z} \ell(\z)$ is computationally expensive, whereas $f(\theta)$ (and its gradient) can be computed efficiently. For example, in OIL, computing the cumulative loss $\ell$ (and the corresponding gradient $\nabla_{z} \ell(\z)$) for a policy involves evaluating it in the environment. Since this operation involves interactions with the environment or a simulator, it is computationally expensive. On the other hand, the cost of computing $\nabla_{\theta} f(\theta)$ only depends on the policy parameterization and does not involve additional interactions with the environment. In some cases, it is more natural to consider access to a \emph{stochastic} gradient oracle that returns a noisy, unbiased gradient $\nabla \tilde{\ell}(\z)$ such that $\E[\nabla \tilde{\ell}(\z)] = \nabla \ell(\z)$. We consider the effect of stochasticity in~\cref{sec:method-stochastic}. 
% This structure is also satisfied by online imitation learning that we consider in~\cref{sec:experiments}.

If we do not take advantage of the composition structure nor explicitly consider the cost of the gradient oracle, iterative first-order methods such as GD or SGD can be directly used to minimize $h(\theta)$. At iteration $t \in [T]$, the \emph{parametric GD} update is: $\thtt = \tht - \eta \nabla h_t(\tht)$ where $\eta$ is the step-size to be selected or tuned according to the properties of $h$. Since $h$ is $L_\theta$-smooth, each iteration of parametric GD can be viewed as exactly minimizing the quadratic surrogate function derived from the smoothness condition with respect to the parameters. Specifically, $\thtt := \argmin \g_t(\theta)$ where $\g_t$ is the \emph{parametric smoothness surrogate}: $\blue{\g_t(\theta) := h(\tht) + \langle \gradh{\tht}, \theta - \tht \rangle + \frac{1}{2 \eta} \normsq{\theta - \tht}}$.

The quadratic surrogate is tight at $\tht$, i.e $h(\tht) = \g_t(\tht)$ and becomes looser as we move away from $\tht$. For $\eta \leq \frac{1}{L_\theta}$, the surrogate is a global (for all $\theta$) upper-bound on $h$, i.e. $\g_t(\theta) \geq h(\theta)$. Minimizing the global upper-bound results in descent on $h$ since $h(\thtt) \leq \g_t(\thtt) \leq \g_t(\tht) = h_t(\tht)$. Similarly, the \emph{parametric SGD} update consists of accessing the stochastic gradient oracle to obtain $\big(\tilde{h}(\theta), \nabla \tilde{h}(\theta)\big)$ such that $\E[\tilde{h}(\theta)] = h(\theta)$ and $\E[\nabla \tilde{h}(\theta)] = \nabla h(\theta)$, and iteratively constructing the \emph{stochastic parametric smoothness surrogate} $\tilde{\g}_t(\theta)$. Specifically, $\thtt = \argmin \tilde{\g}_t(\theta)$, where \blue{$\tilde{\g}_t(\theta) := \tilde{h}(\tht) + \langle \nabla \tilde{h}(\tht), \theta - \tht \rangle + \frac{1}{2 \etat} \normsq{\theta - \tht}$}. Here, $\etat$ is the iteration dependent step-size, decayed according to properties of $h$~\citep{robbins1951stochastic}. In contrast to these methods, in the next section, we exploit the smoothness of the losses with respect to the target space and propose a majorization-minimization algorithm in the deterministic setting. 
\section{Deterministic Setting}
\label{sec:method-deterministic}
We consider minimizing $\ell(f)$ in the deterministic setting where we can exactly evaluate the gradient $\nabla_{z} \ell(\z)$. Similar to the parametric case in~\cref{sec:problem}, we use the smoothness of $\ell(\z)$ w.r.t the target space and define the \emph{target smoothness surrogate} around $\zt$ as: $\ell(\z_t) + \langle \nabla_{z} \ell(\zt), \z - \zt \rangle + \frac{1}{2 \eta} \normsq{\z - \zt}$, where $\eta$ is the step-size in the target space and will be determined theoretically. Since $\z = f(\theta)$, the surrogate can be expressed as a function of $\theta$ as: \blue{$\gz_t(\theta) := \left[\ell(\zt) + \langle \nabla_{z} \ell(\z_t), f(\theta) -\zt \rangle + \tfrac{1}{2 \eta} \normsq{f(\theta) - \zt} \right]$}, which in general is not quadratic in $\theta$. 

\textit{Example}: For linear regression, $f(\theta) = X^\top \theta$ and $\gz_t(\theta) = \frac{1}{2} \normsq{X \tht - y} + \big \langle [X \tht - y], X (\theta - \tht) \big \rangle + \frac{1}{2 \eta} \normsq{X (\theta - \theta_t)}$. 

% For $\eta = 1$,  {\color{red}$\ell(f) = \frac{1}{2} \normsq{f(\theta)-y}$ implies}  $\gz_t(\theta_t) = \ell(f(\theta_t))$. 

Similar to the parametric smoothness surrogate, we see that $h(\tht) = \ell(f(\tht)) = \gz_t(\tht)$. If $\ell$ is $L$-smooth w.r.t the target space $\cZ$, then for $\eta \leq \frac{1}{L}$, we have $\gz_t(\theta) \geq \ell(f(\theta)) = h(\theta)$ for all $\theta$, that is the surrogate is a global upper-bound on $h$. Since $\gz_t$ is a global upper-bound on $h$, similar to GD, we can minimize $h$ by minimizing the surrogate at each iteration i.e. $\thtt = \argmin_{\theta} \gz_t(\theta)$. However, unlike GD, in general, there is no closed form solution for the minimizer of $\gz_t$, and we will consider minimizing it approximately. 
\setlength{\textfloatsep}{0pt}
\begin{algorithm}[!ht]
\caption{\darkgreen{(Stochastic)} Surrogate optimization}
\label{alg:generic}
\textbf{Input}: $\theta_0$ (initialization), $T$ (number of iterations), $m_t$ (number of inner-loops), $\eta$ (step-size for the target space), $\alpha$ (step-size for the parametric space) 
\begin{algorithmic}
\FOR{$t = 0$  to  $T-1$}
    \STATE Access the \darkgreen{(stochastic)} gradient oracle to construct $\darkgreen{\tilde{\textcolor{black}{\gz_{t}\,}}}(\theta)$
    \STATE Initialize inner-loop: $\omega_0 = \tht$ 
    \FOR{$k \leftarrow 0$ to $m_{t-1}$}
    \STATE $\omega_{k+1} = \omega_{k} - \alpha \nabla_{\omega} \darkgreen{\tilde{\textcolor{black}{\gz_{t}\,}}}(\omega_k)$
    \ENDFOR
    \STATE $\theta_{t+1} = \omega_{m}$ \quad \text{;} \quad $\ztt = f(\thtt)$
\ENDFOR
\STATE Return $\theta_{T}$
\end{algorithmic}
\end{algorithm}

This results in the following meta-algorithm: for each $t \in [T]$, at iterate $\tht$, form the surrogate $\gz_t$ and compute $\thtt$ by (approximately) minimizing $\gz_t(\theta)$. This meta-algorithm enables the use of any black-box algorithm to minimize the surrogate at each iteration. In~\cref{alg:generic}, we instantiate this meta-algorithm by minimizing $\gz_t(\theta)$ using $m \geq 1$ steps of gradient descent. For $m = 1$,~\cref{alg:generic} results in the following update: $\thtt = \tht - \alpha \nabla \gz_t(\tht) = \tht - \alpha \nabla h(\tht)$, and is thus equivalent to parametric GD with step-size $\alpha$. 

\textit{Example}: For linear regression, instantiating~\cref{alg:generic} with $m = 1$ recovers parametric GD on the least squares objective. On the other hand, minimizing the surrogate exactly (corresponding to $m = \infty$) to compute $\thtt$ results in the following update: $\thtt = \theta_t - \eta \, (X\transpose X)^{-1} \left[X\transpose (X \tht - y) \right]$ and recovers the Newton update in the parameter space. In this case, approximately minimizing $\gz_t$ using $m \in (1,\infty)$ steps of GD interpolates between a first and second-order method in the parameter space. In this case, our framework is similar to quasi-Newton methods~\citep{nocedal1999numerical} that attempt to model the curvature in the loss without explicitly modelling the Hessian. Unlike these methods, our framework does not have an additional memory overhead.

In~\cref{lemma:deterministic-algo} in~\cref{app:proofs-deterministic}, we prove that Algorithm~\ref{alg:generic} with any value of $m \geq 1$ and appropriate choices of $\alpha$ and $\eta$ results in an $O(\nicefrac{1}{T})$ convergence to a stationary point of $h$. Importantly, this result only relies on the smoothness of $\ell$ and $g_t^\z$, and does not require either $\ell(\z)$ or $f(\theta)$ to be convex. Hence, this result holds when using a non-convex model such as a deep neural networks, or for problems with non-convex loss functions such as in reinforcement learning. In the next section, we extend this framework to consider the stochastic setting where we can only obtain a noisy (though unbiased) estimate of the gradient. 
\section{Stochastic Setting}
\label{sec:method-stochastic}
In the stochastic setting, we use the noisy but unbiased estimates $(\tilde{\ell}(\z), \nabla \tilde{\ell}(\z))$ from the gradient oracle to construct the \emph{stochastic target surrogate}. To simplify the theoretical analysis, we will focus on the special case where $\ell$ is a finite-sum of losses i.e. $\ell(\z) = \frac{1}{n} \sum_{i = 1}^{n} \ell_{i}(\z)$. In this case, querying the stochastic gradient oracle at iteration $t$ returns the individual loss and gradient corresponding to the loss index $i_t$ i.e. $\big(\tilde{\ell}(\z), \nabla \tilde{\ell}(\z)\big) = \left(\ell_{i_t}(\z), \nabla \ell_{i_t}(\z) \right)$. This structure is present in the use-cases of interest, for example, in supervised learning when using a dataset of $n$ training points or in online imitation learning where multiple trajectories are collected using the policy at iteration $t$. In this setting, the deterministic surrogate is \blue{$\gz_{t}(\theta) := \frac{1}{n} \left[\sum \left[\ell_i(\z) + \langle \nabla \ell_i(\z_t), f(\theta) -\zt \rangle \right] + \frac{1}{2 \etat} \normsq{f(\theta) - \zt} \right]$}.

In order to admit an efficient implementation of the stochastic surrogate and the resulting algorithms, we only consider loss functions that are separable w.r.t the target space, i.e. for $\z \in \cZ$, if $\z^i \in \R$ denotes coordinate $i$ of $\z$, then $\ell(\z) = \frac{1}{n} \sum_{i} \ell_i(\z^i)$. For example, this structure is present in the loss functions for all supervised learning problems where $\cZ \subseteq \R^n$ and $z^i = f_i(\theta) := f(X_i, \theta)$. In this setting, $\frac{\partial \ell_{i}}{\partial \z^j} = 0$ for all $i$ and $j \neq i$ and the \emph{stochastic target surrogate} is defined as
\blue{$\tilde{\gz}_t(\theta) := \ell_{i_t}(\zt) + \frac{\partial \ell_{i_t}(\zt)}{\partial z^{i_t}} \left[f_{i_t}(\theta) - \zt^{i_t} \right] + \frac{1}{2 \etat} \left[f_{i_t}(\theta) - \zt^{i_t}\right]^2$}, where $\etat$ is the target space step-size at iteration $t$. Note that $\tilde{\gz}_t(\theta)$ only depends on $i_t$ and only requires access to $\partial \ell_{i_t}(\zt)$ (meaning it can be constructed efficiently). We make two observations about the stochastic surrogate: (i) unlike the parametric stochastic smoothness surrogate $\tilde{g}^{p}$ that uses $i_t$ to form the stochastic gradient, $\tilde{\gz}_t(\theta)$ uses $i_t$ (the same random sample) for both the stochastic gradient $\partial \ell_{i_t}(\zt)$ and the regularization $\left[f_{i_t}(\theta) - \zt^{i_t}\right]^2$; (ii) while $\E_{i_t}[\tilde{\gz}_t(\theta)] = \gz_t(\theta)$, $\E_{i_t}[\argmin \tilde{\gz}_t(\theta)] \neq \argmin \gz_t(\theta)$, in contrast to the parametric case where $\E[\argmin \tilde{g}^{p}_t(\theta)] = \argmin \g_t(\theta)$.

\textit{Example}: For linear regression, $z^i = f_i(\theta) = X_i \theta$ and $\ell_i(\z^i) = \frac{1}{2} (\z^i - y_i)^2$. In this case, the stochastic target surrogate is equal to $\tilde{\gz}_t(\theta) = \frac{1}{2} \, (X_{i_t} \tht - y_{i_t})^2 + [X_{i_t} \tht - y_{i_t}] \cdot X_{i_t} (\theta - \tht) + \frac{1}{2 \etat} \normsq{X_{i_t} (\theta - \theta_t)}$. 

Similar to the deterministic setting, the next iterate can be obtained by (approximately) minimizing $\tilde{\gz}_t(\theta)$. Algorithmically, we can form the surrogate $\tilde{\gz}_t$ at iteration $t$ and minimize it approximately by using any black-box algorithm. We refer to the resulting framework as \emph{stochastic surrogate optimization} ($\SSO/$). For example, we can minimize $\tilde{\gz}_t$ using $m$ steps of GD. The resulting algorithm is the same as~\cref{alg:generic} but uses $\tilde{\gz}_t$ (the changes to the algorithm are highlighted in \darkgreen{green}). Note that the surrogate depends on the randomly sampled $i_t$ and is therefore random. However, once the surrogate is formed, it can be minimized using any deterministic algorithm, i.e. there is no additional randomness in the inner-loop in~\cref{alg:generic}. Moreover, for the special case of $m = 1$,~\cref{alg:generic} has the same update as parametric stochastic gradient descent. Previous work like the retrospective optimization framework in~\citet{newton2021retrospective} also considers multiple updates on the same batch of examples. However, unlike~\cref{alg:generic}, which forces proximity between consecutive iterates in the target space,~\citet{newton2021retrospective} use a specific stopping criterion in every iteration and consider a growing batch-size.

In order to prove theoretical guarantees for $\SSO/$, we interpret it as projected SGD in the target space. In particular, we prove the following equivalence in~\cref{app:smd-surrogate}.
\begin{restatable}{lemma}{restatesmdsurrogate}
\label{lemma:smd-surrogate}
The following updates are equivalent:
\begin{align}
(1)& \quad \tthtt = \argmin_{\theta} \tilde{\gz}_t(\theta) \text{;} \tag{$\SSO/$} \\ 
& \quad \tilde{z}^{(1)}_{t+1} = f(\tthtt) \nonumber  \\
(2)& \quad  \fth = \ft - \etat \nabla_{z} \ell_{i_t}(\ft)  \tag{Target-space SGD} \, \text{;} \\
& \quad \tztt^{(2)} = \argmin_{\z \in \cZ} \frac{1}{2} \indnormsq{\fth - \z}{\cP_t} 
\nonumber
\end{align}
where, $\cP_t \in \R^{p \times p}$ is a random diagonal matrix such that $\cP_t(i_t, i_t) = 1$ and $\cP_t(j,j) = 0$ for all $j \neq i_t$. That is, $\SSO/$ (1) and target space SGD (2), result in the same iterate in each step i.e. if $z_t = f(\tht)$, then $\tftt := \tftt^{(1)} = \tftt^{(2)}$.
\end{restatable}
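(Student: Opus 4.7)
The plan is to show that both updates reduce to the same one-dimensional minimization problem, driven by the single coordinate $i_t$ that the stochastic oracle reveals. The key structural fact I would exploit is the coordinate-separability of the loss: since $\ell_{i_t}(\z)$ depends on $\z$ only through $\z^{i_t}$, the gradient $\nabla_z \ell_{i_t}(\zt)$ has exactly one non-zero entry at position $i_t$, equal to $\partial \ell_{i_t}(\zt)/\partial z^{i_t}$. Consequently, the target-space SGD step $\fth = \ft - \etat \nabla_z \ell_{i_t}(\ft)$ leaves every coordinate of $\ft$ unchanged except coordinate $i_t$, which becomes $\fth^{i_t} = \zt^{i_t} - \etat \,\partial \ell_{i_t}(\zt)/\partial z^{i_t}$.

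Next, I would rewrite $\tilde{\gz}_t(\theta)$ in a completed-square form. Since $\tilde{\gz}_t(\theta)$ depends on $\theta$ only through the scalar $f_{i_t}(\theta)$, setting $u = f_{i_t}(\theta)$ and completing the square in $u$ gives
\[
\tilde{\gz}_t(\theta) \;=\; \frac{1}{2\etat}\bigl(f_{i_t}(\theta) - \bigl[\zt^{i_t} - \etat\, \partial \ell_{i_t}(\zt)/\partial z^{i_t}\bigr]\bigr)^2 + C_t,
\]
where $C_t$ is independent of $\theta$. By the previous paragraph the bracketed quantity is precisely $\fth^{i_t}$, so the SSO iterate satisfies $\tthtt \in \argmin_\theta \bigl(f_{i_t}(\theta) - \fth^{i_t}\bigr)^2$.

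For the target-space-SGD side, I would observe that $\cP_t$ is rank-one with its single non-zero entry at $(i_t, i_t)$, so $\tfrac{1}{2}\|\fth - \z\|^2_{\cP_t} = \tfrac{1}{2}(\fth^{i_t} - z^{i_t})^2$. Minimizing over $\z \in \cZ = \{f(\theta) : \theta \in \Theta\}$ is therefore equivalent to minimizing $(f_{i_t}(\theta) - \fth^{i_t})^2$ over $\theta$, which is the very same problem as the one solved by $\tthtt$. Composing with $f$, we get $\tftt^{(1)} = f(\tthtt) = \tftt^{(2)}$.

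The proof is essentially an algebraic identity once the separability and rank-one structures are lined up, so there is no serious obstacle. The only subtle point to flag is that the projection in (2) has a non-trivial null space (coordinates $j \neq i_t$ are unpenalized), so its argmin is a set rather than a unique element; I would note that the equivalence is to be read as stating that $f(\tthtt)$ belongs to this argmin set, which is the natural interpretation since both algorithms select their iterate by passing through the same parametric minimization $\argmin_\theta (f_{i_t}(\theta) - \fth^{i_t})^2$ and then applying $f$.
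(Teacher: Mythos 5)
Your proposal is correct and follows essentially the same route as the paper's proof: both arguments hinge on the separability of $\ell$ (so the stochastic gradient and the $\cP_t$-seminorm each isolate coordinate $i_t$) and on the algebraic identity relating the completed square $\tfrac{1}{2\etat}(f_{i_t}(\theta)-\fth^{i_t})^2$ to the surrogate $\tilde{\gz}_t$ — the paper merely reads this identity in the opposite direction, expanding the square inside the projection objective rather than completing it inside the surrogate. Your remark that the $\argmin$ in update (2) is a set because coordinates $j\neq i_t$ are unpenalized is a fair point of rigor that the paper leaves implicit.
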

The second step in target-space SGD corresponds to the projection (using randomly sampled index $i_t$) onto $\cZ$\footnote{For linear parameterization, $z^{i_t} = \langle X_{i_t} , \theta \rangle$ and the set $\cZ$ is convex. For non-convex $f$, the set $\cZ$ can be non-convex and the projection is not well-defined. However, $\tilde{\gz}_t$ can still be minimized, albeit without any guarantees on the convergence.}. 

Using this equivalence enables us to interpret the inexact minimization of $\tilde{\gz}_t$ (for example, using $m$ steps of GD in~\cref{alg:generic}) as an inexact projection onto $\cZ$ and will be helpful to prove convergence guarantees for $\SSO/$. The above interpretation also enables us to use the existing literature on SGD~\citep{robbins1951stochastic, li2021second, vaswani2019painless} to specify the step-size sequence $\{\etat\}_{t = 1}^{T}$ in the target space, completing the instantiation of the stochastic surrogate. Moreover, alternative stochastic optimization algorithms such as follow the regularized leader~\citep{abernethy2009competing} and adaptive gradient methods like AdaGrad~\citep{duchi2011adaptive}, online Newton method~\citep{hazan2007logarithmic}, and stochastic mirror descent~\citep[Chapter 6]{bubeck2015convex} in the target space result in different stochastic surrogates (refer to~\cref{app:algorithm}) that can then be optimized using a black-box deterministic algorithm. Next, we prove convergence guarantees for $\SSO/$ when $\ell(\z)$ is a smooth, strongly-convex function. 

\subsection{Theoretical Results}
\label{sec:theory}
For the setting where $\ell(\z)$ is a smooth and strongly-convex function, we first analyze the convergence of inexact projected SGD in the target space (\cref{sec:convergence}), and then bound the projection errors in in~\cref{sec:proj-error-control}. 
\subsubsection{Convergence Analysis}
\label{sec:convergence}
For the theoretical analysis, we assume that the choice of $f$ ensures that the projection is well-defined (e.g. for linear parameterization where $f = X\transpose \theta$). We define \blue{$\bftt := f(\bthtt)$, where $\bthtt := \argmin \tilde{q}_t(\theta)$ and $\tilde{q}_t := \ell_{i_t}(\zt) + \frac{\partial \ell_{i_t}(\zt)}{\partial z^{i_t}} \left[f_{i_t}(\theta) - \zt^{i_t} \right] + \frac{1}{2 \petat} \normsq{f(\theta) - \zt}$}. In order to ensure that $\E[\tilde{q}_t(\theta)] = \gz_{t}(\theta)$, we will set $\petat = \etat \, n$. Note that $\tilde{q}_t$ is similar to $\tilde{g}_t$, but there is no randomness in the regularization term. Analogous to $\tilde{z}_{t+1}$, $\bar{z}_{t+1}$ can be interpreted as a result of projected (where the projection is w.r.t $\ell_2$-norm) SGD in the target space (\cref{app:lemma:smdsurrogate2}). Note that $\tilde{q}_t$ is only defined for the analysis of $\SSO/$. 

For the theoretical analysis, it is convenient to define $\epstt := \norm{\ftt - \bftt}_{2}$ as the projection error at iteration $t$. Here, $\ftt = f(\thtt)$ where $\thtt$ is obtained by  (approximately) minimizing $\tilde{\gz}_{t}$. Note that the projection error incorporates the effect of both the random projection (that depends on $i_t$) as well as the inexact minimization of $\tilde{\gz}_t$, and will be bounded in~\cref{sec:proj-error-control}.  In the following lemma (proved in~\cref{app:scpro}), we use the equivalence in~\cref{lemma:smd-surrogate} to derive the following guarantee for two choices of $\etat$: (a) constant and (b) exponential step-size~\citep{li2021second, vaswani2022towards}. 
\begin{restatable}{theorem}{restatescpro}
\label{prop:gd_sc_bnd}
Assuming that (i) $\ell_{i_t}$ is $L$-smooth and convex, (ii) $\ell$ is $\mu$-strongly convex, (iii) $\fopt=\arg\min_{z \in \cZ} \ell(z)$ (iv) and that for all $t$, $\epst \leq \epsilon$, $T$ iterations of $\SSO/$ result in the following bound for $\z_T = f(\theta_{T})$,
\setlength{\abovedisplayskip}{2pt}
\setlength{\belowdisplayskip}{2pt}
\begin{align*}
    \E \norm{z_{T+1} - \fopt} & \leq \bigg ( \left(\prod\nolimits_{i=1}^{T}\rho_i\right) \normsq{z_1 - \fopt}\\ &\quad+ 2\sigma^2 \sum\nolimits_{t=1}^T \prod\nolimits_{i=t+1}^{T}\rho_i \petat^2\bigg)^{1/2}\\  &+2\epsilon \sum\nolimits_{t=1}^T \prod\nolimits_{i=t+1}^{T} \rho_i  \, ,  
\end{align*} 
where $\rho_t = (1-\mu \petat)$, $\sigma^2 := \E\left[\normsq{\grad{\fopt} - \gradt{\fopt}} \right]$,\\
(a) \textbf{Constant step-size}: When $\etat= \frac{1}{2L \, n}$ and for $\rho = 1 - \frac{\mu}{2L}$,
\begin{align*}
\E \norm{z_{T+1} - \fopt} & \le \norm{z_{1} - \fopt} \left(1 - \frac{1}{2\kappa}\right)^{\frac{T}{2}} \\ & + \frac{\sigma}{\sqrt{\mu L}} + \frac{2\epsilon}{1 - \sqrt{1 - \frac{1}{2\kappa}}}\,.
\end{align*}
(b) \textbf{Exponential step-size}: When $\etat = \frac{1}{2L \, n} \alpha^{t}$ for $\alpha = (\frac{\beta}{T})^{\frac{1}{T}}$, 
\begin{align*}
    \E \norm{\z_{T+1} -\fopt} & \leq c_1 \, \exp\left( - \frac{T}{4\kappa} \frac{\alpha}{\ln(\nicefrac{T}{\beta})}\right)\norm{\z_1 -\fopt} \\ & + \frac{4 \kappa c_1 (\ln(\nicefrac{T}{\beta}))}{L e  \alpha \sqrt{T}} \sigma + 2 \epsilon \, c_2 \,.
\end{align*}
where $c_1 = \exp\left( \frac{1}{4\kappa} \, \frac{2\beta}{\ln(\nicefrac{T}{\beta})}\right)$, and $c_2 =\exp \left(\frac{\beta \ln (T) }{2\kappa\ln(\nicefrac{T}{\beta}) } \right)$
\end{restatable}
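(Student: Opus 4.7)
The plan is to leverage the equivalence from \cref{lemma:smd-surrogate} to convert the analysis of $\SSO/$ into the analysis of (inexact) projected SGD in the target space, and then to decouple the three sources of error: the initial distance $\norm{z_1 - \fopt}$, the stochastic noise $\sigma^2$, and the projection error $\eps_t$. Let $\bftt$ denote the exact projected SGD iterate from the auxiliary surrogate $\tilde q_t$ with step size $\petat = \etat n$, and let $\ftt$ be the actual $\SSO/$ iterate. By hypothesis $\norm{\ftt - \bftt} \leq \eps$, so triangle inequality gives $\norm{\ftt - \fopt} \leq \norm{\bftt - \fopt} + \eps$ at each iteration.

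First I would derive a one-step recursion for the exact projected SGD iterate. Starting from the target-space SGD update $\zh = \ft - \petat \nabla \ell_{i_t}(\ft)$ followed by an $\ell_2$-projection onto $\cZ$ (which is non-expansive since $\fopt \in \cZ$), I would expand $\normsq{\bftt - \fopt} \leq \normsq{\ft - \petat \nabla \ell_{i_t}(\ft) - \fopt}$, take conditional expectation over $i_t$, and combine (i) the standard smoothness/convexity identity $\E \normsq{\nabla \ell_{i_t}(\ft) - \nabla \ell_{i_t}(\fopt)} \leq 2L \, \langle \nabla \ell(\ft), \ft - \fopt\rangle$, (ii) the noise decomposition $\E\normsq{\nabla \ell_{i_t}(\ft)} \leq 2 \E\normsq{\nabla \ell_{i_t}(\ft) - \nabla \ell_{i_t}(\fopt)} + 2\sigma^2$, and (iii) $\mu$-strong convexity of $\ell$ to replace $\langle \nabla \ell(\ft), \ft - \fopt\rangle$ by $\mu \normsq{\ft - \fopt}$. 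Choosing $\petat \le \tfrac{1}{2L}$ removes the gradient-inner-product term and yields a contraction of the form $\E \normsq{\bftt - \fopt} \leq \rho_t \normsq{\ft - \fopt} + 2 \petat^2 \sigma^2$ with $\rho_t = 1 - \mu \petat$.

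Next I would unroll the recursion together with the projection error. Using $\norm{\ftt - \fopt} \leq \norm{\bftt - \fopt} + \eps$, Jensen's inequality $\E[\norm{\cdot}] \leq \sqrt{\E\normsq{\cdot}}$, and the sub-additivity $\sqrt{a + b} \leq \sqrt{a} + \sqrt{b}$, I can peel off one level at a time: $\E\norm{\ftt - \fopt} \leq \sqrt{\rho_t \E\normsq{\ft - \fopt} + 2\petat^2 \sigma^2} + \eps$. Iterating this $T$ times produces the claimed structure: a product $\prod_i \rho_i$ multiplying $\normsq{z_1 - \fopt}$, a weighted sum $\sum_t \prod_{i>t} \rho_i \,\petat^2$ multiplying $\sigma^2$ (both under a square root), plus a telescoping sum $\sum_t \prod_{i>t}\rho_i$ multiplying $2\eps$.

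Finally I would specialize the step-size. For part (a), plugging $\petat = \tfrac{1}{2L}$ makes $\rho_t = \rho = 1 - \tfrac{1}{2\kappa}$ constant; the geometric sum $\sum_t \rho^{T-t} \petat^2$ evaluates to at most $\tfrac{1}{4L^2(1-\rho)} = \tfrac{1}{2L\mu}$, and the projection-error geometric sum $\sum_t \rho^{(T-t)/2}$ yields the $(1 - \sqrt{\rho})^{-1}$ factor. For part (b), I would substitute $\petat = \tfrac{1}{2L}\alpha^t$ with $\alpha = (\beta/T)^{1/T}$ and invoke the estimates developed by \citet{li2021second, vaswani2022towards} to bound $\prod_{i=1}^T (1 - \tfrac{\mu}{2L}\alpha^i)$ by $\exp(-\tfrac{T}{4\kappa}\,\tfrac{\alpha}{\ln(T/\beta)})$ up to the prefactor $c_1$, and to bound the weighted sums $\sum_t \prod_{i>t}(\cdot) \cdot \alpha^{2t}$ and $\sum_t \prod_{i>t}(\cdot) \cdot \alpha^{t/2}$ by the corresponding $\sigma$- and $\eps$-coefficients.

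The main obstacle is the last step: the exponential step-size requires simultaneously controlling a product of contraction factors $(1 - \mu\petat)$ that itself varies geometrically with $t$, together with two different weighted sums (one for noise, one for projection error), and extracting the clean closed-form constants $c_1$ and $c_2$. I expect to borrow these product-estimation lemmas verbatim from the exponential-step-size SGD literature rather than re-derive them.
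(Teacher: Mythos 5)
Your overall architecture matches the paper's: reduce $\SSO/$ to inexact projected SGD in the target space via \cref{lemma:smd-surrogate}, establish the one-step contraction $\E\normsq{\bftt - \fopt} \leq (1-\mu\petat)\,\E\normsq{\ft-\fopt} + 2\petat^2\sigma^2$ for the exact iterate (your non-expansiveness/smoothness/strong-convexity manipulations are essentially the paper's \cref{lem:exc_itr_bnd2}), and then specialize the step-size using geometric-series and exponential-step-size estimates (the paper's \cref{lemma:A-bound}, \cref{lemma:B-bound}, \cref{lemma:C-bound}). The gap is in the unrolling. The recursion you propose, $\E\norm{\ftt-\fopt} \leq \big(\rho_t\,\E\normsq{\ft-\fopt} + 2\petat^2\sigma^2\big)^{1/2} + \eps$, is a correct inequality, but it does not telescope: its left side controls the first moment $\E\norm{\ftt-\fopt}$, while the next application of the same inequality needs the second moment $\E\normsq{\ftt-\fopt}$, and Jensen's inequality $(\E\norm{X})^2 \leq \E\normsq{X}$ points in the wrong direction to substitute one into the other. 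So ``peeling off one level at a time'' stalls after the first level.

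Two repairs are available. The paper keeps the recursion entirely in second moments: its \cref{lem:itr_bnd_fth} is not the plain triangle inequality but the expansion $\normsq{\ftt-\fopt} \leq \normsq{\bftt-\fopt} + 2\epstt\norm{\ftt-\fopt}$, in which the projection error multiplies the \emph{unknown} distance at time $t+1$; unrolling then yields the implicit inequality $u_{T+1}^2 \leq S_T + 2\epsilon\sum_{t}\big(\prod_{i>t}\rho_i\big)u_{t+1}$ with $u_t := \E\norm{\ft-\fopt}$ and $S_T$ collecting the bias and noise terms, which is resolved by the recursion lemma of \citet{schmidt2011convergence} (\cref{lem:slb_upperbound}) after a change of variables. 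Alternatively, your triangle-inequality idea can be salvaged by tracking $w_t := (\E\normsq{\ft-\fopt})^{1/2}$ rather than $\E\norm{\ft-\fopt}$: squaring $\norm{\ftt-\fopt}\leq\norm{\bftt-\fopt}+\eps$ before taking expectations gives $w_{t+1} \leq (\rho_t w_t^2 + 2\petat^2\sigma^2)^{1/2} + \eps \leq \sqrt{\rho_t}\,w_t + \sqrt{2}\petat\sigma + \eps$, a genuinely linear recursion that closes and yields a bound of the same form (with $\prod_{i>t}\sqrt{\rho_i}$ weights on the noise and error sums, and $\E\norm{z_{T+1}-\fopt}\leq w_{T+1}$ at the end). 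Either way, an ingredient beyond what you wrote is required; the rest of your sketch, including both step-size specializations, is on track.
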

In the above result, $\sigma^2$ is the natural analog of the noise in the unconstrained case. In the unconstrained case, $\nabla\ell(z^*) = 0$ and we recover the standard notion of noise used in SGD analyses~\citep{bottou2018optimization,gower2019sgd}. Unlike parametric SGD, both $\sigma^2$ and $\kappa$ do not depend on the specific model parameterization, and only depend on the properties of $\ell$ in the target space. For both the constant and exponential step-size, the above lemma generalizes the SGD proofs in~\citet{bottou2018optimization} and~\citet{li2021second,vaswani2022towards} to projected SGD and can handle inexact projection errors similar to~\citet{schmidt2011convergence}. For constant step-size, $\SSO/$ results in convergence to a neighbourhood of the minimizer where the neighbourhood depends on $\sigma^2$ and the projection errors $\epstsq$. For the exponential step-sizes, $\SSO/$ results in a noise-adaptive~\citep{vaswani2022towards} $O\big(\exp\left(\frac{-T}{\kappa}\right) + \frac{\sigma^2}{T}\big)$ convergence to a neighbourhood of the solution which only depends on the projection errors. In the next section, we bound the projection errors.  

\subsubsection{Controlling the Projection Error}
\label{sec:proj-error-control}
In order to complete the theoretical analysis of $\SSO/$, we need to control the projection error $\epstt = \norm{\bftt - \ftt}_{2}$ that can be decomposed into two components as follows: $\epstt \leq \norm{\bftt - \tilde{\z}_{t+1}}_{2} + \norm{\tilde{\z}_{t+1} - \z_{t+1}}_{2}$, where $\tilde{z}_{t+1} = f(\tthtt)$ and $\tthtt$ is the minimizer of $\tilde{\gz}_{t}$. The first part of this decomposition arises because of using a stochastic projection that depends on $i_t$ (in the definition of $\tilde{\z}_{t+1}$) versus a deterministic projection (in the definition of $\bftt$). The second part of the decomposition arises because of the inexact minimization of the stochastic surrogate, and can be controlled by minimizing $\tilde{\gz}_t$ to the desired tolerance. In order to bound $\epstt$, we will make the additional assumption that $f$ is $L_f$-Lipschitz in $\theta$.\footnote{For example, when using a linear parameterization, $L_f = \norm{X}$ This property is also satisfied for certain neural networks.} The following proposition (proved in~\cref{app:proj-err-cnt}) bounds $\epstt$.
\begin{restatable}{proposition}{restateprojerr}
\label{prop:proj-err} 
Assuming that (i) $f$ is $L_{f} $-Lipschitz continuous, (ii) $\tilde{\gz}_{t}$ is $\mu_g$ strongly-convex and $L_g$ smooth with $\kappa_g := L_g/\mu_g$\footnote{We consider strongly-convex functions for simplicity, and it should be possible to extend these results to when $\tilde{\gz}$ is non-convex but satisfies the PL inequality, or is weakly convex.}, (iii) $\tilde{q}_{t}$ is $\mu_q$ strongly-convex (iv) $\zeta_t^2 := \frac{8}{\min\{\mu_g, \mu_q\}} \, \left(\left[\min \left\{\E_{i_t} \left[\tilde{g}_t \right] \right\} - \E_{i_t} \left[\min \left\{\tilde{g}_t \right\} \right] \right]\right)$ $+ \left(\frac{8}{\min\{\mu_g, \mu_q\}} \, \left[\min \left\{\E_{i_t} \left[\tilde{q}_t \right] \right\} - \E_{i_t} \left[\min \left\{\tilde{q}_t \right\} \right] \right] \right)$, (v) $\sigma^2_{\z} := \min_{\z \in \cZ} \left\{\E_{i_t} \left[\ell_{i_t}(\z) \right] \right\} - \E_{i_t} \left[\min_{\z \in \cZ} \left\{\ell_{i_t}(\z) \right\} \right]$, if $\tilde{\gz}_t$ is minimized using $m_t$ inner-loops of GD with the appropriate step-size, then, $\E[\epsttsq]  \leq L_f^2 \, \zeta_t^2 + \frac{4 L_f^2}{\mu_g} \left[\exp\left(\nicefrac{-m_t}{\kappa_g} \right) \, \left[\E[\ell(\z_t) - \ell(\z^*)] + \sigma^2_{\z} \right]  \right]$. 
\end{restatable}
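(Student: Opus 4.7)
The plan has three parts: decompose the projection error via triangle inequality, bound the stochastic projection piece by a ``variance-of-the-argmin''-style argument that produces $\zeta_t^2$, and bound the inexact minimization piece by linear convergence of GD on $\tilde{g}_t$.

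\textbf{Decomposition.} I would start with the triangle inequality $\epstt \leq \|\bftt - \tilde{z}_{t+1}\|_2 + \|\tilde{z}_{t+1} - z_{t+1}\|_2$ noted in the excerpt, square it and use $(a+b)^2 \leq 2a^2 + 2b^2$, then apply $L_f$-Lipschitz continuity of $f$ to each piece to obtain
\[
\epsttsq \leq 2L_f^2 \|\bthtt - \tthtt\|^2 + 2L_f^2 \|\tthtt - \theta_{t+1}\|^2.
\]
The two resulting terms account, respectively, for the stochastic projection error (both $\bthtt$ and $\tthtt$ are random through $i_t$, but with different regularizers) and for the inexact minimization of $\tilde{g}_t$ by $m_t$ GD steps.

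\textbf{Stochastic projection term.} To bound $\E\|\bthtt - \tthtt\|^2$, I would introduce the deterministic minimizer $\theta^\star := \arg\min g_t$ and triangulate via $\|\bthtt - \tthtt\|^2 \leq 2\|\bthtt - \theta^\star\|^2 + 2\|\theta^\star - \tthtt\|^2$. Since $\tilde{q}_t$ is $\mu_q$-strongly convex with minimizer $\bthtt$ and $\tilde{g}_t$ is $\mu_g$-strongly convex with minimizer $\tthtt$, strong convexity yields
\[
\|\bthtt - \tthtt\|^2 \leq \tfrac{4}{\mu_q}[\tilde{q}_t(\theta^\star) - \tilde{q}_t(\bthtt)] + \tfrac{4}{\mu_g}[\tilde{g}_t(\theta^\star) - \tilde{g}_t(\tthtt)].
\]
Taking expectation over $i_t$ and using $\E[\tilde{q}_t(\theta)] = \E[\tilde{g}_t(\theta)] = g_t(\theta)$, so that $\E[\tilde{q}_t(\theta^\star)] = \min \E[\tilde{q}_t]$ (and similarly for $\tilde{g}_t$), converts each bracket into precisely the $\min\E - \E\min$ quantities in the definition of $\zeta_t^2$. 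Multiplying by $2L_f^2$ recovers the $L_f^2 \zeta_t^2$ term. The subtlety here is that because both $\bthtt$ and $\tthtt$ are random through $i_t$, the naive ``variance of the argmin'' inequality does not apply directly; routing through the deterministic $\theta^\star$ before taking expectations is what allows the $\min\E - \E\min$ structure to appear, and this is the main obstacle in the proof.

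\textbf{Inexact minimization term.} Since $\tilde{g}_t$ is $\mu_g$-strongly convex and $L_g$-smooth, running $m_t$ steps of GD from $\omega_0 = \theta_t$ with step-size $1/L_g$ gives the standard linear-convergence bound $\tilde{g}_t(\theta_{t+1}) - \min \tilde{g}_t \leq \exp(-m_t/\kappa_g)\,[\tilde{g}_t(\theta_t) - \min \tilde{g}_t]$, and strong convexity then gives $\|\tthtt - \theta_{t+1}\|^2 \leq \tfrac{2}{\mu_g}[\tilde{g}_t(\theta_{t+1}) - \min \tilde{g}_t]$. The initial gap is controlled by observing that $\tilde{g}_t(\theta_t) = \ell_{i_t}(\z_t^{i_t})$ (since the linear and quadratic terms vanish at $\theta = \theta_t$), and that $L$-smoothness of $\ell_{i_t}$ together with $\eta_t \leq 1/L$ implies $\min_\theta \tilde{g}_t(\theta) \geq \min_{z \in \cZ}\ell_{i_t}(z)$ because $\tilde{g}_t$ majorizes $\ell_{i_t} \circ f_{i_t}$. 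Taking expectation over $i_t$ produces $\E[\tilde{g}_t(\theta_t) - \min \tilde{g}_t] \leq \E[\ell(\z_t) - \ell(\fopt)] + \sigma^2_{\z}$, and multiplying by $2L_f^2 \cdot \tfrac{2}{\mu_g}\exp(-m_t/\kappa_g)$ yields the $\frac{4L_f^2}{\mu_g}\exp(-m_t/\kappa_g)$ term of the claim.
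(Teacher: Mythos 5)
Your proposal is correct and follows essentially the same route as the paper's proof: the same two-term decomposition, the same detour through the deterministic minimizer $\arg\min g_t$ to produce the $\min\E - \E\min$ quantities in $\zeta_t^2$, and the same linear-convergence-plus-strong-convexity bound on the inexact GD term, with matching constants. The only cosmetic differences are that the paper applies $L_f$-Lipschitzness once at the end rather than to each piece, and states GD's linear convergence in iterates (then converts the initial distance to a function gap) rather than in function values.
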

Note that the definition of both $\zeta^2_t$ and $\sigma^2_{z}$ is similar to that used in the SGD analysis in~\citet{vaswani2022towards}, and can be interpreted as variance terms. In particular, using strong-convexity, $\min \left\{\E_{i_t} \left[\tilde{g}_t \right] \right\} - \E_{i_t} \left[\min \left\{\tilde{g}_t \right\} \right] \leq \frac{2}{\mu_g} \, \E[\normsq{\nabla \tilde{g}_t(\pthtt) - \E [\nabla \tilde{g}_t(\pthtt)]}]$ which is the variance in $\nabla \tilde{g}_t(\pthtt)$. Similarly, we can interpret the other term in the definition of $\zeta_t^2$. The first term $L_f^2 \, \zeta_t^2$ because of the stochastic versus deterministic projection, whereas the second term $\nicefrac{4 L_f^2}{\mu_g} \, \exp\left(\nicefrac{-m_t}{\kappa_g} \right) \, \left[\E[\ell(\z_t) - \ell(\z^*)] + \sigma^2_{\z} \right]$ arises because of the inexact minimization of the stochastic surrogate. Setting $m_t = O(\log(1/\epsilon))$ can reduce the second term to $O(\epsilon)$. When sampling a mini-batch of examples in each iteration of $\SSO/$, both $\zeta_t^2$ and $\sigma^2_{z}$ will decrease as the batch-size increases (because of the standard sampling-with-replacement bounds~\citep{lohr2019sampling}) becoming zero for the full-batch. Another regime of interest is when using over-parameterized models that can \emph{interpolate} the data~\citep{schmidt2013fast, ma2018power, vaswani2019fast}. The interpolation condition implies that the stochastic gradients become zero at the optimal solution, and is satisfied for models such as non-parametric regression~\cite{liang2018just,belkin2019does} and over-parametrized deep neural networks~\cite{zhang2016understanding}. Under this condition, $\zeta_t^2 = \sigma^2_{z} = 0$~\citep{vaswani2020adaptive,loizou2021stochastic}. From an algorithmic perspective, the above result implies that in cases where the noise dominates, using large $m$ for $\SSO/$ might not result in substantial improvements. However, when using a large batch-size or over-parameterized models, using large $m$ can result in the superior performance.

Given the above result, a natural question is whether the dependence on $\zeta_t^2$ is necessary in the general (non-interpolation) setting. To demonstrate this, we construct an example (details in~\cref{app:counter-example}) and show that even for a sum of two one-dimensional quadratics, when minimizing the surrogate exactly i.e. $m = \infty$ and for any sequence of convergent step-sizes, $\SSO/$ will converge to a neighborhood of the solution. 
\begin{restatable}{proposition}{restatecounter}
\label{prop:sso-lb}   
Consider minimizing the sum $h(\theta) := \frac{h_1(\theta) + h_2(\theta)}{2}$ of two one-dimensional quadratics, $h_1(\theta) := \frac{1}{2}(\theta - 1)^2$ and $h_2(\theta)= \frac{1}{2}\left(2\theta + \nicefrac{1}{2} \right)^2$, using $\SSO/$ with $m_t = \infty$ and $\etat = c \, \alpha_{t}$ for any sequence of $\alpha_t$ and any constant $c \in (0,1]$. $\SSO/$ results in convergence to a neighbourhood of the solution, specifically, if $\theta^*$ is the minimizer of $h$ and $\theta_1 > 0$, then, $\E (\theta_T - \theta^*) \geq \min\left(\theta_1, \frac{3}{8}\right)$. 
\end{restatable}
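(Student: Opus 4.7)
The plan is to reduce the problem to an explicit one-dimensional recursion on $\mathbb{E}[\theta_t]$ whose expected fixed point differs from the true minimizer. First I cast the two quadratics in the composition form required by the paper: take $\mathcal{Z} = \mathbb{R}^2$ with $f(\theta) = (\theta,\, 2\theta)$ and $\ell_1(z^1) = \tfrac{1}{2}(z^1-1)^2$, $\ell_2(z^2) = \tfrac{1}{2}(z^2 + \tfrac{1}{2})^2$, so that $h_i = \ell_i \circ f_i$ reproduces the two summands. A direct computation of $h'(\theta) = \tfrac{5}{2}\theta$ identifies the true minimizer as $\theta^* = 0$. This is the natural analogue of the linear regression example worked out earlier in the paper, viewed as two data points $(X_1,y_1) = (1,1)$ and $(X_2,y_2) = (2,-\tfrac{1}{2})$.

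Next, using the $m_t = \infty$ case of \cref{lemma:smd-surrogate}, I derive the exact SSO update in closed form: one step of target-space SGD on coordinate $i_t$ followed by projection onto $\mathcal{Z}$ (which, for this linear $f$, amounts to dividing the updated coordinate by $X_{i_t}$). Carrying this out yields the two branches
\begin{align*}
i_t = 1 &:\quad \theta_{t+1} = (1-\eta_t)\,\theta_t + \eta_t, \\
i_t = 2 &:\quad \theta_{t+1} = (1-\eta_t)\,\theta_t - \tfrac{\eta_t}{4}.
\end{align*}
Averaging over $i_t$ uniform on $\{1,2\}$ and writing $\bar\theta_t := \mathbb{E}[\theta_t]$ gives the deterministic scalar recursion $\bar\theta_{t+1} = (1-\eta_t)\,\bar\theta_t + \tfrac{3\eta_t}{8}$, whose unique fixed point is $\tfrac{3}{8} \neq \theta^*$. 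The asymmetry between the two branches (drift $+\eta_t$ versus $-\eta_t/4$), arising from the unequal $X_i$ together with the per-coordinate projection, is precisely what produces this bias.

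Setting $u_t := \bar\theta_t - \tfrac{3}{8}$ collapses the recursion to $u_{t+1} = (1-\eta_t)\,u_t$. Since $c \in (0,1]$ and we may take the step-sizes in the natural range $\alpha_t \in (0,1]$, every factor $(1-\eta_t) \in [0,1)$, so $u_t$ preserves the sign of $u_1$ with non-increasing magnitude. A short case split finishes: if $\theta_1 \ge \tfrac{3}{8}$ then $u_T \ge 0$ and so $\bar\theta_T \ge \tfrac{3}{8}$; if $0 < \theta_1 < \tfrac{3}{8}$ then $u_T \in [u_1, 0]$ and so $\bar\theta_T \ge \theta_1$. Either way $\mathbb{E}(\theta_T - \theta^*) \ge \min(\theta_1, \tfrac{3}{8})$.

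The main obstacle is the second step: carefully deriving the SSO update via \cref{lemma:smd-surrogate} and tracking the $X_2 = 2$ factor through the target-space projection so that the two branches come out with the asymmetric $+\eta_t$ and $-\eta_t/4$ drifts. Once this asymmetry is exposed, the remainder is a one-line linear recursion. The argument is robust in exactly the way the statement requires: it never invokes $\sum \eta_t = \infty$ or $\sum \eta_t^2 < \infty$, so the obstruction is structural — a mismatch between the fixed point of the expected SSO iteration and $\theta^*$ — rather than an artefact of any particular step-size schedule.
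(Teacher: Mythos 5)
Your proposal is correct and follows essentially the same route as the paper: derive the two exact SSO branches $\theta_{t+1}=(1-\eta_t)\theta_t+\eta_t$ and $\theta_{t+1}=(1-\eta_t)\theta_t-\tfrac{\eta_t}{4}$, average to get $\E\theta_{t+1}=(1-c\alpha_t)\E\theta_t+\tfrac{3}{8}c\alpha_t$, and observe that $\E\theta_T$ is a convex combination of $\theta_1$ and the biased fixed point $\tfrac{3}{8}$. The only cosmetic difference is that you close the argument via the substitution $u_t=\E\theta_t-\tfrac{3}{8}$, whereas the paper unrolls the recursion and invokes its sum-product identity (Lemma~\ref{lem:sum-prod-eq}); these are equivalent, and both implicitly require $c\alpha_t\le 1$ as you note.
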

In order to show the above result, we use the fact that for quadratics, $\SSO/$ (with $m = \infty$) is equivalent to the sub-sampled Newton method and in the one-dimensional case, we can recover the example in~\citet{vaswani2022towards}. Since the above example holds for $m = \infty$, we conclude that this bias is not because of the inexact surrogate minimization. Moreover, since the example holds for all step-sizes including \emph{any} decreasing step-size, we can conclude that the optimization error is not a side-effect of the stochasticity. In order to avoid such a bias term, sub-sampled Newton methods use different batches for computing the sub-sampled gradient and Hessian, and either use an increasing batch-size~\citep{bollapragada2019exact} or consider using over-parameterized models~\citep{meng2020fast}.

\citet{agarwal2020stochastic} also prove theoretical guarantees when doing multiple SGD steps on the same batch. In contrast to our work, their motivation is to analyze the performance of data-echoing~\citep{choi2019faster}. From a technical perspective, they consider (i) updates in the parameteric space, and (ii) their inner-loop step-size decreases as $m$ increases. Finally, we note our framework and subsequent theoretical guarantees would also apply to this setting.

\subsection{Benefits of Target Optimization }
\label{sec:theory-benefits}
In order to gain intuition about the possible benefits of target optimization, let us consider the simple case where each $h_i$ is $\mu_{\theta}$-strongly convex, $L_{\theta}$-smooth and $\kappa_{\theta} = \nicefrac{L_{\theta}}{\mu_{\theta}}$. For convenience, we define $\zeta^2 := \max_{t \in [T]} \zeta^2_t$. Below, we show that under certain regimes, for ill-conditioned least squares problems (where $\kappa_\theta >> 1$), target optimization has a provable advantage over parametric SGD.

\textit{Example}: For the least squares setting, $\kappa_{\theta}$ is the condition number of the $X\transpose X$ matrix. In order to achieve an $\epsilon$ sub-optimality, assuming complete knowledge of $\sigma^2_{\theta} := \E \normsq{\nabla h_i(\theta^*)}$ and all problem-dependent constants, parametric SGD requires \blue{$T_{\text{param}} = O \big(\max \big\{\kappa_{\theta} \log \big(\nicefrac{1}{\epsilon}\big),\nicefrac{\sigma_{\theta}^2}{\mu^2 \epsilon} \big\} \big)$} 
iterations~\citep[Theorem 3.1]{gower2019sgd} where the first term is the bias term and the second term is the effect of the noise. In order to achieve an $\epsilon$ sub-optimality for $\SSO/$, we require that $\zeta^2 \leq \epsilon$ (for example, by using a large enough batch-size) and $O(\kappa_\theta \, \log(1/\epsilon))$ inner iterations. Similar to the parametric case, the number of outer iterations for $\SSO/$ is $T_{\text{target}} = O \big(\max \big\{\log \big(\nicefrac{1}{\epsilon}\big), \nicefrac{\sigma^2}{\epsilon} \big\} \big)$, since the condition number w.r.t the targets is equal to $1$.

In order to model the effect of an expensive gradient oracle, let us denote the cost of computing the gradient of $\ell$ as $\tau$ and the cost of computing the gradient of the surrogate as equal to $1$. When the noise in the gradient is small and the bias dominates the number of iterations for both parametric and target optimization, the cost of parametric SGD is dominated by $\tau T_{\text{param}} = O(\kappa_{\theta} \tau)$, whereas the cost of target optimization is given by $T_{\text{target}} \times [\tau + \kappa_{\theta} \log(1/\epsilon)] = O(\tau + \kappa_{\theta})$. Alternatively, when the noise dominates, we need to compare the $O\big(\tau \nicefrac{\sigma^2_{\theta}}{\mu^2 \epsilon}\big)$ cost for the parametric case against the $O\big(\frac{\sigma^2 }{\epsilon} \, [\tau + \kappa_{\theta} \log(1/\epsilon)]\big)$ cost for target optimization. Using the definition of the noise, $\sigma^2_{\theta} = \E_i \normsq{X_i\transpose (X_i \theta^* - y_i)} \leq   L \sigma^2$. By replacing $\sigma^2_\theta$ by $L \sigma^2$, we again see that the complexity of parametric SGD depends on $O(\kappa_{\theta} \tau)$, whereas that of $\SSO/$ depends on $O(\kappa_{\theta} + \tau)$. 

A similar property can also be shown for the logistic regression. Similarly, we could use other stochastic optimization algorithms to construct surrogates for target optimization. See \cref{app:algorithm} and \cref{app:experiments} for additional discussion.

%  Hence, for ill-conditioned least squares problems with expensive to compute gradients, target optimization has a provable advantage over parametric SGD {\color{red} in terms of iteration complexity under our computational model}.
% Another advantage of target optimization is that it can allow efficient implementations of standard optimization algorithms (See~\cref{app:algorithm} for the surrogate for the Newton method and~\cref{app:experiments} for its empirical evaluation). 
% $\sigma^2_{\theta} = \E_i \normsq{X_i\transpose (X_i \theta^* - y_i)} \leq \frac{1}{n} \sum_{i} \normsq{X_i} \, \normsq{X_i \theta^* - y_i} \leq L \frac{1}{n} \sum_{i} \normsq{X_i \theta^* - y_i} = L \sigma^2$.

\begin{figure*}[!ht]
\centering 
    \includegraphics[width=\textwidth]{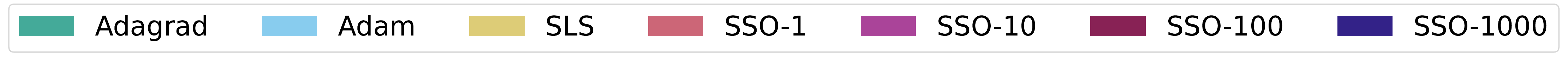}
    \begin{subfigure}{.495 \linewidth}
    \centering
        \includegraphics[width=\textwidth,]{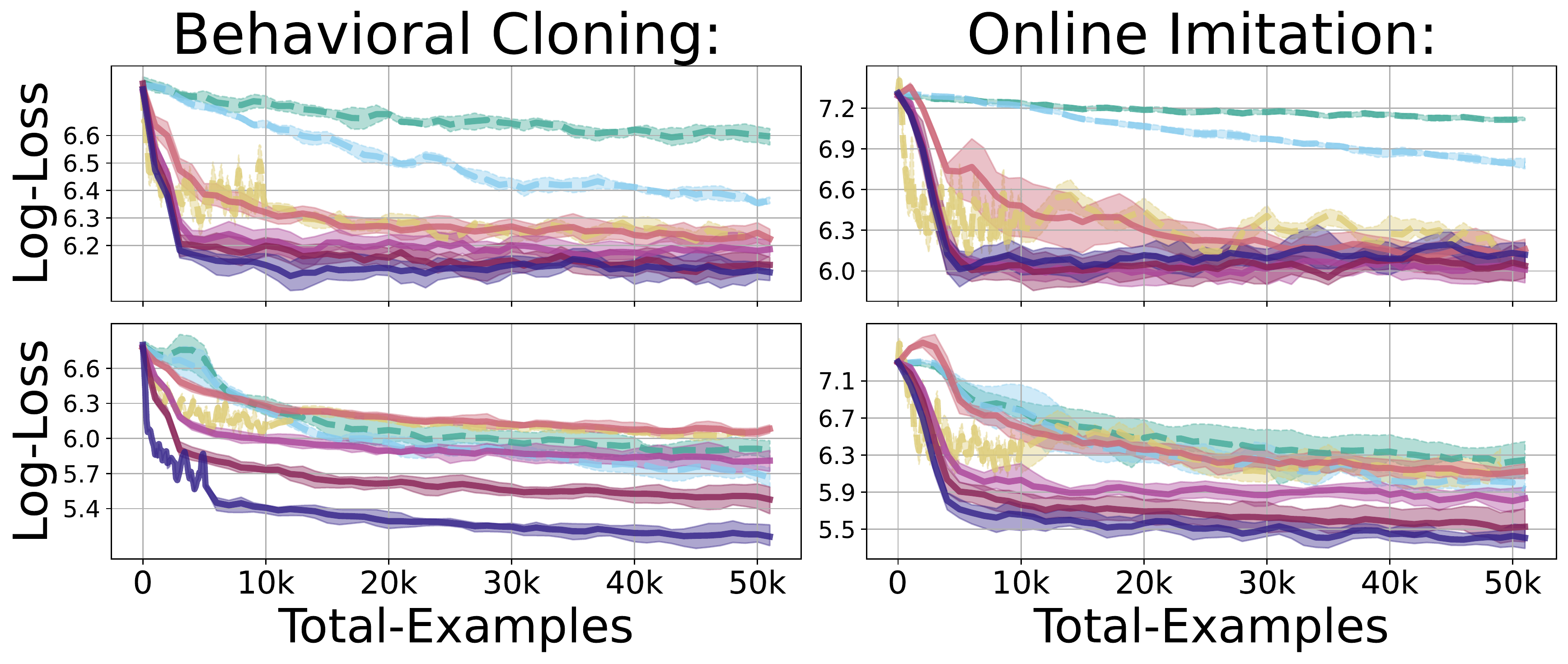}
        \caption{Hopper-v2 Environment}
    \end{subfigure}
    \begin{subfigure}{.495 \linewidth}
    \centering
        \includegraphics[width=\textwidth]{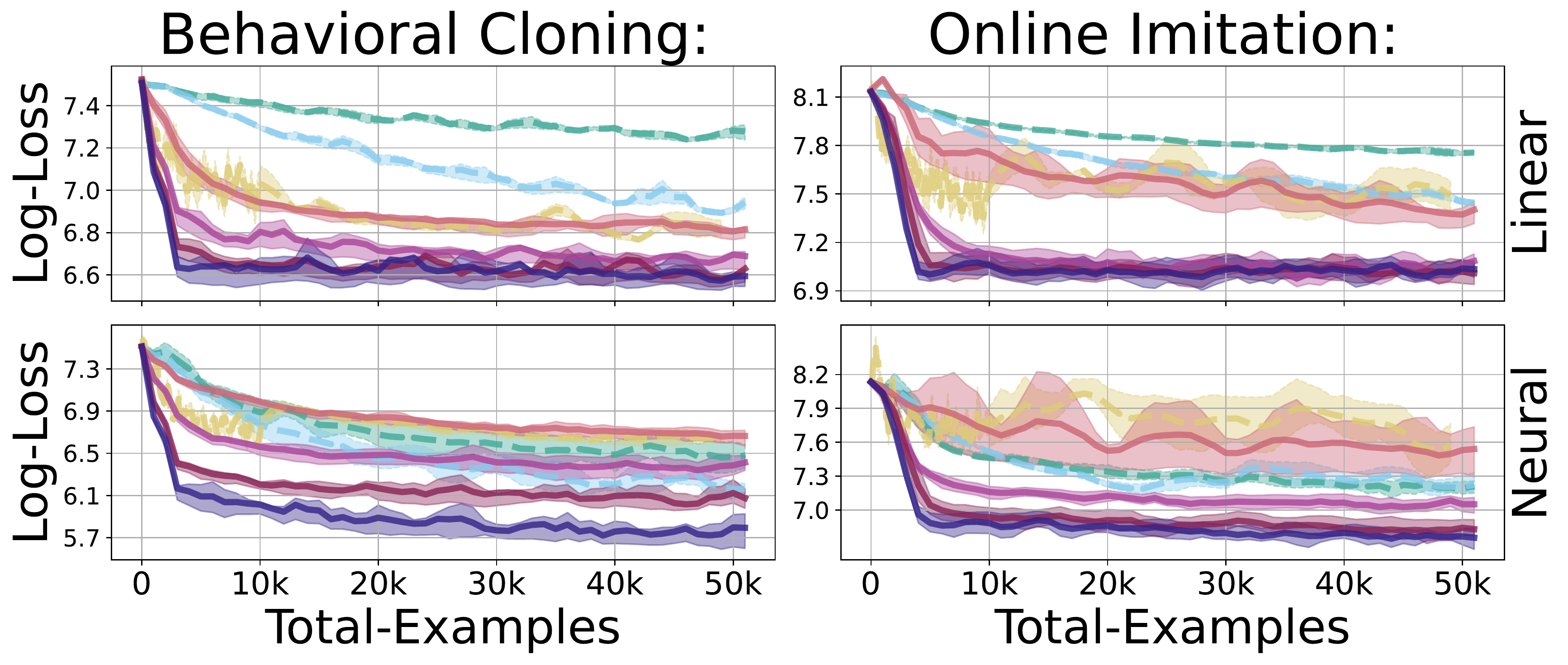}
        \caption{Walker2d-v2 Environment}
     \end{subfigure} 
    \caption{Comparison of log policy loss (mean-squared error between the expert labels and the mean action produced by the policy model) incurred by \texttt{SGD}, \texttt{SLS}, \texttt{Adam}, \texttt{Adagrad}, and \texttt{SSO} as a function of the total interactions (equal to $t$ in~\cref{alg:generic}). \texttt{SSO-m} in the legend indicates that the surrogate has been minimized for $m$ GD steps. The bottom row shows experiments where the policy is parameterized by a neural network, while the top row displays an example where the policy is parameterized by a linear model. Across all environments, behavioral policies, and model types, \texttt{SSO} outperforms all other online-optimization algorithms. Additionally, as m increases, so does the performance of \texttt{SSO}. }
    \label{fig:oil_fig}
\end{figure*}
\begin{figure*}[!ht]
    \includegraphics[width=\textwidth]{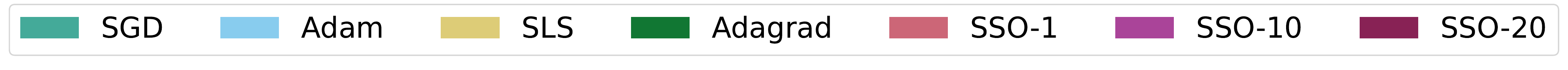} 
    \includegraphics[width=\textwidth]{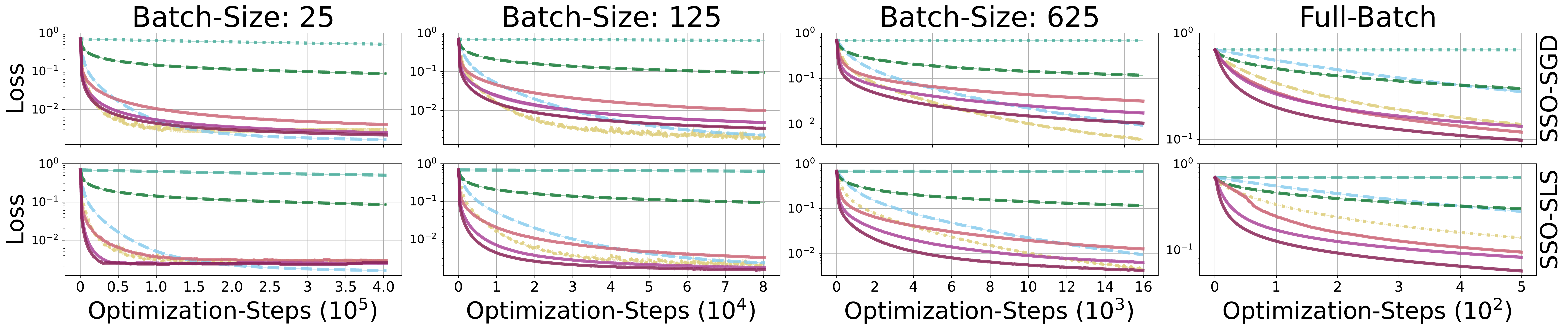}  
    \caption{Comparison of \texttt{SGD} and its SSO variant (top row), \texttt{SLS} and it SSO variant (bottom row) over the rcv1 dataset~\cite{chang2011libsvm} using a logistic loss. \texttt{Adam} and \texttt{Adagrad} are included as baselines. All plots are in log space, where the x-axis defines optimization steps (equal to $t$ in~\cref{alg:generic}). We note that \texttt{SGD} with its theoretical step-size is outperformed by more sophisticated algorithms like \texttt{SLS} or \texttt{Adam}. In contrast, \texttt{SSO} with the theoretical step-size is competitive with both \texttt{SLS} and \texttt{Adam} with default hyper-parameters. Notably, the SSO variant of both SLS and SGD outperforms its parametric counterpart across both m and batch-size.}
    \label{fig:sso_variants}
\end{figure*}
\begin{figure}[!ht]
    \centering 
    \includegraphics[width=0.49\textwidth]{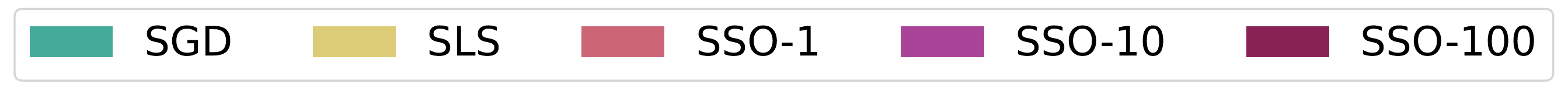}
    \includegraphics[width=0.49\textwidth]{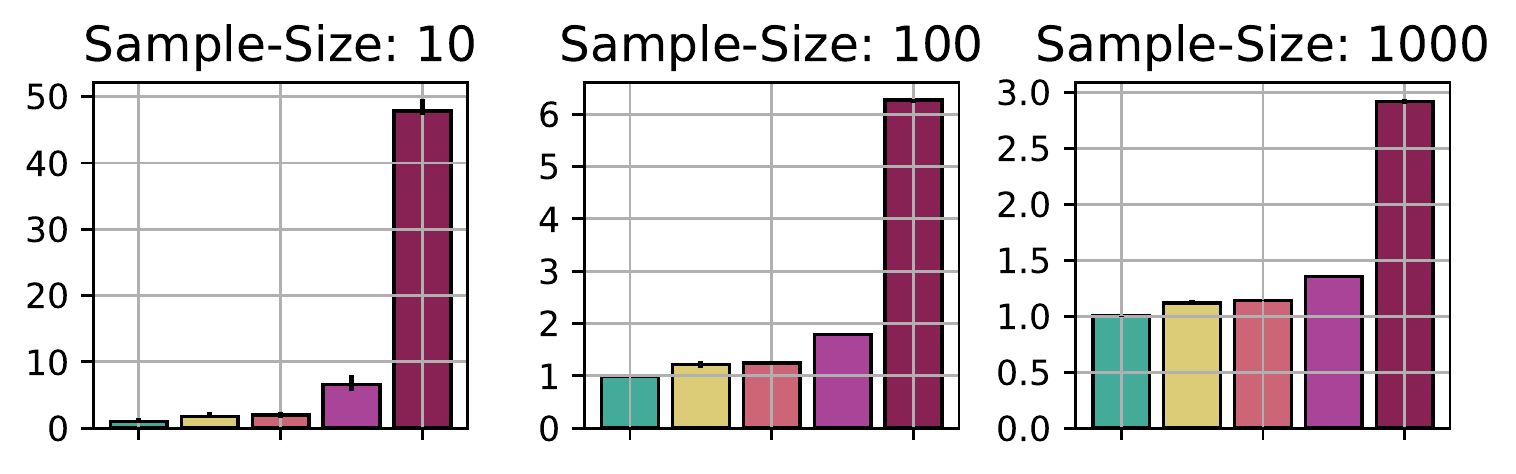}
    \caption{Comparison of run-times normalized with respect to the cost of a single SGD update) between SSO and relevant baselines. These plots illustrate that when data collection is expensive, SSO will be as fast as SGD, even for relatively large m. For \cref{fig:oil_fig}, $\tau\approx1000$, and the ratio of the time required to compute $\ell$ vs $f$ is approximately $0.001$.} 
    \label{fig:rcv1_runtime}
\end{figure}

\section{Experimental Evaluation}
\label{sec:experiments}

We evaluate the target optimization framework for online imitation learning and supervised learning\footnote{The code is available at 
% \url{https://github.com/WilderLavington/Target-Based-Surrogates-For-Stochastic-Optimization}.
\href{https://github.com/WilderLavington/Target-Based-Surrogates-For-Stochastic-Optimization}{\tt http://github.com/\newline WilderLavington/Target-Based-Surrogates-For-\newline Stochastic-Optimization}.
}. In the subsequent experiments, we use either the theoretically chosen step-size when available, or the default step-size provided by~\citet{paszke2017automatic}. We do not include a decay schedule for any experiments in the main text, but consider both the standard $\nicefrac{1}{\sqrt{t}}$ schedule~\citep{bubeck2015convex}, as well as the exponential step-size-schedule~\citep{vaswani2022towards,orabona2019modern} in~\cref{app:experiments}. For \texttt{SSO}, since optimization of the surrogate is a deterministic problem, we use the standard back-tracking Armijo line-search~\citep{armijo1966minimization} with the same hyper-parameters across all experiments. For each experiment, we plot the average loss against the number of calls to the (stochastic) gradient oracle. The mean and the relevant quantiles are reported using three random seeds. 

\textbf{Online Imitation Learning:}  
We consider a setting in which the losses are generated through interaction with a simulator. In this setting, a behavioral policy gathers examples by observing a state of the simulated environment and taking an action at that state. For each state gathered through the interaction, an expert  policy provides the action that it would have taken. The goal in imitation learning is to produce a policy which imitates the expert. The loss measures the discrepancy between the learned policy $\pi$ and the expert policy. In this case, $z = \pi$ where $\pi$ is a distribution over actions given states, $\ell_{i_t}(\z) = \E_{s_t} \left[\mathbf{KL}(\pi(\cdot|s_t)||\pi_{\text{expert}}(\cdot|s_t)) \right]$ where the expectation is over the states visited by the behavioral policy and $f(\theta)$ is the policy parameterization. Since computing $\ell_{i_t}$ requires the behavioural policy to interact with the environment, it is  expensive. Furthermore, the KL divergence is $1$-strongly convex in the $\ell_1$-norm, hence OIL satisfies all our assumptions. When the behavioral policy is the expert itself, we refer to the problem as \emph{behavioral cloning}. When the learned policy is used to interact with the environment~\citep{pmlr-v164-florence22a}, we refer to the problem as \emph{online imitation learning} (OIL)~\citep{lavington2022improved, ross2011reduction}. 

In \cref{fig:oil_fig}, we consider continuous control environments from the Mujoco benchmark suite~\citep{todorov2012mujoco}. The policy corresponds to a standard normal distribution whose mean is parameterized by either a linear function or a neural network. For gathering states, we sample from the stochastic (multivariate normal) policy in order to take actions. At each round of environment interaction 1000 states are gathered, and used to update the policy. The expert policy, defined by a normal distribution and parameterized by a two-layer MLP is trained using the Soft-Actor-Critic Algorithm~\citep{haarnoja2018soft}.~\cref{fig:oil_fig} shows that \texttt{SSO} with the theoretical step-size drastically outperforms standard optimization algorithms in terms the log-loss as a function of environment interactions (calls to the gradient oracle). Further, we see that for both the linear and neural network parameterization, the performance of the learned policy consistently improves as $m$ increases.

\textit{Runtime Comparison:} In~\cref{fig:rcv1_runtime}, we demonstrate the relative run-time between algorithms. Each column represents the average run-time required to take a single optimization step (sample states and update the model parameters) normalized by the time for SGD. We vary the number of states gathered (referred to as sample-size) per step and consider sample-sizes of $10$, $100$ and $1000$. The comparison is performed on the \texttt{Hopper-v2} environment using a two layer MLP. We observe that for small sample-sizes, the time it takes to gather states does not dominate the time it takes to update the model (for example, in column 1, \texttt{SSO-100} takes almost 50 times longer than SGD). On the other hand, for large sample-sizes, the multiple model updates made by $\SSO/$ are no longer a dominating factor (for example, see the right-most column where \texttt{SSO-100} only takes about three times as long as SGD but results in much better empirical performance). This experiment shows that in cases where data-access is the major bottleneck in computing the stochastic gradients, target optimization can be beneficial, matching the theoretical intuition developed in~\cref{sec:theory-benefits}. For applications with more expensive simulators such as those for autonomous vehicle~\cite{Dosovitskiy17}, $\SSO/$ can result in further improvements. 

\textbf{Supervised Learning}: In order to explore using other optimization methods in the target optimization framework, we consider a simple supervised learning setup. In particular, we use the  the \texttt{rcv1} dataset from libsvm~\citep{chang2011libsvm} across four different batch sizes under a logistic-loss. We include additional experiments over other data-sets, and optimization algorithms in~\cref{app:experiments}.\footnote{ We also compared \SSO/ against SVRG~\citep{johnson2013accelerating}, a variance reduced method, and found that \SSO/ consistently outperformed it across the batch-sizes and datasets we consider. For an example of this behavior see \cref{app:fig:svrg} in \cref{app:experiments}.}

\textit{Extensions to the Stochastic Surrogate}: We consider using a different optimization algorithm -- stochastic line-search~\citep{vaswani2019painless} (\texttt{SLS}) in the target space, to construct a different surrogate. We refer to the resulting algorithm as \texttt{SSO-SLS}. For \texttt{SSO-SLS}, at every iteration, we perform a backtracking line-search in the target space to set $\etat$ that satisfies the Armijo condition: $\ell_{i_t}(z_t -\etat \nabla_z \ell_{i_t}(z_t)) \leq \ell_{i_t}(z_t) - \frac{\etat}{2} \normsq{\nabla_z \ell_{i_t}(z_t)}$. We use the chosen $\etat$ to instantiate the surrogate $\tilde{\gz}_t$ and follow~\cref{alg:generic}. In~\cref{fig:sso_variants}, we compare both \texttt{SSO-SGD} (top row) and \texttt{SSO-SLS} (bottom row) along with its parametric variant. We observe that the $\SSO/$ variant of both SGD and SLS (i) does as well as its parametric counterpart across all m, and (ii), improves it for m sufficiently large. 
\section{Discussion}
\label{sec:discussion}
In the future, we aim to extend our theoretical results to a broader class of functions, and empirically evaluate target optimization for more complex models. Since our framework allows using any optimizer in the target space, we will explore other optimization algorithms in order to construct better surrogates. Another future direction is to construct better surrogate functions that take advantage of the additional structure in the model. For example,~\citet{pmlr-v48-taylor16,pmlr-v151-amid22a} exploit the composition structure in deep neural network models, and construct local layer-wise surrogates enabling massive parallelization. Finally, we also aim to extend our framework to applications such as RL where $\ell$ is non-convex.

\section{Acknowledgements}
We would like to thank Frederik Kunstner and Reza Asad for pointing out mistakes in the earlier version of this paper. This research was partially supported by the Canada CIFAR AI Program, the Natural Sciences and Engineering Research Council of Canada (NSERC) Discovery Grants RGPIN-2022-03669 and RGPIN-2022-04816. 

\bibliographystyle{apalike}
\bibliography{ref}

\clearpage
\appendix 
\onecolumn 

\section*{Organization of the Appendix}
\begin{itemize}

   \item[\ref{app:definitions}] \hyperref[app:definitions]{Definitions}
   
   \item[\ref{app:algorithm}] \hyperref[app:algorithm]{Algorithms}
   
   \item[\ref{app:proofs-deterministic}] \hyperref[app:proofs-deterministic]{Proofs in the Deterministic Setting}
   
   \item[\ref{app:proofs-stochastic}] \hyperref[app:proofs-stochastic]{Proofs in the Stochastic Setting}
   
   \item[\ref{app:experiments}] \hyperref[app:experiments]{Additional Experimental Results}
   
\end{itemize}

\section{Definitions}
\label{app:definitions}
Our main assumptions are that each individual function $f_i$ is differentiable, has a finite minimum $f_i^*$, and is $L_i$-smooth, meaning that for all $v$ and $w$, 
\begin{align}
    f_i(v) & \leq f_i(w) + \inner{\nabla f_i(w)}{v - w} + \frac{L_i}{2} \normsq{v - w},
    \tag{Individual Smoothness}
    \label{eq:individual-smoothness}
\end{align}
which also implies that $f$ is $L$-smooth, where $L$ is the maximum smoothness constant of the individual functions. A consequence of smoothness is the following bound on the norm of the stochastic gradients,
\begin{align}
    \norm{\nabla f_i(w)-\nabla f_i^*}^2 
    \leq
    2 L (f_i(w) - f_i^* - \langle \nabla f_i^*, w-w_i^*\rangle).
\end{align}
We also assume that each $f_i$ is convex, meaning that for all $v$ and $w$,
\begin{align}
    f_i(v) &\geq f_i(w) + \inner{\nabla f_i(w)} {v - w},
    \tag{Convexity}
    \label{eq:individual-convexity}
\end{align}
Depending on the setting, we will also assume that $f$ is $\mu$ strongly-convex, meaning that for all $v$ and $w$,
\begin{align}
f(v) & \geq f(w) + \inner{\nabla f(w)}{v - w} + \frac{\mu}{2} \normsq{v - w},
\tag{Strong Convexity}
\label{eq:strong-convexity}
\end{align}

\section{Algorithms}
\label{app:algorithm}
In this section, we will formulate the algorithms beyond the standard SGD update in the target space. We will do so in two ways -- (i) extending SGD to the online Newton step that uses second-order information in~\cref{app:newton} and (ii) extend SGD to the more general stochastic mirror descent algorithm in~\cref{app:smd}. For both (i) and (ii), we will instantiate the resulting algorithms for the squared and logistic losses. 

\subsection{Online Newton Step}
\label{app:newton}
Let us consider the online Newton step w.r.t to the targets. The corresponding update is: 
\begin{align}
\fth & = \ft - \etat [\nabla_{\z}^2 \ell_t(\zt)]\inv \nabla_{\z} \ell_t(\ft) \quad \text{;} \quad \bftt = \argmin_{\z \in \cZ} \frac{1}{2} \indnormsq{\z - \fth}{\cP_t} 
\label{eq:online-newton-1} \\
\ztt & = f(\thtt) \quad \text{;} \quad \thtt = \argmin_{\theta} \left[\langle \nabla_{\z} \ell_t(\zt), f(\theta) - \zt \rangle + \frac{1}{2 \etat} \indnormsq{f(\theta) - \zt}{\nabla^2 \ell_t(\zt)} \right] \label{eq:online-newton-2} 
\end{align}
where $\nabla_{\z}^2 \ell_t(\zt)$ is the Hessian of example of the loss corresponding to sample $i_t$ w.r.t $\z$. Let us instantiate this update for the squared-loss. In this case, $\ell_t(\z) = \frac{1}{2} \normsq{\z - y_t}$, and hence, $\nabla \ell_t(\z) = \z - y_t$, $[\nabla^2 \ell_t(\z)]_{i_t, i_t} = 1$ and $[\nabla^2 \ell_t(\z)]_{j, j} = 0$ for all $j \neq i_t$. Hence, for the squared loss,~\cref{eq:online-newton-1} is the same as GD in the target space. 

For the logistic loss, $\ell_t(\z) = \log\left(1 + \exp\left(-y_t \z \right)\right)$. If $i_t$ is the loss index sampled at iteration $t$, then, $[\nabla \ell_t(\z)]_{j} = 0$ for all $j \neq i_t$. Similarly, all entries of $\nabla^2 \ell_t(\z)$ except the $[i_t,i_t]$ are zero. 
\begin{align*}
[\nabla \ell_t(\z)]_{i_t} = \frac{-y_t}{1 + \exp(y_t \, \zt)} \quad \text{;} \quad [\nabla^2 \ell(\z)]_{i_t,i_t},  = \frac{1}{1 + \exp(y_t \zt)} \, \frac{1}{1 + \exp(-y_t \zt)} = (1 - p_t) \,p_t \, , 
\end{align*}
where, $p_t = \frac{1}{1 + \exp(-y_t \zt)}$ is the probability of classifying the example $i_t$ to have the $+1$ label. In this case, the surrogate can be written as:
\begin{align}
\tilde{g}_t^\z(\theta) = \frac{-y_t}{1 + \exp(y_t \, \zt)} \, \left(f_{i_t}(\theta) - \zt^{i_t} \right) + \frac{(1 - p_t) \, p_t}{2 \etat} \left(f_{i_t}(\theta) - \zt^{i_t}\right)^2
\label{eq:logistic-newton-surrogate}
\end{align}    
As before, the above surrogate can be implemented efficiently.

\subsection{Stochastic Mirror Descent}
\label{app:smd}
If $\phi$ is a differentiable, strictly-convex mirror map, it induces a Bregman divergence between $x$ and $y$: $D_\phi(y,x) := \phi(y) - \phi(x) - \langle \nabla \phi(x), y - x \rangle$. For an efficient implementation of stochastic mirror descent, we require the Bregman divergence to be separable, i.e. $D_\phi(y,x)=\sum_{j=1}^p D_{\phi_j}(y^j,x^j)= \sum_{j=1}^p \phi_j(y^j) - \phi_j(x^j) - \frac {\partial \phi_j(x)}{\partial x^j}[ y^j - x^j]$. Such a separable structure is satisfied when $\phi$ is the Euclidean norm or negative entropy. We define stochastic mirror descent update in the target space as follows,  
\begin{align}
\nabla \phi(\fth) & = \nabla \phi(\ft) - \etat \nabla_{\z} \ell_t(\ft) \quad \text{;} \quad \bftt = \argmin_{\z \in \cZ} \sum_{j=1}^p \mathbb I(j=i_t)D_{\phi_j}(\z^j, \z^j_{t+1/2})) \label{eq:fsmd-1} \\
\implies \bftt & = \argmin_{\z \in \cZ} \left[\langle \nabla_{z} \ell_t(\zt), \z - \zt \rangle + \frac{1}{ \etat} \sum_{j=1}^p \mathbb I(j=i_t)D_{\phi_j}(\z^j, \z^j_{t+1/2}) \right] \label{eq:fsmd-2}
\end{align}
where $\mathbb I$ is an indicator function and $i_t$ corresponds to the index of the sample chosen in iteration $t$. For the Euclidean mirror map, $\phi(\z) = \frac{1}{2} \normsq{z}$, $D_{\phi}(\z, \zt) = \frac{1}{2} \normsq{\z - \zt}$ and we recover the SGD update. 

Another common choice of the mirror map is the negative entropy function: $\phi(x) = \sum_{i = 1}^{K} x^i \, \log(x^i)$ where $x^i$ is coordinate $i$ of the $x \in \R^K$. This induces the (generalized) KL divergence as the Bregman divergence, 
\[
D_{\phi}(x, y) = \sum_{k = 1}^{K} x^k \, \log \left(\frac{x^k}{y^k}\right) + \sum_{k = 1}^{K} x^k - \sum_{k = 1}^{K} y^{k} \,.
\]
If both $x$ and $y$ correspond to probability distributions i.e. $\sum_{k = 1}^{K} x^k = \sum_{k = 1}^{K} y^k = 1$, then the induced Bregman divergence corresponds to the standard KL-divergence between the two distributions. For multi-class classification, $\cZ \subseteq \mathbb R^{p\times K}$ and each $z^i \in \Delta_K$ where $\Delta_K$ is $K$-dimensional simplex. We will refer to coordinate $j$ of $\z^{i}$ as $[\z^i]_{j}$. Since $\z^i \in \Delta_{K}$, $[\z^i]_k \geq 0$ and $\sum_{k = 1}^{K} [\z^i]_k = 1$.   

Let us instantiate the general SMD updates in~\cref{eq:fsmd-1} when using the negative entropy mirror map. In this case, for $\z \in \Delta_{K}$,  $[\nabla \phi(\z)]_{k} = 1 + \log([\z]_k)$. Denoting $\nabla_t := \nabla_{z} \ell_t(\zt)$ and using $[\nabla_t]_k$ to refer to coordinate $k$ of the $K$-dimensional vector $\nabla_t$. Hence,~\cref{eq:fsmd-1} can be written as:
\begin{align}
[z^{i_t}_{t + \nicefrac{1}{2}}]_{k} = [\ft^{i_t}]_k \, \exp\left(-\etat [\nabla_t]_k \right)  
\label{eq:exp-weights}  
\end{align}
For multi-class classification, $y^i \in \{0,1\}^{K}$ are one-hot vectors. If $[y^i]_k$ refers to coordinate $k$ of vector $y^i$, then corresponding log-likelihood for $n$ observations can be written as:
\[
\ell(\z) = \sum_{i = 1}^{n} \sum_{k = 1}^{K} [y^i]_k \, \log([\z^i]_k) \,.
\]
In our target optimization framework, the targets correspond to the probabilities of classifying the points into one of the classes. We use a parameterization to model the vector-valued function $f_i(\theta): \R^d \rightarrow \R^{K}$. This ensures that for all $i$, $\sum_{k = 1}^{K} [f_i(\theta)]_{k} = 1$. Hence, the projection step in~\cref{eq:fsmd-1} can be rewritten as:
\begin{align*}
\min_{\z \in \cZ} D_{\phi}(\z, \fth) = \sum_{k = 1}^{K} [f_{i_t}(\theta)]_k \, \log\left(\frac{[f_{i_t}(\theta)]_k}{[z^{i_t}_{t + \nicefrac{1}{2}}]_{k}}\right)
\end{align*}
where $[z^{i_t}_{t + \nicefrac{1}{2}}]_{k}$ is computed according to~\cref{eq:exp-weights}. Since the computation of $[z^{i_t}_{t + \nicefrac{1}{2}}]_{k}$ and the resulting projection only depends on sample $i_t$, it can be implemented efficiently.

\section{Proofs in the Deterministic Setting}
\label{app:proofs-deterministic}

\begin{restatable}{lemma}{restatedeter}
\label{lemma:deterministic-algo}
Assuming that $g^\z_t(\theta)$ is $\beta$-smooth w.r.t. the Euclidean norm and $\eta \leq \frac{1}{L}$, then, for  $\alpha = \nicefrac{1}{\beta}$, iteration $t$ of~\cref{alg:generic} guarantees that $h(\thtt) \geq h(\tht)$ for any number $m \geq 1$ of surrogate steps. In this setting, under the additional assumption that $h$ is lower-bounded by $h^*$, then~\cref{alg:generic}  results in the following guarantee, 
\[
\min_{t \in \{0, \ldots, T-1\}} \normsq{\nabla h(\tht)} \leq \frac{2 \beta \, [h(\theta_0) - h^*]}{T}.
\]
\end{restatable}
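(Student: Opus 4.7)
The plan is to show that iteration $t$ produces genuine descent on the true objective $h$, and then telescope the resulting per-iteration decrease. Three structural properties of the target smoothness surrogate drive the argument: (i) $g^\z_t(\tht) = h(\tht)$, which is immediate since $f(\tht) = \zt$; (ii) $\nabla g^\z_t(\tht) = \nabla h(\tht)$, which follows from the chain rule once we observe that the quadratic term $\frac{1}{2\eta}\|f(\theta)-\zt\|^2$ has vanishing gradient at $\theta = \tht$, leaving only $\nabla f(\tht)^\top \nabla_\z \ell(\zt) = \nabla h(\tht)$; and (iii) $g^\z_t(\theta) \geq h(\theta)$ for all $\theta$, which is exactly the global upper-bound property already established in \cref{sec:method-deterministic} using $L$-smoothness of $\ell$ with $\eta \leq 1/L$.

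Next, I would run the standard descent lemma for GD on a $\beta$-smooth function with the step-size $\alpha = 1/\beta$. Applied to the inner loop $\omega_{k+1} = \omega_k - \alpha \nabla g^\z_t(\omega_k)$ starting at $\omega_0 = \tht$, this yields $g^\z_t(\omega_{k+1}) \leq g^\z_t(\omega_k) - \tfrac{1}{2\beta}\|\nabla g^\z_t(\omega_k)\|^2$ for every $k$. In particular the first step gives
\begin{equation*}
g^\z_t(\omega_1) \leq g^\z_t(\tht) - \tfrac{1}{2\beta}\|\nabla g^\z_t(\tht)\|^2 = h(\tht) - \tfrac{1}{2\beta}\|\nabla h(\tht)\|^2,
\end{equation*}
using properties (i) and (ii). Because each additional inner step can only further decrease $g^\z_t$, the final iterate $\thtt = \omega_m$ satisfies $g^\z_t(\thtt) \leq h(\tht) - \tfrac{1}{2\beta}\|\nabla h(\tht)\|^2$ for any $m \geq 1$.

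Combining this with the global upper-bound property (iii) gives the per-iteration descent
\begin{equation*}
h(\thtt) \leq g^\z_t(\thtt) \leq h(\tht) - \tfrac{1}{2\beta}\|\nabla h(\tht)\|^2,
\end{equation*}
which is the monotonicity claim (interpreting the statement's inequality as descent on $h$, i.e.\ $h(\thtt) \leq h(\tht)$). Rearranging yields $\tfrac{1}{2\beta}\|\nabla h(\tht)\|^2 \leq h(\tht) - h(\thtt)$; summing from $t = 0$ to $T-1$ telescopes on the right to $h(\theta_0) - h(\theta_T) \leq h(\theta_0) - h^*$, and lower-bounding the left by $T$ times the minimum term gives the stated $O(1/T)$ rate $\min_{t < T}\|\nabla h(\tht)\|^2 \leq 2\beta[h(\theta_0)-h^*]/T$.

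The main obstacle is conceptual rather than computational: confirming that the inner-loop GD provides the needed descent on $g^\z_t$ \emph{for any} $m \geq 1$, and in particular recognizing that even $m = 1$ suffices because at $\omega_0 = \tht$ the gradient of the surrogate coincides with the true gradient $\nabla h(\tht)$. Once this identification is in place, the remaining work is a routine combination of the surrogate upper-bound and a telescoping sum, with no convexity assumed on either $\ell$ or $f$.
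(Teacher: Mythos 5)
Your proposal is correct and follows essentially the same route as the paper: the descent lemma for GD with step-size $\alpha = 1/\beta$ on the $\beta$-smooth surrogate, combined with the tightness of the surrogate at $\tht$ (in value and gradient) and the global upper-bound property under $\eta \leq 1/L$, followed by a telescoping sum. You also correctly read the statement's ``$h(\thtt) \geq h(\tht)$'' as a typo for descent, which is what the paper's own proof establishes.
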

\begin{proof}
Using the update in~\cref{alg:generic} with $\alpha = \frac{1}{\beta}$ and the $\beta$-smoothness of $g^\z_t(\theta)$,  for all $k \in [m-1]$, 
\begin{align*}
g^\z_t(\omega_{k+1}) & \leq g^\z_t(\omega_{k}) - \frac{1}{2 \beta} \normsq{\nabla g^\z_t(\omega_k)} \\
\intertext{After $m$ steps,}
g^\z_t(\omega_{m}) & \leq g^\z_t(\omega_{0}) - \frac{1}{2 \beta} \sum_{k = 0}^{m-1} \normsq{\nabla g^\z_t(\omega_k)} \\
\intertext{Since $\theta_{t+1} = \omega_{m}$ and $\omega_{0} = \theta_{t}$ in~\cref{alg:generic},}
\implies g^\z_t(\theta_{t+1}) & \leq g^\z_t(\theta_t) - \frac{1}{2 \beta} \normsq{\nabla g^\z_t(\theta_t)} - \sum_{k = 1}^{m-1} \normsq{\nabla g^\z_t(\omega_k)} \\
\end{align*}
Note that $h(\tht) = g^\z_t(\tht)$ and if $\eta \leq \frac{1}{L}$, then $h(\thtt) \leq g^\z_t(\thtt)$. Using these relations, 
\begin{align*}
h(\thtt) \leq h(\tht) - \left[\underbrace{\frac{1}{2 \beta} \normsq{\nabla g^\z_t(\tht)} + \sum_{k = 1}^{m-1} \normsq{\nabla g^\z_t(\omega_k)}}_{\geq 0} \right] \implies h(\thtt) \leq h(\tht).
\end{align*}
This proves the first part of the Lemma. Since $\sum_{k = 1}^{m-1} \normsq{\nabla g^\z_t(\omega_k)} \geq 0$,
\begin{align*}
h(\thtt) \leq h(\tht) - \frac{1}{2 \beta} \normsq{\nabla g^\z_t(\tht)} \implies \normsq{\nabla h(\tht)} \leq 2 \beta \, [h(\tht) - h(\thtt)] \tag{Since $\nabla h(\tht) = \nabla g^\z_t(\tht)$} 
\end{align*}
Summing from $k = 0$ to $T - 1$, and dividing by $T$,
\begin{align*}
\frac{\normsq{\nabla h(\tht)}}{T} & \leq \frac{2 \beta \, [h(\tht) - h^*]}{T} \implies \min_{t \in \{0, \ldots, T-1\}} \normsq{\nabla h(\tht)} \leq \frac{2 \beta \, [h(\tht) - h^*]}{T} 
\end{align*}
\end{proof}

\newpage

\section{Proofs in the Stochastic Setting}
\label{app:proofs-stochastic}

\subsection{Equivalence of \SSO/ and SGD in Target Space}
\label{app:smd-surrogate}
\restatesmdsurrogate*
\begin{proof}
Since $\ell$ is separable, if $i_t$ is the coordinate sampled from $z$, we can rewrite the target-space update as follows: 
\begin{align*}
\fth^{i_t} & = \ft^{i_t} - \etat \frac{\partial \ell_{i_t}(\ft)}{\partial z^{i_t}} \, \, \, \nonumber \\
\fth^j & = \ft^j \, \, \, \textrm{ when } j \neq i_t\nonumber \; .
\end{align*}
Putting the above update in the projection step we have, 
\begin{align*}
\tftt &= \argmin_{\z \in \cZ} \frac{1}{2}\{ \normsq{\fth^{i_t} - \z^{i_t}}\}\\
    &= \argmin_{\z \in \cZ} \frac{1}{2} \{ \normsq{\ft^{i_t} - \etat \frac{\partial \ell_{i_t}(\ft)}{\partial z^{i_t}} - \z^{i_t}} \}\\
    &= \argmin_{\z \in \cZ} \left\{ \left [ \frac{\partial \ell_{i_t}(\ft)}{\partial \z^{i_t}} \, [\z^{i_t} - \zt^{i_t}] + \frac{1}{2 \etat} \normsq{\z^{i_t} - \zt^{i_t}}  \right ] \right\} \tag{Due to separability of $\ell$} \\       
    \intertext{Since for all $\z \in \cZ$, $\z = f(\theta)$ and $\z^{i} = f_i(\theta)$ for all $i$. Hence $\tftt = f(\tthtt)$ such that,}
    \tthtt &= \argmin_{\theta \in \Theta} \left\{\frac{\partial \ell_{i_t}(\ft)}{\partial \z^{i_t}} \, [f_{i_t}(\theta) - f_{i_t}(\tht)] + \frac{1}{2 \etat} \normsq{f_{i_t}(\theta) - f_{i_t}(\tht)} \right\} = \argmin_{\theta \in \Theta} \tilde{g}^\z_t(\theta) 
\end{align*}
% Since $\tftt = f(\tthtt)$, for all $j$, $\tftt^{j} = f_{j}(\tthtt)$, and hence all coordinates of $\tftt$ have been updated (though not necessarily via a stochastic gradient update). 
\end{proof}

\begin{lemma}
\label{app:lemma:smdsurrogate2}
Consider the following updates:
\begin{align}
\bthtt &= \argmin_{\theta} \tilde{q}_t(\theta) \quad \text{;} \quad \bftt^{(1)} = f(\bthtt) \tag{$\SSO/$}  \\
\fth & = \ft - \petat \, \nabla_{z} \ell_{i_t}(\ft)  \tag{Target-space SGD} \, \text{;} \\
\bftt^{(2)} & = \argmin_{\z \in \cZ} \frac{1}{2} \indnormsq{\fth - \z}{2} 
\nonumber
\end{align}
$\SSO/$ and target space SGD result in the same iterate in each step i.e. if $z_t = f(\tht)$, then $\bftt := \bftt^{(1)} = \bftt^{(2)}$.
\end{lemma}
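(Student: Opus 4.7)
The plan is to mirror the proof of~\cref{lemma:smd-surrogate}, but now exploit the fact that the regularizer in $\tilde{q}_t$ penalizes $f(\theta)-\zt$ in \emph{all} coordinates (rather than just the $i_t$-th coordinate). Concretely, I would show that the objective $\frac{1}{2\petat}\normsq{\fth - f(\theta)}$ defining the Euclidean projection differs from $\tilde{q}_t(\theta)$ only by an additive term that is constant in $\theta$; hence both minimizations return the same $\theta$, and applying $f$ yields $\bftt^{(1)}=\bftt^{(2)}$.

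First, I would use the separability of $\ell$ (which implies $\nabla_z\ell_{i_t}(\zt)$ is supported on the single coordinate $i_t$) to write the target-space SGD update coordinatewise as $\fth^{i_t}=\zt^{i_t}-\petat\frac{\partial \ell_{i_t}(\zt)}{\partial z^{i_t}}$ and $\fth^{j}=\zt^{j}$ for $j\neq i_t$. Plugging $z=f(\theta)$ into $\normsq{\fth-z}$ and splitting the sum over coordinates gives
\begin{align*}
\normsq{\fth - f(\theta)} &= \sum_{j\neq i_t}(\zt^{j}-f_j(\theta))^2 + \Bigl(\zt^{i_t}-\petat\tfrac{\partial \ell_{i_t}(\zt)}{\partial z^{i_t}}-f_{i_t}(\theta)\Bigr)^2.
\end{align*}
Expanding the squared term on the right and recombining with the sum reconstructs $\normsq{f(\theta)-\zt}$, plus a cross term $2\petat\frac{\partial \ell_{i_t}(\zt)}{\partial z^{i_t}}(f_{i_t}(\theta)-\zt^{i_t})$, plus a term $\petat^2\bigl(\frac{\partial \ell_{i_t}(\zt)}{\partial z^{i_t}}\bigr)^2$ that does not depend on $\theta$.

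Dividing by $2\petat$ and dropping the $\theta$-independent terms (together with the constant $\ell_{i_t}(\zt)$) shows that $\argmin_\theta \frac{1}{2\petat}\normsq{\fth-f(\theta)}$ coincides with $\argmin_\theta \tilde{q}_t(\theta)=\bthtt$. Since the set $\cZ$ is precisely the image of $\Theta$ under $f$, applying $f$ to both minimizers gives $\bftt^{(2)}=f(\bthtt)=\bftt^{(1)}$. The only subtle point is the equivalence between minimizing over $\theta\in\Theta$ and projecting over $z\in\cZ$; this relies on $\cZ=\{f(\theta):\theta\in\Theta\}$, and is the same implicit identification used in the proof of~\cref{lemma:smd-surrogate}. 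In that sense, this lemma is strictly easier than~\cref{lemma:smd-surrogate} because the full (unweighted) Euclidean norm makes the cross-terms across coordinates cancel cleanly; there is no genuine obstacle beyond a careful algebraic expansion.
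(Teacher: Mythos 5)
Your proposal is correct and matches the paper's own proof: both arguments expand $\normsq{\fth - \z}$ around $\zt$, use separability of $\ell$ to collapse the linear term to the single coordinate $i_t$, discard the $\theta$-independent constants, and identify $\cZ$ with $f(\Theta)$ to pass from the projection over $\z$ to the minimization of $\tilde{q}_t$ over $\theta$. The only cosmetic difference is that you carry out the expansion coordinatewise while the paper works with the vector norm directly.
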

\begin{proof}
% Since $\ell$ is separable, if $i_t$ is the coordinate sampled from $z$, we can rewrite the target-space update as follows: 
% \begin{align*}
% \fth^{i_t} & = \ft^{i_t} - \petat \, \nabla_{z} \ell_{i_t}(\ft) \, \, \, \nonumber \\
% \fth^j & = \ft^j \, \, \, \textrm{ when } j \neq i_t\nonumber \; .
% \end{align*}
\begin{align*}
\bar{\z}_{t+1} &= \argmin_{\z \in \cZ} \frac{1}{2}\{ \normsq{\fth - \z}\}\\
    &= \argmin_{\z \in \cZ} \frac{1}{2} \{ \normsq{\ft - \petat \nabla_{z} \ell_{i_t}(\ft) - \z} \}\\
    &= \argmin_{\z \in \cZ} \left\{ \left [ \frac{\partial \ell_{i_t}(\ft)}{\partial \z^{i_t}} \, [\z^{i_t} - \zt^{i_t}] + \frac{1}{2 \petat} \normsq{\z - \zt}  \right ] \right\} \tag{Due to separability of $\ell$} \\       
    \intertext{Since for all $\z \in \cZ$, $\z = f(\theta)$. Hence $\bar{\z}_{t+1} = f(\bthtt)$ such that,}
\bthtt &= \argmin_{\theta \in \Theta} \left\{\frac{\partial \ell_{i_t}(\ft)}{\partial \z^{i_t}} \, [f_{i_t}(\theta) - f_{i_t}(\tht)] + \frac{1}{2 \petat} \normsq{f(\theta) - f(\tht)} \right\} \\
    & = \argmin_{\theta \in \Theta} \tilde{q}_t(\theta) 
\end{align*}
\end{proof}

\subsection{Proof for Strongly-convex Functions}
\label{app:scpro}
We consider the case where $\ell(z)$ is strongly-convex and the set $\cZ$ is convex. We will focus on SGD in the target space, and consider the following updates:
\begin{align*}
\fth &= \ft - \petat \gradt{\ft}  \\
\bftt &= \Pi_{\cZ}[\fth] := \argmin_{\z \in \cZ} \frac{1}{2} \indnormsq{\z - \fth}{2} \\
\norm{\ftt - \bftt} & \leq \epstt
\end{align*}

\begin{lemma}
Bounding the suboptimality (to $\fopt$) of $\ftt$ based on the sub-optimality of $\bftt$ and $\epstt$, we get that 
\begin{align}
\normsq{\ftt - \fopt} \leq \normsq{\bftt - \fopt} +2 \epstt\norm {\ftt - \fopt} \,.
\end{align}
\label{lem:itr_bnd_fth}
\end{lemma}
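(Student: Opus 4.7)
The plan is to prove this bound by a direct algebraic manipulation: expand the squared norm $\normsq{\bftt - \fopt}$ around the point $\ftt$ (rather than $\fopt$), isolate $\normsq{\ftt - \fopt}$, and then control the cross term using Cauchy--Schwarz together with the projection-error bound $\norm{\ftt - \bftt} \leq \epstt$.

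Concretely, I would write $\bftt - \fopt = (\ftt - \fopt) - (\ftt - \bftt)$ and expand, yielding
\begin{align*}
\normsq{\bftt - \fopt} = \normsq{\ftt - \fopt} - 2 \inner{\ftt - \fopt}{\ftt - \bftt} + \normsq{\ftt - \bftt} .
\end{align*}
Dropping the nonnegative term $\normsq{\ftt - \bftt}$ and applying Cauchy--Schwarz to the inner product gives
\begin{align*}
\normsq{\bftt - \fopt} \geq \normsq{\ftt - \fopt} - 2 \norm{\ftt - \fopt} \, \norm{\ftt - \bftt} \geq \normsq{\ftt - \fopt} - 2 \epstt \norm{\ftt - \fopt},
\end{align*}
where the last step uses the hypothesis $\norm{\ftt - \bftt} \leq \epstt$. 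Rearranging produces the claimed bound.

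There is no real obstacle here; the lemma is essentially a triangle-inequality--style identity and the only design choice is whether to expand around $\ftt$ or around $\bftt$. Expanding around $\ftt$ is the natural choice because it yields $\normsq{\ftt - \fopt}$ as the leading term with a sign that, after moving to the other side, matches the direction of the desired inequality; expanding around $\bftt$ would lead to the same result but require an additional rearrangement. I expect the lemma will subsequently be combined with a descent-style bound on $\normsq{\bftt - \fopt}$ (from the target-space SGD interpretation established via \cref{lemma:smd-surrogate} and \cref{app:lemma:smdsurrogate2}) to produce the recursion used in \cref{prop:gd_sc_bnd}, where the extra $2\epstt \norm{\ftt - \fopt}$ term becomes the source of the $2\epsilon \sum_t \prod_i \rho_i$ contribution in the final bound.
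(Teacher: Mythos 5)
Your proof is correct and is essentially the paper's argument: both expand a squared norm via the decomposition through $\bftt$, reduce the cross term to $\inner{\ftt - \bftt}{\ftt - \fopt}$, drop the nonnegative $\normsq{\ftt - \bftt}$ with the favorable sign, and finish with Cauchy--Schwarz and $\norm{\ftt - \bftt} \leq \epstt$. The only cosmetic difference is that the paper expands $\normsq{\ftt - \fopt}$ and then inserts $-\ftt + \ftt$ into the cross term, whereas you expand $\normsq{\bftt - \fopt}$ directly around $\ftt$, which avoids that intermediate step.
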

\begin{proof}
\begin{align*}
\normsq{\ftt - \fopt} 
&= \normsq{\ftt - \bftt + \bftt - \fopt} \\
&= \normsq{\ftt - \bftt} + \normsq{\bftt - \fopt} + 2 \langle \ftt - \bftt, \bftt - \fopt  \rangle \\
& =\normsq{\ftt - \bftt} + \normsq{\bftt - \fopt} +2 \langle \ftt - \bftt, \bftt-\ftt+\ftt - \fopt  \rangle\\
& =\normsq{\ftt - \bftt} + \normsq{\bftt - \fopt} +2 \langle \ftt - \bftt, \ftt - \fopt  \rangle-2\normsq{\ftt - \bftt}\\
& \leq \normsq{\bftt - \fopt} +2 \norm{\ftt - \bftt}\norm {\ftt - \fopt}\\
& \leq \normsq{\bftt - \fopt} +2 \epstt\norm {\ftt - \fopt}
\end{align*}
\end{proof}

Now we bound the exact sub-optimality at iteration $t+1$ by the inexact sub-optimality at iteration $t$ to get a recursion.  

\begin{lemma}
Assuming (i) each $\ell_t$ is $L$-smooth and (ii) $\ell$ is $\mu$-strongly convex and (iii) $\petat \leq \frac{1}{2L}$, we have  
\begin{align} 
\E \normsq{\bftt - \fopt} \leq (1- \mu \petat) \E \normsq{\ft - \fopt}+ 2 \petat^2 \sigma^2
\end{align}
\label{lem:exc_itr_bnd2}
where $\sigma^2 = \E \normsq{\grad{\fopt}-\gradt{\fopt}}$. 
\end{lemma}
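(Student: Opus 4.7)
The plan is to establish the one-step contraction in three standard stages: (i) pass from the projected iterate $\bftt$ to the pre-projection iterate $\fth$ using non-expansiveness, (ii) expand $\normsq{\fth - \fopt}$ via the SGD update and take conditional expectation over $i_t$, and (iii) control the resulting cross term by $\mu$-strong convexity of $\ell$ and the gradient-norm term by the $L$-smoothness of $\ell_{i_t}$ combined with the variance $\sigma^2$.

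First, since $\fopt \in \cZ$ and $\cZ$ is convex, $\Pi_{\cZ}$ is non-expansive, so $\normsq{\bftt - \fopt} \leq \normsq{\fth - \fopt}$. Plugging in $\fth = \ft - \petat \, \gradt{\ft}$ and expanding the square yields
\[
\normsq{\fth - \fopt} = \normsq{\ft - \fopt} - 2\petat \langle \gradt{\ft}, \ft - \fopt \rangle + \petat^2 \normsq{\gradt{\ft}}.
\]
Conditioning on $\ft$ and using unbiasedness $\E_{i_t}[\gradt{\ft}] = \grad{\ft}$ converts the linear term to $-2\petat \langle \grad{\ft}, \ft - \fopt \rangle$. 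The $\mu$-strong convexity of $\ell$ then gives the sharper lower bound $\langle \grad{\ft}, \ft - \fopt \rangle \geq \ell(\ft) - \ell(\fopt) + \tfrac{\mu}{2} \normsq{\ft - \fopt}$, which already extracts the desired contraction factor $(1 - \mu \petat)$ and leaves a favorable $-2\petat[\ell(\ft) - \ell(\fopt)]$ piece to be used against the gradient-norm term.

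The central technical step is bounding $\E_{i_t}\normsq{\gradt{\ft}}$ so that, after combining, only $\petat^2 \sigma^2$ multiplies $\petat^2$. I would center at $\fopt$ via $\normsq{\gradt{\ft}} \leq 2\normsq{\gradt{\ft} - \gradt{\fopt}} + 2\normsq{\gradt{\fopt}}$ and then apply the smoothness-plus-convexity bound $\normsq{\gradt{\ft} - \gradt{\fopt}} \leq 2L[\ell_{i_t}(\ft) - \ell_{i_t}(\fopt) - \langle \gradt{\fopt}, \ft - \fopt \rangle]$ (invoking convexity of each $\ell_{i_t}$, as stated in~\cref{prop:gd_sc_bnd}). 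Taking expectation over $i_t$ collapses these sample quantities into $\ell(\ft) - \ell(\fopt) - \langle \grad{\fopt}, \ft - \fopt \rangle$; the inner product is non-negative by the first-order optimality of $\fopt$ over $\cZ$ and can be dropped. The residual $\E_{i_t}\normsq{\gradt{\fopt}}$ is $\normsq{\grad{\fopt}} + \sigma^2$ by bias-variance decomposition, and equals $\sigma^2$ when $\fopt$ lies in the interior of $\cZ$ (the case in which the lemma's constant $2$ is tight).

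Collecting terms, the coefficient on $\ell(\ft) - \ell(\fopt)$ becomes $-2\petat + 4L\petat^2$, which is non-positive exactly under the stated step-size condition $\petat \leq \nicefrac{1}{2L}$, so this term can be dropped since $\ell(\ft) - \ell(\fopt) \geq 0$. What remains is $(1 - \mu \petat) \normsq{\ft - \fopt} + 2 \petat^2 \sigma^2$; applying the tower property yields the stated bound. The main obstacle is the bookkeeping in this final combining step: making the $-2\petat$ from strong convexity and the $+4L\petat^2$ from smoothness balance out cleanly, and consistently disposing of the auxiliary $\langle \grad{\fopt}, \ft - \fopt \rangle$ residual produced by centering the gradient at a possibly constrained minimizer.
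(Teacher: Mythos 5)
There is a genuine gap at the point where you center the stochastic gradient at $\fopt$. Your opening step $\normsq{\bftt - \fopt} \leq \normsq{\fth - \fopt}$ is valid (projection onto the convex set $\cZ$ does not increase the distance to a point of $\cZ$), but it discards the only mechanism that can cancel $\grad{\fopt}$. After your decomposition $\normsq{\gradt{\ft}} \leq 2\normsq{\gradt{\ft} - \gradt{\fopt}} + 2\normsq{\gradt{\fopt}}$ and the bias--variance identity $\E_{i_t}\normsq{\gradt{\fopt}} = \normsq{\grad{\fopt}} + \sigma^2$, your final bound carries an extra $2\petat^2\normsq{\grad{\fopt}}$ that is not present in the lemma; as you concede yourself, this term vanishes only when $\fopt$ lies in the interior of $\cZ$. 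The lemma, however, is stated for the constrained problem $\fopt = \argmin_{z\in\cZ}\ell(z)$, where $\grad{\fopt}\neq 0$ in general, and the surrounding discussion in the paper emphasizes that $\sigma^2 = \E\normsq{\grad{\fopt} - \gradt{\fopt}}$ is precisely the \emph{variance} at the optimum, not the second moment. So as written your argument proves a weaker statement with an additional $O(\petat^2\normsq{\grad{\fopt}})$ residual.

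The paper's proof avoids this with a different first step: it uses the fixed-point property $\fopt = \Pi_{\cZ}[\fopt - \petat\grad{\fopt}]$ and applies non-expansiveness to the \emph{pair} of projected points, giving $\normsq{\bftt - \fopt} \leq \normsq{[\ft - \petat\gradt{\ft}] - [\fopt - \petat\grad{\fopt}]}$. Adding and subtracting $\petat\gradt{\fopt}$ inside the norm then leaves only the noise $\grad{\fopt} - \gradt{\fopt}$ (mean zero, second moment $\sigma^2$) together with $\gradt{\ft} - \gradt{\fopt}$, which is handled exactly as you propose via the smoothness-plus-convexity inequality followed by strong convexity of $\ell$; a Young inequality on the one surviving cross term produces the factors of $2$. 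The remainder of your plan --- co-coercivity at $\fopt$, extracting $(1-\mu\petat)$ from strong convexity, and balancing the $-2\petat$ coefficient against $+4L\petat^2$ under $\petat\leq\nicefrac{1}{2L}$ --- matches the paper; only the centering step needs to be replaced by the fixed-point comparison to close the argument in the constrained case.
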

\begin{proof}
Since $\fopt \in \cZ$ and optimal, $\fopt = \Pi_{\cZ} [\fopt - \petat \grad{\fopt}]$.
\begin{align*}
\normsq{\bftt - \fopt} &= \normsq{ \Pi_{\cZ}[\ft - \petat\gradt{\ft}] - \Pi_{\cZ} [\fopt - \petat \grad{\fopt}] } \\
& \leq \normsq{[\ft -\petat \gradt{\ft}] - [\fopt - \petat \grad{\fopt}] } \tag{Since projections are non-expansive} \\
& = \normsq{[\ft -\petat \gradt{\ft}] - [\fopt - \petat \gradt{\fopt}] + \petat \left[\grad{\fopt} - \gradt{\fopt}\right]} \\
& = \normsq{\ft - \fopt} + \petat^2 \normsq{\grad{\fopt} - \gradt{\fopt}} + \petat^2 \normsq{\gradt{\ft} - \gradt{\fopt}} +\underbrace{2\petat\langle \ft -\fopt , \grad{\fopt} - \gradt{\fopt}\rangle}_{A_t}\\
& - 2\petat\langle \ft -\fopt , \gradt{\ft} - \gradt{\fopt}\rangle + 2\petat^2 \underbrace{\langle \gradt{\fopt} -\grad{\fopt} , \gradt{\ft} - \gradt{\fopt}\rangle}_{B_t} 
 \\ 
&\leq \normsq{\ft - \fopt} + 2\petat^2 \normsq{\grad{\fopt} - \gradt{\fopt}} + 2\petat^2 \normsq{\gradt{\ft} - \gradt{\fopt}}\\
&+ A_t  - 2\petat\langle \ft -\fopt , \gradt{\ft} - \gradt{\fopt}\rangle \tag{Young inequality on $B_t$}
\end{align*}
Taking expectation w.r.t $i_t$, knowing that $\E A_t=0$ and using that $\sigma^2 = \E \normsq{\grad{\fopt}-\gradt{\fopt}}$.  
\begin{align}
    \E \normsq{\bftt - \fopt} &\leq \E \normsq{\ft - \fopt} + 2 \petat^2 \E \normsq{\gradt{\ft}-\gradt{\fopt}} -2\petat\langle \ft - \fopt, \grad{\ft} - \grad{\fopt} \rangle+ 2 \petat^2 \sigma^2 \nonumber\\ 
    & \leq \E \normsq{\ft - \fopt} + 4 \petat^2 L \, \E \left\{\ell_t(\ft)-\ell_t(\fopt) -\langle \gradt{\fopt}, \ft - \fopt \rangle \right\}-2\petat\langle \ft  \nonumber \\
        &  \quad\quad \quad - \fopt, \grad{\ft} - \grad{\fopt} \rangle+ 2\petatsq \sigma^2 \label{eq:sub_exct_up} \\
    & \leq \E \normsq{\ft - \fopt} + 2\petat \left\{\ell(\ft)-\ell(\fopt) -\langle \grad{\fopt}, \ft - \fopt \rangle \right\} -2\petat\langle \ft - \fopt, \grad{\ft} - \grad{\fopt} \rangle+ 2\petatsq \sigma^2\tag{$\petat < \frac{1}{2L}$}\nonumber \\
    & \leq \E \normsq{\ft - \fopt} + 2\petat \left\{\ell(\ft)-\ell(\fopt) -\langle \grad{\ft}, \ft - \fopt \rangle \right\}+ 2\petatsq \sigma^2\nonumber\\
    &\leq \E \normsq{\ft - \fopt} - \mu \petat\E \normsq{\ft - \fopt} + 2\petatsq \sigma^2\nonumber\tag{strong convexity of $\ell$}\\
    & \leq (1- \mu \petat) \E \normsq{\ft - \fopt}+ 2\petatsq \sigma^2 \nonumber
\end{align}
where in~\cref{eq:sub_exct_up} we use the smoothness of $\ell_t$. 
\end{proof}

% \myquote{
\restatescpro*
% }

\begin{proof}
Using~\cref{lem:itr_bnd_fth} and~\cref{lem:exc_itr_bnd2} we have
\begin{align*}
    \E \normsq{\ftt - \fopt} &\leq \E \normsq{\bftt - \fopt} +2 \E [\epstt\norm {\ftt - \fopt}]\\
    & \leq  \underbrace{(1- \mu \petat)}_{\rho_t} \E \normsq{\ft - \fopt}+ 2\petatsq \sigma^2+2 \E [\epstt\norm {\ftt - \fopt}] \\
\intertext{Recursing from $t = 1$ to $T$,}
    \E [\normsq{z_{T+1}- \fopt}] & \leq \left(\prod_{t=1}^{T}\rho_t\right) \normsq{z_1 - z^*} + 2\sigma^2 \sum_{t=1}^T \prod_{i=t+1}^{T}\rho_i \petat^2 + 2 \sum_{t=1}^T \prod_{i=t+1}^{T}\rho_i \E[\epsilon_{t+1} \, \norm{z_{t +1} - z^*}]
\intertext{Denote $u_t := \E \norm{\ft - \fopt}$. By applying Jensen's inequality we know that $u_{T}^2 \leq \E [\normsq{z_{T} - \fopt}]$.}
\implies u_{T+1}^2 & \leq \left(\prod_{t=1}^{T}\rho_t\right) u^2_1 + 2\sigma^2 \sum_{t=1}^T \prod_{i=t+1}^{T}\rho_i \petat^2 + 2 \epsilon \, \sum_{t=1}^T \prod_{i=t+1}^{T}\rho_i \, u_{t+1}   
\end{align*}
% \red{
% Reza(II) to change this proof according to white-board but with indexing starting at $1$. Need to make the relevant change in Lemma D.10. Do everything in red. \\
% Note: There is a typo in the third term of the previous inequality ($u_t$ should be $u_{t+1}$) [Sharan: Fixed it]. Furthermore, for the rest of the proof it would be easier if the coefficient of $u_1^2$ is indexed using $i$ (i.e., $\left(\prod_{i=1}^{T}\rho_i\right) u_1^2$)\\ 
Dividing both sides in the previous inequality by $\prod_{i=1}^{T}\rho_i$ leads to:
\begin{align*}
    \left(\prod_{i=1}^{T}\rho_i\right)^{-1} u_{T+1}^2 \le 
    u_1^2 + 2\sigma^2 \left(\prod_{i=1}^{T}\rho_i\right)^{-1}
    \sum_{t=1}^T \petat^2 \left(\prod_{i=t+1}^{T}\rho_i\right) + 2\epsilon \left(\prod_{i=1}^{T}\rho_i\right)^{-1} \sum_{t=1}^T u_{t+1} \left(\prod_{i=t+1}^{T}\rho_i\right) 
\end{align*}
Simplify the above inequality for a generic $\tau \le T$, 
\begin{align*}    \left[\left(\prod_{i=1}^{\tau}\rho_i\right)^{-\frac{1}{2}} u_{\tau+1}\right]^2 \le 
    u_1^2 + 2\sigma^2 \sum_{t=1}^{\tau} \petat^2 \left(\prod_{i=1}^{t}\rho_i\right)^{-1} +
    \sum_{t=1}^{\tau} 2\epsilon \left(\prod_{i=1}^{t}\rho_i\right)^{-\frac{1}{2}}
    \left[\left(\prod_{i=1}^{t}\rho_i\right)^{-\frac{1}{2}} u_{t+1}\right]
\end{align*}
Let $v_{\tau} := \left(\prod_{i=1}^{\tau}\rho_i\right)^{-\frac{1}{2}} u_{\tau+1}$ and
$S_{\tau} := u_1^2 + 2\sigma^2 \sum_{t=1}^{\tau} \petat^2 \left(\prod_{i=1}^{t}\rho_i\right)^{-1}$. 
Let us also denote $\lambda_t := 2\epsilon \left(\prod_{i=1}^{t}\rho_i\right)^{-\frac{1}{2}}$. Observe that $S_0 = u_1^2 = v_0^2$ and $S_{\tau + 1} = S_{\tau} + 2\sigma^2 {\petat^{2}_{\tau+1}} \left(\prod_{i=1}^{\tau+1}\rho_i\right)^{-1}$. Therefore $S_{\tau}$ is an increasing sequence. Re-writing the previous inequality using the new variables leads to the following inequality:
\begin{align*}
    v_{\tau}^2 \le S_{\tau} + \sum_{t=1}^{\tau} \lambda_t v_t
\end{align*}
Using the result from Lemma~\ref{lem:slb_upperbound} we have:
\begin{align*}
    v_{\tau} &\le \frac{1}{2} \sum_{t=1}^{\tau} \lambda_t + 
    \left(S_{\tau} + \left(\frac{1}{2} \sum_{t=1}^{\tau} \lambda_t\right)^2\right)^{\frac{1}{2}}\\ &\le
    \sum_{t=1}^{\tau} \lambda_t + \sqrt{S_{\tau}} \tag{ using $\sqrt{a+b} \leq \sqrt{a} + \sqrt{b}$ for $a,b \geq 0$}
\end{align*}
Writing the inequality above using the original variables results in:
\begin{align*}
    \left(\prod_{i=1}^{\tau}\rho_i\right)^{-\frac{1}{2}} u_{\tau+1} \le \sum_{t=1}^{\tau} 2\epsilon \left(\prod_{i=1}^{t}\rho_i\right)^{-\frac{1}{2}} +
    \left(u_1^2 + 2\sigma^2 \sum_{t=1}^{\tau} \petat^2 \left(\prod_{i=1}^{t}\rho_i\right)^{-1}\right)^{\frac{1}{2}}\\
    u_{\tau+1} \le 2\epsilon \sum_{t=1}^{\tau} \left(\prod_{i=1}^{t}\rho_i\right)^{-\frac{1}{2}} \left(\prod_{i=1}^{\tau}\rho_i\right)^{\frac{1}{2}} + 
    u_1 \left(\prod_{i=1}^{\tau}\rho_i\right)^{\frac{1}{2}} +
    \sqrt{2}\sigma \left(\sum_{t=1}^{\tau} \petat^2 \left(\prod_{i=1}^{t}\rho_i\right)^{-1}\right)^{\frac{1}{2}} \left(\prod_{i=1}^{\tau}\rho_i\right)^{\frac{1}{2}}
\end{align*}
(a) \textbf{Constant step size}: Choosing a constant step size $\petat = \eta = \frac{1}{2L}$ implies $\rho_i = \rho = 1 - \mu\eta = 1 - \frac{\mu}{2L}$. Plugging this into the previous inequality leads to:
\begin{align*}
    u_{\tau + 1} &\le 2\epsilon \rho^{\frac{\tau}{2}} \sum_{t=1}^{\tau} \rho^{-\frac{t}{2}} + 
    u_1 \rho^{\frac{\tau}{2}} +
    \sqrt{2}\sigma \eta \, \rho^{\frac{\tau}{2}} \sqrt{\sum_{t=1}^{\tau}\rho^{-t}} \\
    & \le \frac{2\epsilon}{1 - \sqrt{\rho}} + 
    u_1 \rho^{\frac{\tau}{2}} + 
    \frac{\sqrt{2}\sigma \eta}{\sqrt{1 - \rho}} \tag{applying the formula for finite geometric series}  \\
\E \norm{z_{T+1} - \fopt} & \le \norm{z_{1} - \fopt} \left(1 - \frac{1}{2\kappa}\right)^{\frac{T}{2}} + \frac{\sigma}{\sqrt{\mu L}} + \frac{2\epsilon}{1 - \sqrt{1 - \frac{1}{2\kappa}}}\,.
\end{align*}
% }
% By applying Lemma~\ref{lem:slb_upperbound} with $v_t = u_t$ and $S_t = \left(\prod_{i=1}^{t}\rho_i\right) u^2_1 + 2\sigma^2 \sum_{j=1}^t \prod_{i=j+1}^{t}\rho_i \eta'^{2}_{j}$ and $\lambda_t = 2\epsilon \, \prod_{i=t+1}^{T}\rho_i$ we get 
% \begin{align*}
%     u_{T+1} &\leq \epsilon \sum_{t=1}^T \prod_{i=t+1}^{T}\rho_i + \left (S_T + \left( \epsilon \sum_{t=1}^T \prod_{i=t+1}^{T}\rho_i\right)^2 \right)^{1/2}\\
%     & \leq 2\epsilon \sum_{t=1}^T \prod_{i=t+1}^{T}\rho_i + \left (S_T  \right)^{1/2} \tag{ using $\sqrt{a+b} \leq \sqrt{a} + \sqrt{b}$ for $a,b \geq 0$} \\
%     \implies \E \norm{z_{T+1} - \fopt} & \leq \bigg( \left(\prod\nolimits_{t=1}^{T}\rho_t \right) \normsq{z_1 - \fopt}\\ &\quad+ 2\sigma^2 \sum\nolimits_{t=1}^T \prod\nolimits_{i=t+1}^{T}\rho_i \petat^2\bigg)^{1/2}\\  &+2\epsilon \sum\nolimits_{t=1}^T \prod\nolimits_{i=t+1}^{T} \rho_i  \, , 
% \end{align*}

% We can set $\petat=\frac{1}{2L}$ in the above equation and get the result. 
\newpage
(b) \textbf{Exponential step size}. 

    Starting from~\cref{eq:sub_exct_up} and using the proof from~\citet{vaswani2022towards}, we have 
    \begin{align*}
        \E \normsq{\bftt - \fopt}& \leq \E \normsq{\ft - \fopt} + 4\petatsq L\E \left\{\ell_t(\ft)-\ell_t(\fopt) -\langle \gradt{\fopt}, \ft - \fopt \rangle \right\}-2\petat\langle \ft - \fopt, \grad{\ft} \\
        &  \quad\quad \quad- \grad{\fopt} \rangle+ 2\petatsq \sigma^2\\
        & \leq \E \normsq{\ft - \fopt} + \frac{\alphatsq}{L} \E \left\{\ell_t(\ft)-\ell_t(\fopt) -\langle \gradt{\fopt}, \ft - \fopt \rangle \right\}-\frac{\alphat}{L}\langle \ft - \fopt, \grad{\ft}  \\
        &  \quad\quad \quad - \grad{\fopt} \rangle+ 2\petatsq \sigma^2 \tag{ using $\petat=\frac{\alphat}{2L}$}\\
        & \leq \E \normsq{\ft - \fopt} + \frac{\alphat}{L}  \left\{\ell(\ft)-\ell(\fopt) -\langle \grad{\fopt}, \ft - \fopt \rangle \right\}-\frac{\alphat}{L}\langle \ft - \fopt, \grad{\ft} - \grad{\fopt} \rangle+ 2\petatsq \sigma^2 \tag{ using $\alphat\leq 1$}\\
        &= \E \normsq{\ft - \fopt} + \frac{\alphat}{L}  \left\{\ell(\ft)-\ell(\fopt) -\langle \grad{\ft}, \ft - \fopt \rangle \right\}+ 2\petatsq \sigma^2\\
        & \leq \E \normsq{\ft - \fopt} - \frac{\mu \alphat}{2L}\E \normsq{\ft - \fopt}+ 2\petatsq \sigma^2 = \left(1 - \frac{1}{2\kappa}\alphat\right)\E \normsq{\ft - \fopt} + 2\petatsq \sigma^2 \tag{using strong convexity of $\ell$}
    \end{align*}
Combining the above with~\cref{lem:itr_bnd_fth} we get: 
\begin{align*}
    \E \normsq{\ftt -\fopt} &\leq \left(1 - \frac{1}{2\kappa}\alphat\right)\E \normsq{\ft - \fopt}+2\petatsq \sigma^2  + 2 \E [\epstt\norm {\ftt - \fopt}]  \\
    &\leq \exp{\left(- \frac{1}{2\kappa}\alphat\right)}\E \normsq{\ft - \fopt} +  \frac{\sigma^2}{2L^2}\alphatsq +2 \E [\epstt\norm {\ftt - \fopt}] \tag{$1 - x \leq \exp(-x)$}\\
    &\leq \exp{\left(- \frac{1}{2\kappa}\alphat\right)}\E \normsq{\ft - \fopt} + \frac{\sigma^2}{2L^2}\alphatsq +2 \epsilon \E [\norm {\ftt - \fopt}] \tag{$\epstt \leq \epsilon$}
\end{align*}

Unrolling the recursion starting from $t = 1$ to $T$, denoting $u_t := \E \norm{\ft - \fopt}$ and applying Jensen's inequality to deduce that that $u_{T+1}^2 \leq \E \normsq{\z_{T+1} - \fopt}$, we get that, 
\begin{align*}
    u_{T+1}^2 &\leq u_1^2 \, \exp\bigg( -\frac{1}{2\kappa}
    \, \sum_{t=1}^T \alphat \bigg) +
    \frac{ \sigma^2}{2L^2} \sum_{t=1}^T \alphatsq \exp\bigg( -\frac{1}{2\kappa} \sum_{i=t+1}^T \alpha_i\bigg) +2\epsilon \sum_{t=1}^T \exp\bigg( -\frac{1}{2\kappa} \sum_{i=t+1}^T \alpha_i\bigg) \, u_{t+1}
\end{align*}
By multiplying both sides by $\exp\bigg( \frac{1}{2\kappa} \sum_{t=1}^T \alphat \bigg)$ we have: 
\begin{align*}
\bigg(\exp\bigg( \frac{1}{4\kappa} \sum_{t=1}^T \alphat\bigg)   u_{T+1}\bigg)^2 &\leq u_1^2+
    \frac{ \sigma^2}{2L^2} \sum_{t=1}^T \alphatsq \exp\bigg( \frac{1}{2\kappa} \sum_{i=1}^t \alpha_i\bigg)+2\epsilon \sum_{t=1}^T \exp\bigg( \frac{1}{2\kappa} \sum_{i=1}^t \alpha_i\bigg)\, u_{t+1}. 
\end{align*}\\
Now let us define $v_\tau : =  \exp\bigg( \frac{1}{4\kappa} \sum_{t=1}^{\tau} \alphat\bigg)   u_{\tau+1}$, $S_{\tau} := u_1^2+
    \frac{ \sigma^2}{2L^2} \sum_{t=1}^{\tau} \alphatsq \exp\bigg( \frac{1}{2\kappa} \sum_{i=1}^t \alpha_i\bigg)$ and $\lambda_t := 2 \epsilon \exp\bigg( \frac{1}{4\kappa} \sum_{i=1}^{t} \alpha_i\bigg)$. Note that $S_{\tau}$ is increasing and $S_0= u_1^2 = v_0^2 \geq 0$ and $\lambda_t > 0$, $v_{\tau} > 0$. By applying Lemma~\ref{lem:slb_upperbound}, similar to the fixed step-size case, for $\tau=T$, we get: 
    \begin{align*}
        \exp\bigg( \frac{1}{4\kappa} \sum_{t=1}^T \alphat\bigg)   u_{T+1} \leq \bigg( u_1^2+
    \frac{ \sigma^2}{2L^2} \sum_{t=1}^{T} \alphatsq \exp\bigg( \frac{1}{2\kappa} \sum_{i=1}^t \alpha_i\bigg)\bigg)^{\nicefrac{1}{2}} + 2 \epsilon \sum_{t=1}^T \exp\bigg( \frac{1}{4\kappa} \sum_{i=1}^{t} \alpha_i\bigg)
    \end{align*}\\
    Multiplying both sides by $\exp\bigg( -\frac{1}{4\kappa} \sum_{t=1}^T \alphat\bigg)$ gives us 
    \begin{align*}
     u_{T+1} &\leq \bigg( \exp\bigg( -\frac{1}{2\kappa} \underbrace{\sum_{t=1}^T \alphat}_{:= A} \bigg)u_1^2+
    \frac{ \sigma^2}{2L^2} \underbrace{\sum_{t=1}^{T} \alphatsq \exp\bigg( \frac{1}{2\kappa} \sum_{i=t+1}^T \alpha_i\bigg) }_{:= B_T}\bigg)^{\nicefrac{1}{2}} + 2 \epsilon \underbrace{\sum_{t=1}^T \exp\bigg( -\frac{1}{4\kappa} \sum_{i=t+1}^T \alpha_i\bigg)}_{C_T}\\
    & =  \left (  u_1^2\exp\bigg( -\frac{1}{2\kappa} A \bigg) + \frac{ \sigma^2}{2L^2} B_T\right)^{1/2} + 2\epsilon \, C_T 
\end{align*}

%By applying Lemma~\ref{lem:slb_upperbound} with $v_t = u_t$ and $S_t = u_1^2\exp\bigg( -\frac{1}{2\kappa} A \bigg) + \frac{ \sigma^2}{2L^2} B_t$ and $\lambda_t = 2\exp\bigg( -\frac{1}{2\kappa}\sum_{i=t+1}^T \alpha^i\bigg) \epsilon$, similar to before, we get 
%\begin{align*}
%    u_{T+1} & \leq 2\epsilon \, C_T + \left (S_T  \right)^{1/2}\\
%    & = 2\epsilon \, C_T + \left (  u_1^2\exp\bigg( -\frac{1}{2\kappa} A \bigg) + \frac{ \sigma^2}{2L^2} B_T\right)^{1/2}
%\end{align*}

To bound $A$, we use~\cref{lemma:A-bound} and get 
\begin{align*}  
    u_1^2 \, \exp\bigg( -\frac{1}{2\kappa} A\bigg)\leq \normsq{\z_1 -\fopt} \underbrace{\exp\left( \frac{1}{2\kappa} \, \frac{2\beta}{\ln(\nicefrac{T}{\beta})}\right)}_{: = c_1^2}\exp\left( - \frac{T}{2\kappa} \frac{\alpha}{\ln(\nicefrac{T}{\beta})}\right)
\end{align*}
To bound $B_T$ we use~\cref{lemma:B-bound}
\begin{align*}
    B_T \leq \frac{16 \kappa^2 c_1^2 (\ln(\nicefrac{T}{\beta}))^2}{e^2 \alpha^2 T}
\end{align*}
Finally using~\cref{lemma:C-bound} to bound $C_T$ we get 

\begin{align*}
    \E \norm{\z_{T+1} -\fopt} &\leq \left(c_1^2\exp\left( - \frac{T}{2\kappa} \frac{\alpha}{\ln(\nicefrac{T}{\beta})}\right)\normsq{\z_1 -\fopt} + \frac{16 \kappa^2 c_1^2 (\ln(\nicefrac{T}{\beta}))^2}{2L^2e^2 \alpha^2 T} \sigma^2\right)^{1/2}+ 2\epsilon\underbrace{\exp \left(\frac{\beta \ln (T)}{2\kappa\ln(\nicefrac{T}{\beta}) } \right)}_{c_2} \\
    \implies \E \norm{\z_{T+1} -\fopt} & \leq c_1 \, \exp\left( - \frac{T}{4\kappa} \frac{\alpha}{\ln(\nicefrac{T}{\beta})}\right)\norm{\z_1 -\fopt} + \frac{4 \kappa c_1 (\ln(\nicefrac{T}{\beta}))}{L e  \alpha \sqrt{T}} \sigma + 2 \epsilon \, c_2 \,.
\end{align*}

\end{proof}

\newpage
\subsection{Controlling the Projection Error}
\label{app:proj-err-cnt}
Let us recall the following definitions for the theoretical analysis:
\begin{align*}
\tthtt & := \argmin_{\theta} \tilde{\gz}_t(\theta) \text{;} \quad \tilde{\gz}_t(\theta) := \ell_{i_t}(\zt) + \frac{\partial \ell_{i_t}(\zt)}{\partial z^{i_t}} \left[f_{i_t}(\theta) - \zt^{i_t} \right] + \frac{1}{2 \etat} \left[f_{i_t}(\theta) - \zt^{i_t} \right]^{2} \quad \text{;} \quad \tftt = f(\thtt) \\
\quad \bthtt & := \argmin_{\theta} \tilde{q}_t(\theta) \quad \text{;} \quad \tilde{q}_t(\theta) := \ell_{i_t}(\zt) + \frac{\partial \ell_{i_t}(\zt)}{\partial z^{i_t}} \left[f_{i_t}(\theta) - \zt^{i_t} \right] +  \frac{1}{2 \petat} \normsq{f(\theta) - \zt}  \quad \text{;} \quad \bftt = f(\bthtt) \\
\quad \pthtt & := \argmin_{\theta} g_t(\theta) \quad \text{;} \quad g_t(\theta) := \frac{1}{n} \left[ \left[\sum_{i} \ell_i(\zt) + \langle \nabla \ell_i(\zt), f(\theta) - \zt  \rangle \right] + \frac{1}{2 \etat} \normsq{f(\theta) - \zt} \right] \\
\quad \ftt & = f(\thtt) \,,
\end{align*}
where $\thtt$ is obtained by running $m_t$ iterations of GD on $\tilde{\gz}_t(\theta)$. We will use these definitions to prove the following proposition to control the projection error in each iteration. 
\restateprojerr*
\begin{proof}
Since we obtain $\thtt$ by minimizing $\tilde{g}_t(\theta)$ using $m_t$ iterations of GD starting from $\tht$, using the convergence guarantees of gradient descent~\citep{nesterov2003introductory}, 
\begin{align*}
\normsq{\tthtt - \thtt} & \leq \exp\left(\nicefrac{-m_t}{\kappa_g} \right) \normsq{\tthtt - \tht} \\
\normsq{\thtt - \bthtt} & =  \normsq{\thtt - \tthtt + \tthtt - \bthtt} \leq 2 \normsq{\thtt - \tthtt} + 2 \normsq{\tthtt - \bthtt} \tag{$\normsq{a+b} \leq 2\normsq{a} + 2\normsq{b}$} \\
& \leq 2 \exp\left(\nicefrac{-m_t}{\kappa_g} \right) \normsq{\tthtt - \tht}  + 2 \normsq{\tthtt - \bthtt} \\
& \leq \frac{4}{\mu_g} \exp\left(\nicefrac{-m_t}{\kappa_g} \right) \, [\tilde{g}_t(\tht) - \tilde{g}_t(\tthtt)] + 2 \normsq{\tthtt - \bthtt}
\end{align*}
Taking expectation w.r.t $i_t$,
\begin{align*}
\E_{i_t} \normsq{\thtt - \bthtt} & \leq \frac{4}{\mu_g} \left[
\exp\left(\nicefrac{-m_t}{\kappa_g} \right) \, \E_{i_t} [\tilde{g}_t(\tht) - \tilde{g}_t(\tthtt)] \right] + 2 \E \normsq{\tthtt - \bthtt} 
\end{align*}
Let us first simplify $\E \normsq{\tthtt - \bthtt}$. 
\begin{align*}
\E \normsq{\tthtt - \bthtt} & = \E \normsq{\tthtt - \pthtt + \pthtt - \bthtt} \\
& \leq 2 \E \normsq{\tthtt - \pthtt} + 2 \E \normsq{\pthtt - \bthtt} \\
& \leq \frac{4}{\mu_g} \E [\tilde{g}_t(\pthtt) - \tilde{g}_t(\tthtt)] + \frac{4}{\mu_q} \E [\tilde{q}_t(\pthtt) - \tilde{q}_t(\bthtt)] \\
& = \frac{4}{\mu_g} \left[\min \left\{\E_{i_t} \left[\tilde{g}_t \right] \right\} - \E_{i_t} \left[\min \left\{\tilde{g}_t \right\} \right] \right]+ \frac{4}{\mu_q} \left[\min \left\{\E_{i_t} \left[\tilde{q}_t \right] \right\} - \E_{i_t} \left[\min \left\{\tilde{q}_t \right\} \right] \right] \tag{Since $\E[\tilde{q}] = \E[\tilde{g}] = g$} 
\end{align*}
\newpage
\begin{align*}
2 \E \normsq{\tthtt - \bthtt} & \leq \underbrace{\frac{8}{\min\{\mu_g, \mu_q\}} \, \left(\left[\min \left\{\E_{i_t} \left[\tilde{g}_t \right] \right\} - \E_{i_t} \left[\min \left\{\tilde{g}_t \right\} \right] \right] + \left[\min \left\{\E_{i_t} \left[\tilde{q}_t \right] \right\} - \E_{i_t} \left[\min \left\{\tilde{q}_t \right\} \right] \right] \right)}_{:= \zeta_t^2} \\
\implies 2 \E \normsq{\tthtt - \bthtt}  & \leq \zeta_t^2  
\end{align*}
Using the above relation, 
\begin{align*}
\E_{i_t} \normsq{\thtt - \bthtt} & \leq \frac{4}{\mu_g} \left[
\exp\left(\nicefrac{-m_t}{\kappa_g} \right) \, \E_{i_t} [\tilde{g}_t(\tht) - \tilde{g}_t(\tthtt)] \right] + \zeta^2_t \\
& \leq \frac{4}{\mu_g} \left[
\exp\left(\nicefrac{-m_t}{\kappa_g} \right) \, \E_{i_t} [h_t(\tht) - h_t(\tthtt)]   \right] + \zeta_t^2  \tag{Since $\tilde{g}_t(\tht)  = h_t(\tht)$ and $\tilde{g}_t(\theta) \geq h_t(\theta)$ for all $\theta$} \\
& = \frac{4}{\mu_g} \left[
\exp\left(\nicefrac{-m_t}{\kappa_g} \right) \, \E_{i_t} [h_t(\tht) - h_t^* + h_t^* - h_t(\tthtt)]  \right] + \zeta_t^2 \tag{$h_t^* := \min_{\theta} h_t(\theta)$} \\
& \leq \frac{4}{\mu_g} \left[
\exp\left(\nicefrac{-m_t}{\kappa_g} \right) \, \E_{i_t} [h_t(\tht) - h_t^*]  + \zeta_t^2 \right] \tag{Since $h_t^* \leq h_t(\theta)$ for all $\theta$} \\
& = \frac{4}{\mu_g} \left[
\exp\left(\nicefrac{-m_t}{\kappa_g} \right) \, \left[\E_{i_t} [h_t(\tht) - h_t(\theta^*)] + \E_{i_t}[h_t(\theta^*) - h_t^*] \right]  \right] + \zeta_t^2 \\
& =  \frac{4}{\mu_g} \left[
\exp\left(\nicefrac{-m_t}{\kappa_g} \right) \, \left[\E_{i_t}[\ell_t(\z_t) - \ell_t(\z^*)] + \E_{i_t}[\ell_t(\z^*) - \ell_t^*] \right] \right] + \zeta_t^2\tag{Since $h(\theta) = \ell(f(\theta)) = \ell(\z)$} \\
\E_{i_t}[\normsq{\thtt - \bthtt}] & \leq  \frac{4}{\mu_g} \left[
\exp\left(\nicefrac{-m_t}{\kappa_g} \right) \, \left[\E_{i_t}[\ell_t(\z_t) - \ell_t(\z^*)] + \E_{i_t}[\ell_t(\z^*) - \ell_t^*] \right]  \right] + \zeta_t^2 \\
& \leq \frac{4}{\mu_g} \left[
\exp\left(\nicefrac{-m_t}{\kappa_g} \right) \, \left[[\ell(\z_t) - \ell(\z^*)] + \underbrace{\E_{i_t}[\ell_t(\z^*) - \ell_t^*]}_{:= \sigma^2_z} \right]  \right] + \zeta_t^2
\tag{Since both $\z_t$ and $\z^*$ are independent of the randomness in $\ell_t$ and $\E_{i_t}[\ell_t] = \ell$}  \\
\implies \E[\normsq{\thtt - \bthtt}]  & \leq \frac{4}{\mu_g} \left[
\exp\left(\nicefrac{-m_t}{\kappa_g} \right) \, \left[\ell(\z_t) - \ell(\z^*) + \sigma^2_z \right] \right] + \zeta_t^2
\end{align*}
Now, we will bound $\E[\epsttsq]$ by using the above inequality and the Lipschitzness of $f$.  
\begin{align*}
\E[\epsttsq] &= \normsq{\ftt - \bftt} = \normsq{f(\thtt) - f(\bthtt)} \leq L_f^2 \normsq{\thtt - \bthtt} \tag{Since $f$ is $L_f$-Lipschitz} \\
\implies \E[\epsttsq] & \leq \frac{4 L_f^2}{\mu_g} \left[\exp\left(\nicefrac{-m_t}{\kappa_g} \right) \, \left[\ell(\z_t) - \ell(\z^*) + \sigma^2_{\z} \right] \right] + L_f^2 \, \zeta_t^2  \\
\intertext{Taking expectation w.r.t the randomness from iterations $k = 0$ to $t$,}
\E[\epsttsq] & \leq \frac{4 L_f^2}{\mu_g} \left[\exp\left(\nicefrac{-m_t}{\kappa_g} \right) \, \left[\E[\ell(\z_t) - \ell(\z^*)] + \sigma^2_{\z} \right]  \right] + L_f^2 \, \zeta_t^2  
\end{align*}
\end{proof}

\newpage
\subsection{Example to show the necessity of $\zeta^2$ term}
\label{app:counter-example}
\restatecounter*
\begin{proof}
Let us first compute $\theta^* := \argmin h(\theta)$. 
\begin{align*}
h(\theta) &= \frac{1}{4}(\theta - 1)^2 + \frac{1}{4}\left(2\theta + \frac{1}{2}\right)^2 = \frac{5}{4} \theta^2  + \frac{1}{4} + \frac{1}{16} \Rightarrow \theta^* = 0
\end{align*}
For $h_1$, 
\begin{align*}
\ell_1(\z) & = \frac{1}{2} (z - 1)^2 \quad \text{where } z = \theta \\
\tilde{\gz}_{t}(\theta) & := \frac{1}{2} (\tht - 1)^2 + (\tht - 1) \, (\tht - \theta) + \frac{1}{2 \etat} (\theta - \tht)^2 \\
\intertext{If $m_t = \infty$, $\SSO/$ will minimize $\tilde{\gz}_t$ exactly. Since $\nabla \tilde{\gz}_t(\thtt) = 0$,}
\implies \frac{1}{\etat} (\thtt - \tht) & = - (\tht - 1) \implies \thtt = \tht - \etat \, (\tht - 1) \implies \thtt = \tht - \etat \nabla h_1(\tht)
\end{align*}
Similarly, for $h_2$, 
\begin{align*}
\ell_2(\z) & = \frac{1}{2} (z + \nicefrac{1}{2})^2 \quad \text{where } z = 2 \theta \\
\tilde{\gz}_{t}(\theta) & := \frac{1}{2} (2 \tht + \nicefrac{1}{2})^2 + (2 \tht + \nicefrac{1}{2}) \, (2 \tht - 2 \theta) + \frac{1}{2 \etat} (2 \theta - 2 \tht)^2 \\
\intertext{If $m_t = \infty$, $\SSO/$ will minimize $\tilde{\gz}_t$ exactly. Since $\nabla \tilde{\gz}_t(\thtt) = 0$,}
\implies \frac{4}{\etat} (\thtt - \tht) & = - (4 \tht + 1) \implies \thtt = \tht - \etat \, (\tht + \nicefrac{1}{4}) \implies \thtt = \tht - \frac{\etat}{4} \, \nabla h_2(\tht)
\end{align*}
If $i_t = 1$ and $\eta = c \, \alphat$
\begin{align*}
\thtt = \tht - c \, \alphat \, (\tht - 1) = c \, \alphat + (1 - c \alphat) \tht
\end{align*}
If $i_k = 2$,
\begin{align*}
\thtt = \tht - c \, \alphat \frac{2}{4}(2 \tht + \frac{1}{2}) = (1 - c \, \alphat) \tht - \frac{1}{4} c \, \alphat
\end{align*}
Then
\begin{align*}
\E \thtt = (1- c \, \alphat) \tht + \frac{1}{2} c \, \alphat - \frac{1}{8} c \, \alphat = (1- c \, \alphat) \tht + \frac{3}{8} c \, \alphat
\end{align*}
and
\begin{align*}
\E \theta_T = \E (\theta_T - \theta^*) = (\theta_1 - \theta^*) \prod_{t=1}^T (1 - c \, \alphat) + \frac{3}{8} \sum_{t=1}^T 2(1-c)\alphat \prod_{i=t+1}^T (1 - c \, \alpha_i)
\end{align*}
Using~\cref{lem:sum-prod-eq} and the fact that $c \, \alphat \leq 1$ for all $t$, we have that if $\theta_1 - \theta^* = \theta_1 > 0$, then, 
\begin{align*}
\E (\theta_T - \theta^*) \geq \min\left(\theta_1, \frac{3}{8}\right).  
\end{align*}
\end{proof}

\subsection{Helper Lemmas}
The proofs of~\cref{lem:ineq1},~\cref{lemma:A-bound}, and~\cref{lemma:B-bound} can be found in~\citep{vaswani2022towards}.  

\begin{lemma}
\label{lem:ineq1}
For all $x>1$,
\begin{align*}
    \frac{1}{x - 1} \leq \frac{2}{\ln(x)}
\end{align*}
\end{lemma}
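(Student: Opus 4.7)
The plan is to reduce the claim to the elementary inequality $\ln(x) \leq x - 1$, valid for all $x > 0$ (with equality iff $x = 1$). Since we only care about $x > 1$, both $x - 1$ and $\ln(x)$ are strictly positive, so all manipulations below stay well-defined and preserve the inequality direction.

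First I would recall (or quickly derive, via concavity of $\ln$ at $x = 1$, so that the tangent line $y = x - 1$ lies above the graph) the bound $\ln(x) \leq x - 1$. Next, since $1 \leq 2$, this upgrades trivially to $\ln(x) \leq 2(x-1)$ for $x > 1$. Dividing both sides by the positive quantity $(x-1)\ln(x)$ then yields the desired inequality
\begin{equation*}
  \frac{1}{x-1} \;\leq\; \frac{2}{\ln(x)}.
\end{equation*}

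There is no real obstacle here: the factor of $2$ on the right-hand side gives more than enough slack over the tight bound $\ln(x)/(x-1) \leq 1$. The only thing worth flagging is the need to keep track of signs — the division step requires $x > 1$ so that both $x-1 > 0$ and $\ln(x) > 0$, which is exactly the hypothesis of the lemma. No case analysis on large versus small $x$ is needed.
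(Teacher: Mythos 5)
Your proof is correct and follows essentially the same route as the paper: both reduce the claim to $\ln(x) \leq 2(x-1)$ for $x>1$ and then clear denominators, with the only difference being that you invoke the standard tangent-line bound $\ln(x) \leq x-1$ while the paper re-derives the (weaker) inequality $\ln(x) \leq 2x-2$ from scratch via a monotonicity argument. The sign bookkeeping you flag is exactly what is needed, so there is no gap.
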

\begin{proof}
For $x > 1$, we have
\begin{align*}
    \frac{1}{x - 1} \leq \frac{2}{\ln(x)} &\iff \ln(x) < 2x - 2
\end{align*}
Define $f(x) = 2x - 2 - \ln(x)$. We have $f'(x) = 2 - \frac{1}{x}$. Thus for $x \geq 1$, we have $f'(x) > 0$ so $f$ is increasing on $[1, \infty)$. Moreover we have $f(1) = 2 - 2 - \ln(1) = 0$ which shows that $f(x) \geq 0$ for all $x > 1$ and ends the proof.
\end{proof}

\begin{lemma}
\label{lem:ineq2}
For all $x, \gamma >0$,
\begin{align*}
    \exp(-x) \leq \left( \frac{\gamma}{ex }\right)^\gamma
\end{align*}
\end{lemma}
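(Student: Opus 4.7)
The plan is to reduce the claimed two-variable inequality to a one-variable optimization problem in $x$ with $\gamma > 0$ treated as a fixed parameter. Multiplying both sides of $\exp(-x) \leq (\gamma/(ex))^\gamma$ by $x^\gamma > 0$ and using $(\gamma/e)^\gamma = \gamma^\gamma e^{-\gamma}$ rewrites the claim as $x^\gamma e^{-x} \leq \gamma^\gamma e^{-\gamma}$, so it suffices to show that the function $\phi(x) := x^\gamma e^{-x}$ on $(0,\infty)$ attains its global maximum at $x = \gamma$.

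First I would differentiate $\phi$ directly: $\phi'(x) = x^{\gamma-1} e^{-x}\,(\gamma - x)$. Since $x^{\gamma-1} e^{-x} > 0$ on $(0,\infty)$, the sign of $\phi'$ matches that of $\gamma - x$, so $\phi$ is strictly increasing on $(0,\gamma)$ and strictly decreasing on $(\gamma,\infty)$. Hence $\phi$ has a unique global maximum at $x = \gamma$, with value $\phi(\gamma) = \gamma^\gamma e^{-\gamma}$. Dividing by $x^\gamma$ recovers the advertised bound, valid for every $x > 0$.

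If a derivative-free proof is preferred, I would instead take logarithms: the inequality is equivalent to $\gamma \ln x - x \leq \gamma \ln \gamma - \gamma$, i.e., to showing that $f(x) := \gamma \ln x - x$ is bounded by its value at $x = \gamma$. Strict concavity of $f$ (since $f''(x) = -\gamma/x^2 < 0$) together with $f'(\gamma) = 0$ identifies $\gamma$ as the unique maximizer and yields the same conclusion. Both routes amount to the same standard calculus fact, and each handles every $x,\gamma > 0$ without case analysis because both sides of the original inequality are positive.

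There is essentially no main obstacle here: the lemma is a routine single-variable extremization, and the only minor point to verify is that the rearrangement leading to $\phi(x) \leq \phi(\gamma)$ is reversible on the positive reals, which it is because $x^\gamma$ is strictly positive.
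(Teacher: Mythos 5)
Your proof is correct and takes essentially the same route as the paper: both reduce the lemma to a single-variable extremization whose unique critical point is at $x = \gamma$, the only difference being that you fix $\gamma$ and optimize over $x$ (via $\phi(x) = x^\gamma e^{-x}$), whereas the paper fixes $x$ and differentiates $(\gamma/(ex))^\gamma$ with respect to $\gamma$. Your variant is, if anything, slightly cleaner, since $\phi'(x) = x^{\gamma-1}e^{-x}(\gamma - x)$ avoids the $\gamma\ln\gamma$ bookkeeping that arises when differentiating in $\gamma$.
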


\begin{proof}
Let $x > 0$. Define $f(\gamma) = \left( \frac{\gamma}{ex }\right)^\gamma - \exp(-x)$. We have
\begin{align*}
    f(\gamma) = \exp\left( \gamma\ln(\gamma) - \gamma\ln(ex)\right) - \exp(-x)
\end{align*}
and
\begin{align*}
    f'(\gamma) = \left( \gamma \cdot \frac{1}{\gamma} + \ln(\gamma) - \ln(ex)\right) \exp\left( \gamma\ln(\gamma) - \gamma\ln(ex)\right)
\end{align*}
Thus
\begin{align*}
    f'(\gamma) \geq 0 &\iff 1 + \ln(\gamma) - \ln(ex) \geq 0 \iff \gamma \geq \exp\left( \ln(ex) - 1\right) = x
\end{align*}
So $f$ is decreasing on $(0, x]$ and increasing on $[x, \infty)$. Moreover,
\begin{align*}
    f(x) = \left( \frac{x}{ex}\right)^x - \exp(-x) = \left(\frac{1}{e}\right)^x - \exp(-x) = 0
\end{align*}
and thus $f(\gamma) \geq 0$ for all $\gamma > 0$ which proves the lemma.
\end{proof}

\begin{lemma}
\label{lemma:A-bound}
Assuming $\alpha < 1$ we have 
\begin{align*}
A := \sum_{t=1}^T \alpha^t & \geq \frac{\alpha T}{\ln(\nicefrac{T}{\beta})} - \frac{2\beta}{\ln(\nicefrac{T}{\beta})}     
\end{align*}
\end{lemma}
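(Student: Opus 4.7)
The plan is to compute the geometric sum in closed form and then apply an elementary logarithmic inequality to handle the $1-\alpha$ denominator. Recall from the setting of Theorem~\ref{prop:gd_sc_bnd}(b) that $\alpha = (\beta/T)^{1/T}$, so that $\alpha^T = \beta/T$ and $\ln \alpha = -\ln(T/\beta)/T$.

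First, I would sum the geometric series:
\begin{align*}
A \;=\; \sum_{t=1}^T \alpha^t \;=\; \alpha \, \frac{1-\alpha^T}{1-\alpha} \;=\; \alpha \, \frac{1 - \beta/T}{1-\alpha}.
\end{align*}
Next, I would lower-bound the factor $\tfrac{1}{1-\alpha}$. Using the standard inequality $1 - e^{-x} \le x$ for $x \ge 0$, applied to $x = -\ln \alpha = \ln(T/\beta)/T$, we get $1 - \alpha = 1 - e^{\ln\alpha} \le -\ln\alpha = \ln(T/\beta)/T$. Consequently,
\begin{align*}
\frac{1}{1-\alpha} \;\ge\; \frac{T}{\ln(T/\beta)}.
\end{align*}

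Combining the two displays yields
\begin{align*}
A \;\ge\; \frac{\alpha\, T\, (1 - \beta/T)}{\ln(T/\beta)} \;=\; \frac{\alpha T - \alpha\beta}{\ln(T/\beta)}.
\end{align*}
Finally, since $\alpha < 1$ we have $\alpha\beta < \beta < 2\beta$, so $-\alpha\beta \ge -2\beta$, giving the desired bound
\begin{align*}
A \;\ge\; \frac{\alpha T}{\ln(T/\beta)} - \frac{2\beta}{\ln(T/\beta)}.
\end{align*}

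There is no real obstacle here; this is a routine manipulation of the closed-form geometric sum together with the elementary bound $1-e^{-x}\le x$. The only care needed is to verify the parameter regime (in particular $T > \beta$ so that $\ln(T/\beta) > 0$ and the inequalities point the right way), and to be slightly wasteful at the last step by replacing $\alpha\beta$ with $2\beta$ in order to obtain the clean form stated in the lemma.
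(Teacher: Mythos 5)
Your proof is correct, and it takes a mildly but genuinely different route from the paper's. The paper writes $A = \frac{\alpha}{1-\alpha} - \frac{\alpha^{T+1}}{1-\alpha}$ and bounds the two pieces separately: the first from below via $1-\alpha \le \ln(\nicefrac{1}{\alpha})$, and the second from above via the auxiliary inequality $\frac{1}{x-1} \le \frac{2}{\ln x}$ (the paper's Lemma~\ref{lem:ineq1}, applied at $x = \nicefrac{1}{\alpha}$), which is where the factor $2$ in front of $\beta$ originates. You instead keep the closed form $A = \alpha\,\frac{1-\nicefrac{\beta}{T}}{1-\alpha}$ intact and apply the single bound $1-\alpha \le \ln(\nicefrac{1}{\alpha}) = \frac{1}{T}\ln(\nicefrac{T}{\beta})$ once, which immediately gives $A \ge \frac{\alpha T - \alpha\beta}{\ln(\nicefrac{T}{\beta})}$; this is slightly tighter than the lemma's statement (you only introduce the factor $2$ cosmetically at the end via $\alpha\beta \le 2\beta$), and it dispenses with Lemma~\ref{lem:ineq1} entirely. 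Both arguments need the same parameter-regime check ($T > \beta$ so that $1-\alpha > 0$ and $\ln(\nicefrac{T}{\beta}) > 0$), which you correctly flag. The trade-off is negligible in the downstream theorem since the paper reuses Lemma~\ref{lem:ineq1} elsewhere (in Lemmas~\ref{lemma:B-bound} and~\ref{lemma:C-bound}), but your version of this particular step is cleaner and marginally sharper.
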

\begin{proof}
\begin{align*}
    \sum_{t=1}^T \alpha^t = \frac{\alpha - \alpha^{T+1}}{ 1- \alpha} = \frac{\alpha}{1 - \alpha} - \frac{\alpha^{T+1}}{1 - \alpha}
\end{align*}
We have
\begin{align}
    \label{ineq:lemma-A-bound}
    \frac{\alpha^{T+1}}{1 - \alpha} &= \frac{\alpha \beta}{T(1 - \alpha)} = \frac{\beta}{T} \cdot \frac{1}{\nicefrac{1}{\alpha} - 1} \leq \frac{\beta}{T} \cdot \frac{2}{\ln(\nicefrac{1}{\alpha})} = \frac{\beta}{T} \cdot \frac{2}{\frac{1}{T}\ln(\nicefrac{T}{\beta})} = \frac{2\beta}{\ln(\nicefrac{T}{\beta})}
    \end{align}
where in the inequality we used Lemma \ref{lem:ineq1} and the fact that $\nicefrac{1}{\alpha} > 1$. Plugging back into $A$ we get, 
\begin{align*}
    A &\geq \frac{\alpha}{1- \alpha} - \frac{2\beta}{\ln(\nicefrac{T}{\beta})}\\
    &\geq \frac{\alpha }{\ln(\nicefrac{1}{\alpha})} - \frac{2\beta}{\ln(\nicefrac{T}{\beta})} \tag{$1-x \leq \ln(\frac{1}{x})$}\\
    &= \frac{\alpha T}{\ln(\nicefrac{T}{\beta})} - \frac{2\beta}{\ln(\nicefrac{T}{\beta})}
\end{align*}
\end{proof}

\begin{lemma}
For $\alpha = \left( \frac{\beta}{T}\right)^{1/T}$ and any $\kappa > 0$,
\begin{align*}
 \sum_{t=1}^T \alpha^{2t} \exp \left(-\frac{1}{2\kappa} \sum_{i=t+1}^T \alpha^i \right) & \leq \frac{16 \kappa^2 c_2 (\ln(\nicefrac{T}{\beta}))^2}{e^2 \alpha^2 T} 
\end{align*}
where $c_2= \exp\left( \frac{1}{2\kappa} \frac{2\beta}{\ln(\nicefrac{T}{\beta}}\right)$
\label{lemma:B-bound}
\end{lemma}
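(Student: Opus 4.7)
The plan is to turn the inner exponent into something of the form $\exp(-x_t)$ with $x_t$ proportional to $\alpha^{t+1}$, then apply \cref{lem:ineq2} with exponent $\gamma=2$ to cancel the $\alpha^{2t}$ prefactor and bound each summand by a $t$-independent constant.

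First I would lower-bound the inner sum $\sum_{i=t+1}^T \alpha^i$ exactly as in \cref{lemma:A-bound}. Using the closed form for the geometric series, $\sum_{i=t+1}^T\alpha^i = \frac{\alpha^{t+1}}{1-\alpha} - \frac{\alpha^{T+1}}{1-\alpha}$, and invoking the bound $\frac{\alpha^{T+1}}{1-\alpha}\leq \frac{2\beta}{\ln(T/\beta)}$ from inequality \eqref{ineq:lemma-A-bound}, I would obtain $\sum_{i=t+1}^T\alpha^i \geq \frac{\alpha^{t+1}}{1-\alpha} - \frac{2\beta}{\ln(T/\beta)}$. Exponentiating gives
\[
\exp\!\Big(-\tfrac{1}{2\kappa}\textstyle\sum_{i=t+1}^T\alpha^i\Big) \leq c_2 \exp\!\Big(-\tfrac{\alpha^{t+1}}{2\kappa(1-\alpha)}\Big),
\]
which pulls the factor $c_2$ out of the whole sum and leaves an expression involving only $\alpha^{2t}\exp(-x_t)$ with $x_t := \alpha^{t+1}/(2\kappa(1-\alpha))$.

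Next I would apply \cref{lem:ineq2} with $\gamma=2$, i.e.\ $\exp(-x)\leq 4/(e^2 x^2)$, to each term. Since $x_t^2 = \alpha^{2(t+1)}/(2\kappa(1-\alpha))^2$, the factor $\alpha^{2t}$ cancels against $\alpha^{2(t+1)}$ up to a $1/\alpha^2$, yielding the key pointwise bound
\[
\alpha^{2t}\exp(-x_t) \leq \frac{16\kappa^2(1-\alpha)^2}{e^2\alpha^2},
\]
which is \emph{independent of $t$}. Summing this trivial bound over $t=1,\ldots,T$ produces $T\cdot 16\kappa^2(1-\alpha)^2/(e^2\alpha^2)$. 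Finally I would use $1-\alpha \leq -\ln\alpha = \ln(T/\beta)/T$ (from the standard inequality $1-x\leq -\ln x$ applied at $x=\alpha=(\beta/T)^{1/T}$) to replace $(1-\alpha)^2$ by $(\ln(T/\beta))^2/T^2$, giving the advertised $\frac{16\kappa^2 c_2(\ln(T/\beta))^2}{e^2\alpha^2 T}$ after reinstating the $c_2$ factor.

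The only step that requires any care is choosing the right $\gamma$ in \cref{lem:ineq2}: one needs exactly $\gamma=2$ so that the $x_t^{-\gamma}$ factor, which carries $\alpha^{-2(t+1)}$, kills the $\alpha^{2t}$ prefactor and leaves a single $1/\alpha^2$. With a different $\gamma$ the $t$-dependence would not cancel and one would inherit an exponential-in-$T$ sum rather than a finite one. Every other step is a mechanical consequence of the already-proved helper lemmas, so there is no substantive obstacle once the right $\gamma$ is identified.
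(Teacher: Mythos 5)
Your proposal is correct and follows essentially the same route as the paper's proof: both lower-bound the inner geometric sum via the $\frac{\alpha^{T+1}}{1-\alpha}\le\frac{2\beta}{\ln(T/\beta)}$ estimate to extract $c_2$, apply \cref{lem:ineq2} with $\gamma=2$ so that the $\alpha^{2t}$ prefactor cancels to a $t$-independent bound, and finish with $1-\alpha\le\ln(1/\alpha)=\ln(T/\beta)/T$. Your observation that $\gamma=2$ is the unique choice making the cancellation work is a nice piece of motivation, but the argument itself matches the paper step for step.
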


\begin{proof}
First, observe that, 
\begin{align*}
\sum_{i=t+1}^T \alpha^i = \frac{\alpha^{t+1} - \alpha^{T+1}}{1 - \alpha} 
\end{align*}
We have
\begin{align*}
    \frac{\alpha^{T+1}}{1 - \alpha} &= \frac{\alpha \beta}{T(1 - \alpha)} = \frac{\beta}{T} \cdot \frac{1}{\nicefrac{1}{\alpha} - 1} \leq \frac{\beta}{T} \cdot \frac{2}{\ln(\nicefrac{1}{\alpha})} = \frac{\beta}{T} \cdot \frac{2}{\frac{1}{T}\ln(\nicefrac{T}{\beta})} = \frac{2\beta}{\ln(\nicefrac{T}{\beta})}
    \end{align*}
where in the inequality we used~\ref{lem:ineq1} and the fact that $\nicefrac{1}{\alpha} > 1$. These relations imply that, 
\begin{align*}
&\sum_{i=t+1}^T \alpha^i \geq \frac{\alpha^{t+1}}{1 - \alpha} - \frac{2\beta}{\ln(\nicefrac{T}{\beta})} \\
&\implies 
\exp\left( - \frac{1}{2\kappa} \sum_{i=t+1}^T \alpha^i \right) \leq \exp\left( -\frac{1}{2\kappa} \frac{\alpha^{t+1}}{1- \alpha} + \frac{1}{2\kappa} \frac{2\beta}{\ln(\nicefrac{T}{\beta})}\right) = c_2 \exp\left(-\frac{1}{2\kappa} \frac{\alpha^{t+1}}{1- \alpha} \right) 
\end{align*}
We then have
\begin{align*}
\sum_{t=1}^T \alpha^{2t} \exp\left( - \frac{1}{2\kappa} \sum_{i=t+1}^T \alpha^i \right) & \leq c_2 \sum_{t=1}^T \alpha^{2t} \exp\left(-\frac{1}{2\kappa} \frac{\alpha^{t+1}}{1- \alpha} \right) \\
& \leq c_2 \sum_{t=1}^T \alpha^{2t} \left( \frac{2 (1-\alpha) 2\kappa}{e \alpha^{t+1}} \right)^2 & \tag{Lemma \ref{lem:ineq2}} \\
& = \frac{16 \kappa^2 c_2}{e^2 \alpha^2} \, T (1 - \alpha)^2 \\
& \leq \frac{16 \kappa^2 c_2}{e^2 \alpha^2} \, T (\ln(1/\alpha))^2 \\
& = \frac{16 \kappa^2 c_2 (\ln(\nicefrac{T}{\beta}))^2}{e^2 \alpha^2 T}
\end{align*}
\end{proof}

\begin{lemma}
For $\alpha = \left( \frac{\beta}{T}\right)^{1/T}$ and any $\zeta > 0$,
\begin{align*}
 \sum_{t=1}^T \exp \left(-\frac{1}{\zeta} \sum_{i=t}^T \alpha^i \right) & \leq \exp \left(\frac{2\beta \ln (T) }{\zeta\ln(\nicefrac{T}{\beta}) } \right)
\end{align*}
\label{lemma:C-bound}
\end{lemma}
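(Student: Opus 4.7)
The plan is to follow the template of~\cref{lemma:B-bound} as closely as possible, since both lemmas require lower-bounding the same truncated geometric tail $\sum_{i=t}^T \alpha^i$ and then summing the resulting exponential over $t$. First I would rewrite the inner sum in closed form,
\begin{align*}
\sum_{i=t}^T \alpha^i \;=\; \frac{\alpha^t - \alpha^{T+1}}{1-\alpha},
\end{align*}
and reuse the estimate $\alpha^{T+1}/(1-\alpha) \le 2\beta/\ln(T/\beta)$ derived in the proofs of~\cref{lemma:A-bound} and~\cref{lemma:B-bound} through an application of~\cref{lem:ineq1}. This immediately gives the lower bound $\sum_{i=t}^T \alpha^i \ge \alpha^t/(1-\alpha) - 2\beta/\ln(T/\beta)$, which is the natural form that factors cleanly once we exponentiate.

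Exponentiating this lower bound and pulling the deterministic offset out of the outer sum yields
\begin{align*}
\sum_{t=1}^T \exp\!\Bigl(-\tfrac{1}{\zeta}\sum_{i=t}^T \alpha^i\Bigr) \;\le\; \exp\!\Bigl(\tfrac{2\beta}{\zeta \ln(T/\beta)}\Bigr) \sum_{t=1}^T \exp\!\Bigl(-\tfrac{\alpha^t}{\zeta(1-\alpha)}\Bigr).
\end{align*}
At this point the remaining task is to control the residual sum $\sum_{t=1}^T \exp(-\alpha^t/(\zeta(1-\alpha)))$. The crudest bound $\exp(-x) \le 1$ gives a preliminary estimate of $T \cdot \exp(2\beta/(\zeta\ln(T/\beta)))$, which is the same shape as the target but with an extra linear-in-$T$ factor.

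The main obstacle is converting this preliminary estimate into the stated exponential form $\exp(2\beta \ln T/(\zeta \ln(T/\beta)))$, which requires absorbing the $\ln T$ factor coming from the residual bound into the $2\beta/(\zeta\ln(T/\beta))$ exponent. A natural route is to apply~\cref{lem:ineq2} to the residual sum with $\gamma$ tuned as a function of $T$ (e.g.\ $\gamma = 1/\ln(T/\beta)$, which keeps $\alpha^{-\gamma T} = (T/\beta)^{\gamma}$ bounded by $e$ while leaving the geometric series $\sum_t \alpha^{-\gamma t}$ tractable via $\alpha^{-\gamma}-1 \ge 1/T$), and then to verify that the prefactor $(\gamma \zeta (1-\alpha)/e)^\gamma$ produced by~\cref{lem:ineq2} combines multiplicatively with $\exp(2\beta/(\zeta\ln(T/\beta)))$ to give exactly the stated exponent after using $1-\alpha \le \ln(T/\beta)/T$. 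I expect this matching calculation---lining up the $\gamma$-power of the prefactor with the target $\ln T/\ln(T/\beta)$ scaling---to be the most delicate step; the remainder of the argument is routine algebraic bookkeeping essentially identical to the final lines of the proof of~\cref{lemma:B-bound}.
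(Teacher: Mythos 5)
Your opening steps coincide with the paper's own proof: the paper also writes $\sum_{i=t}^T \alpha^i = \frac{\alpha^{t} - \alpha^{T+1}}{1 - \alpha}$, discards the positive term (it uses $\sum_{i=t}^T \alpha^i \geq -\frac{\alpha^{T+1}}{1-\alpha}$ outright, which is equivalent to your keeping $\frac{\alpha^t}{1-\alpha}$ and then bounding the residual sum by $T$ via $\exp(-x)\leq 1$), and invokes \cref{lem:ineq1} to get $\frac{\alpha^{T+1}}{1-\alpha} \leq \frac{2\beta}{\ln(\nicefrac{T}{\beta})}$. Both routes therefore land on the same preliminary estimate $T\exp\left(\frac{2\beta}{\zeta\ln(\nicefrac{T}{\beta})}\right)$. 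At that point the paper simply asserts $\sum_{t=1}^T \exp\left(\frac{2\beta}{\zeta\ln(\nicefrac{T}{\beta})}\right) = \exp\left(\frac{2\beta\ln T}{\zeta\ln(\nicefrac{T}{\beta})}\right)$; writing $c = \frac{2\beta}{\zeta\ln(\nicefrac{T}{\beta})}$, this claims $Te^{c} = e^{c\ln T}$, which holds (even as an inequality $\leq$) only when $\ln T + c \leq c\ln T$, i.e.\ $c \geq \frac{\ln T}{\ln T - 1}$. You were right to treat this conversion as the genuine obstacle rather than accept it as bookkeeping.

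However, the repair you sketch via \cref{lem:ineq2} cannot close the gap, because the inequality as stated is false for general $\zeta$: as $\zeta \to \infty$ every summand on the left tends to $1$, so the left-hand side tends to $T$, while the right-hand side tends to $1$. Concretely, carrying out your plan with $\gamma = \frac{1}{\ln(\nicefrac{T}{\beta})}$ and $1-\alpha \leq \frac{\ln(\nicefrac{T}{\beta})}{T}$ yields a prefactor $\left(\frac{\zeta}{eT}\right)^{\gamma}$ against a geometric sum of order $T$, leaving a factor $T^{1-\gamma}$ that still grows essentially linearly in $T$ and cannot be absorbed into $\exp\left(\frac{2\beta\ln T}{\zeta\ln(\nicefrac{T}{\beta})}\right)$ when $\zeta$ is large. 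The honest conclusion of your (and the paper's) argument is the bound $T\exp\left(\frac{2\beta}{\zeta\ln(\nicefrac{T}{\beta})}\right)$; passing to the stated right-hand side requires the additional hypothesis $\frac{2\beta}{\zeta\ln(\nicefrac{T}{\beta})} \geq \frac{\ln T}{\ln T - 1}$, which the lemma does not assume. So the gap in your proposal is real, but it reflects a defect in the lemma and in the paper's final step rather than a correctable flaw in your approach; the fix is to either weaken the conclusion to $T\exp\left(\frac{2\beta}{\zeta\ln(\nicefrac{T}{\beta})}\right)$ or restrict the range of $\zeta$.
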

\begin{proof}
First, observe that, 
\begin{align*}
\sum_{i=t}^T \alpha^i = \frac{\alpha^{t} - \alpha^{T+1}}{1 - \alpha} \geq -\frac{ \alpha^{T+1}}{1 - \alpha}
\end{align*}
We have
\begin{align*}
    \frac{\alpha^{T+1}}{1 - \alpha} &= \frac{\alpha \beta}{T(1 - \alpha)} = \frac{\beta}{T} \cdot \frac{1}{\nicefrac{1}{\alpha} - 1} \leq \frac{\beta}{T} \cdot \frac{2}{\ln(\nicefrac{1}{\alpha})} = \frac{\beta}{T} \cdot \frac{2}{\frac{1}{T}\ln(\nicefrac{T}{\beta})} = \frac{2\beta}{\ln(\nicefrac{T}{\beta})}
    \end{align*}
where in the inequality we used~\cref{lem:ineq1} and the fact that $\nicefrac{1}{\alpha} > 1$. Using the above bound we have 
\begin{align*}
    \sum_{t=1}^T \exp \left(-\frac{1}{\zeta} \sum_{i=t}^T \alpha^i \right) & \leq \sum_{t=1}^T \exp \left(\frac{2\beta }{\zeta\ln(\nicefrac{T}{\beta}) } \right) =  \exp \left(\frac{2\beta \ln (T) }{\zeta\ln(\nicefrac{T}{\beta}) } \right)
\end{align*}
\end{proof}

\begin{lemma}
\label{lem:sum-prod-eq}
For any sequence $\alpha_t$
\begin{align*}
    \prod_{t=1}^T (1 - \alphat) + \sum_{t=1}^T \alphat \prod_{i=t+1}^T (1 - \alpha_i)  = 1 
\end{align*}
\end{lemma}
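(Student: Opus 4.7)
The plan is to prove the identity by telescoping. The key observation is that each summand $\alpha_t \prod_{i=t+1}^T (1-\alpha_i)$ can be rewritten as a difference of two consecutive ``tail products'' $\prod_{i=t}^T(1-\alpha_i)$ and $\prod_{i=t+1}^T(1-\alpha_i)$. Specifically, I would note that
\begin{align*}
\prod_{i=t+1}^T (1-\alpha_i) - \prod_{i=t}^T (1-\alpha_i) = \bigl(1 - (1-\alpha_t)\bigr)\prod_{i=t+1}^T(1-\alpha_i) = \alpha_t \prod_{i=t+1}^T(1-\alpha_i).
\end{align*}

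Given that rewriting, the proof is essentially one line: sum both sides from $t=1$ to $T$ to get a telescoping sum. The inner sum collapses to $\prod_{i=T+1}^T(1-\alpha_i) - \prod_{i=1}^T(1-\alpha_i) = 1 - \prod_{i=1}^T(1-\alpha_i)$, where I use the convention (already implicit throughout the paper, e.g.\ in the definition of $\rho_t$ products in the proof of \cref{prop:gd_sc_bnd}) that an empty product equals $1$. Adding $\prod_{t=1}^T(1-\alpha_t)$ to both sides yields the claimed identity.

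There is no real obstacle here; the only points requiring a sentence of care are (i) the empty-product convention when $t=T$, so that $\prod_{i=T+1}^T(1-\alpha_i)=1$, and (ii) verifying the base/edge case $T=1$ directly as a sanity check, where the identity reduces to $(1-\alpha_1) + \alpha_1 = 1$. An alternative equally short route would be induction on $T$: assuming the identity for $T$, factor $(1-\alpha_{T+1})$ out of every term in the $T$-version that does not already contain $\alpha_{T+1}$, then add the new $t=T+1$ summand $\alpha_{T+1}$, and observe $(1-\alpha_{T+1})\cdot 1 + \alpha_{T+1} = 1$. I would go with the telescoping proof since it is a single display and makes the structure of the identity transparent.
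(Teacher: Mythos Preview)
Your telescoping proof is correct. The paper, however, takes the induction route you mention as an alternative: it checks $T=1$ directly, then for the step from $T$ to $T+1$ factors $(1-\alpha_{T+1})$ out of the first product and the first $T$ summands, applies the inductive hypothesis to collapse that bracket to $1$, and finishes with $(1-\alpha_{T+1}) + \alpha_{T+1} = 1$. Your telescoping argument is arguably cleaner since it exposes the identity as a single cancellation rather than a recursion, and it avoids any bookkeeping beyond the empty-product convention you already flagged; the induction proof, on the other hand, requires no convention at all (the $t=T+1$ summand is just $\alpha_{T+1}$ by inspection). Both are one-display proofs of an elementary identity, so the difference is purely stylistic.
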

\begin{proof}
We show this by induction on $T$. For $T = 1$,
\begin{align*}
    (1 - \alpha_1) + \alpha_1 = 1
\end{align*}
Induction step:
\begin{align*}
    \prod_{t=1}^{T+1}(1 - \alphat) + \sum_{t=1}^{T+1} \alphat \prod_{i=t+1}^{T+1} (1 - \alpha_i)  &= (1- \alpha_{T+1}) \prod_{t=1}^T (1- \alphat) + \left( \alpha_{T+1} + \sum_{t=1}^T\alphat \prod_{i=t+1}^{T+1} (1 - \alpha_i)\right) \\
    &= (1- \alpha_{T+1}) \prod_{t=1}^T (1- \alphat) + \left( \alpha_{T+1} + (1 - \alpha_{T+1})\sum_{t=1}^T\alphat \prod_{i=t+1}^{T} (1 - \alpha_i)\right)\\
    &= ( 1- \alpha_{T+1}) \left(\underbrace{ \prod_{t=1}^T (1- \alphat) + \sum_{t=1}^T\alphat \prod_{i=t+1}^{T} (1 - \alpha_i)}_{= 1}\right) + \alpha_{T+1} \tag{Induction hypothesis}\\
    &= (1 - \alpha_{T+1}) + \alpha_{T+1} = 1
\end{align*}
\end{proof}

\newpage

\begin{lemma}
\label{lem:slb_upperbound}
\citep[Lemma 1]{schmidt2011convergence}. Assume that the non-negative sequence $\{v_\tau\}$ for $\tau \geq 1$ satisfies the following recursion: 
$$v_\tau^2 \leq S_\tau + \sum_{t=1}^\tau \lambda_t v_t \,,$$
where $\{S_\tau\}$ is an increasing sequence, such that $S_0 \geq v_0^2$ and $\lambda_t \geq 0$. Then for all $\tau \geq 1$,
$$
v_\tau \leq \frac{1}{2} \sum_{t=1}^\tau \lambda_t+ \left ( S_\tau + \left(\frac{1}{2} \sum_{t=1}^\tau \lambda_t\right)^2\right)^{1/2}. 
$$
\end{lemma}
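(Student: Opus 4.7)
The plan is to prove the bound by strong induction on $\tau$, using the observation that the hypothesis $v_\tau^2 \leq S_\tau + \sum_{t=1}^\tau \lambda_t v_t$ is a quadratic inequality in the unknown $v_\tau$, whose solution immediately yields a closed-form upper bound. Introduce the shorthand $\Lambda_\tau := \tfrac{1}{2}\sum_{t=1}^\tau \lambda_t$ and denote the target bound by
$U_\tau := \Lambda_\tau + \bigl(S_\tau + \Lambda_\tau^2\bigr)^{1/2}$. The goal is then to establish $v_\tau \leq U_\tau$ for every $\tau \geq 1$.

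For the base case $\tau = 1$, the hypothesis reads $v_1^2 - \lambda_1 v_1 - S_1 \leq 0$; solving this quadratic in the non-negative variable $v_1$ gives the larger root $\tfrac{\lambda_1}{2} + \sqrt{\lambda_1^2/4 + S_1} = U_1$, and the claim follows. For the inductive step, suppose $v_t \leq U_t$ for all $t < \tau$. Rewriting the recursion as the quadratic inequality
\[
v_\tau^2 - \lambda_\tau v_\tau - \Bigl(S_\tau + \sum_{t=1}^{\tau-1}\lambda_t v_t\Bigr) \leq 0
\]
and solving (again, keeping the larger root because $v_\tau \geq 0$) yields
\[
v_\tau \leq \frac{\lambda_\tau}{2} + \sqrt{\frac{\lambda_\tau^2}{4} + S_\tau + \sum_{t=1}^{\tau-1}\lambda_t v_t}.
\]

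Next, I would bound the sum $\sum_{t=1}^{\tau-1}\lambda_t v_t$ using the inductive hypothesis together with the monotonicity of both $S_t$ and $\Lambda_t$ in $t$: since $\Lambda_t \leq \Lambda_{\tau-1}$ and $S_t + \Lambda_t^2 \leq S_\tau + \Lambda_\tau^2$, we get
\[
\sum_{t=1}^{\tau-1}\lambda_t v_t \leq \sum_{t=1}^{\tau-1}\lambda_t U_t \leq 2\Lambda_{\tau-1}\Bigl(\Lambda_{\tau-1} + \sqrt{S_\tau + \Lambda_\tau^2}\Bigr).
\]
Plugging this into the closed-form bound for $v_\tau$ and using the identity $\Lambda_\tau^2 - \lambda_\tau^2/4 = \Lambda_{\tau-1}^2 + \Lambda_{\tau-1}\lambda_\tau$ (which follows from $\Lambda_\tau = \Lambda_{\tau-1} + \lambda_\tau/2$), the desired inequality $\tfrac{\lambda_\tau}{2} + \sqrt{\cdot\,} \leq U_\tau = \Lambda_{\tau-1} + \tfrac{\lambda_\tau}{2} + \sqrt{S_\tau + \Lambda_\tau^2}$ reduces, after squaring and cancellation, to the trivially true inequality $0 \leq \Lambda_{\tau-1}\lambda_\tau$.

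The main obstacle is the final algebraic reduction: squaring in the right direction while keeping track of signs, and verifying that the slack provided by the inductive hypothesis together with the monotonicity of $S_\tau$ is exactly enough to absorb the cross term $2\Lambda_{\tau-1}\sqrt{S_\tau + \Lambda_\tau^2}$. All quantities involved are non-negative, so squaring is monotone, but one has to be careful to isolate the square root on one side before squaring. An alternative route, which I might fall back on if the direct algebra gets unwieldy, is to introduce the equality-case sequence $\tilde v_\tau$ defined by $\tilde v_\tau^2 = S_\tau + \sum_{t=1}^\tau \lambda_t \tilde v_t$ and show first that $v_\tau \leq \tilde v_\tau$ by induction (using the monotonicity of the recursion in each $v_t$), then argue that $\tilde v_\tau \leq U_\tau$ by the same quadratic-root computation.
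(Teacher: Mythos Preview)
Your proposal is correct and follows essentially the same approach as the paper: both proofs proceed by strong induction, isolate $v_\tau$ via the quadratic formula, and then bound $\sum_{t<\tau}\lambda_t v_t$ using the inductive hypothesis together with the monotonicity of $S_t$ and $\Lambda_t$. The only cosmetic difference is that the paper bounds the sum by $\psi_k \max_{t\leq k} v_t$ and applies the inductive hypothesis once to the maximizer, whereas you bound each $v_t\leq U_t$ individually and then uniformly bound $U_t$; the resulting algebra and the final completion-of-squares step are the same.
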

\begin{proof}
We prove this lemma by induction. For $\tau = 1$ we have 
\begin{align*}
    v_1^2 &\leq S_1 + \lambda_1 v_1 \implies (v_1 - \frac{\lambda_1}{2})^2 \leq S_1 + \frac{\lambda_1^2}{4} \implies v_1 \leq \left ( S_1 + \left(\frac{1}{2} \lambda_1\right)^2\right)^{1/2}+\frac{1}{2} \lambda_1 \\
    \intertext{\textbf{Inductive hypothesis}: Assume that the conclusion holds for $\tau \in \{1,\dots , k\}$. Specifically, for $\tau = k$,}
    v_k & \leq \frac{1}{2} \sum_{t=1}^k \lambda_t+ \left ( S_k + \left(\frac{1}{2} \sum_{t=1}^k \lambda_t\right)^2\right)^{1/2}.    
\end{align*}
Now we show that the conclusion holds for $\tau= k+1$. Define $\psi_k :=\sum_{t=1}^k \lambda_t$. Note that $\psi_{k+1} \geq \psi_k$ since $\lambda_t \geq 0$ for all $t$. Hence, $v_k \leq \frac{1}{2} \psi_k+ \left ( S_k + \left(\frac{1}{2} \psi_k \right)^2\right)^{1/2}$. Define $k^* := \argmax_{\tau \in \{0, \ldots, k\}} v_\tau$. Using the main assumption for $\tau= k+1$,
 \begin{align*}
      v_{k+1}^2 &\leq S_{k+1} + \sum_{t=1}^{k+1} \lambda_t v_t \\
       \implies v_{k+1}^2 - \lambda_{k+1} v_{k+1}&\leq S_{k+1} + \sum_{t=1}^{k} \lambda_t v_t\\
      \implies \left(v_{k+1} - \frac{\lambda_{k+1}}{2} \right)^2 &\leq S_{k+1} + \frac{\lambda_{k+1}^2}{4} + \sum_{t=1}^{k} \lambda_t v_t\\
      & \leq S_{k+1}+\frac{\lambda_{k+1}^2}{4}  + v_{k^*}\sum_{t=1}^{k} \lambda_t \tag{since $ v_{k^*}$ is the maximum} \\
      & = S_{k+1}+\frac{\lambda_{k+1}^2}{4}   + v_{k^*}\psi_k \tag{based on the definition for $\psi_k$}
\end{align*}      
% \clearpage
Since $k^*\leq k$, by the inductive hypothesis, 
\begin{align*}
\left(v_{k+1} - \frac{\lambda_{k+1}}{2} \right)^2 & \leq S_{k+1}+\frac{\lambda_{k+1}^2}{4}   + \psi_k \left\{ \frac{1}{2} \psi_{k^*}+ \left ( S_{k^*} + \left(\frac{1}{2} \psi_{k^*}\right)^2\right)^{1/2}\right\} \\
      & = S_{k+1}+\frac{\lambda_{k+1}^2}{4} + \frac{1}{2}  \psi_k \psi_{k^*} + \psi_k \left ( S_{k^*} + \left(\frac{1}{2} \psi_{k^*}\right)^2\right)^{1/2}\\
      & \leq  S_{k+1}+\frac{\lambda_{k+1}^2}{4} + \frac{1}{2} \psi_{k}^2 +  \psi_k \left ( S_{k^*} + \left(\frac{1}{2} \psi_{k^*}\right)^2\right)^{1/2} \tag{Since $\{\psi_\tau\}$ is non-decreasing and $k^* \leq k$}\\
\end{align*}      
Furthermore, since $\{S_\tau\}$ is increasing and $k^* < k+1$, 
\begin{align*}
\left(v_{k+1} - \frac{\lambda_{k+1}}{2} \right)^2 & \leq  S_{k+1}+\frac{\lambda_{k+1}^2}{4} + \frac{1}{2} \psi_{k}^2 +  \psi_k \left ( S_{k+1} + \left(\frac{1}{2} \psi_{k^*}\right)^2\right)^{1/2} \\
        & \leq S_{k+1}+\frac{\lambda_{k+1}^2}{4} + \frac{1}{2} \psi_{k}^2 +  \psi_k \left ( S_{k+1} + \left(\frac{1}{2} \psi_{k+1}\right)^2\right)^{1/2}\tag{Since $\{\psi_\tau\}$ is non-decreasing and $k^* < k+1$}\\
        & = S_{k+1}+\frac{\lambda_{k+1}^2}{4} + \frac{1}{4} \psi_{k}^2 +  \psi_k \left ( S_{k+1} + \left(\frac{1}{2} \psi_{k+1}\right)^2\right)^{1/2}+ \frac{1}{4} \psi_{k}^2    \\
        &\leq  S_{k+1}+ \frac{1}{4} \psi_{k+1}^2 +  \psi_k \left ( S_{k+1} + \left(\frac{1}{2} \psi_{k+1}\right)^2\right)^{1/2}+ \frac{1}{4} \psi_{k}^2\tag{$a^2 + b^2 \leq (a+b)^2$ for $a = \psi_{k} > 0$ and $b = \lambda_{k+1} > 0$}\\
        & =  \underbrace{S_{k+1}+ \frac{1}{4} \psi_{k+1}^2 }_{:=x^2}+  \underbrace{\psi_k \left ( S_{k+1} + \left(\frac{1}{2} \psi_{k+1}\right)^2\right)^{1/2}}_{=2xy}+ \underbrace{\frac{1}{4} \psi_{k}^2}_{:=y^2}\\
         & = \left( \left ( S_{k+1} + \left(\frac{1}{2} \psi_{k+1}\right)^2\right)^{1/2} + \frac{1}{2} \psi_k \right)^2\\
\implies v_{k+1} - \frac{\lambda_{k+1}}{2} &\leq \left ( S_{k+1} + \left(\frac{1}{2} \psi_{k+1}\right)^2\right)^{1/2} + \frac{1}{2} \psi_k \\
\implies v_{k+1} &\leq \left ( S_{k+1} + \left(\frac{1}{2} \psi_{k+1}\right)^2\right)^{1/2} + \frac{1}{2} \psi_k + \frac{\lambda_{k+1}}{2} \\
&= \left ( S_{k+1} + \left(\frac{1}{2} \psi_{k+1}\right)^2\right)^{1/2} + \frac{1}{2} \psi_{k+1}
\end{align*}
where replacing $\psi_{k+1}$ with its definition gives us the required result.
\end{proof}

% \item $v_{k^*} := \max\left\{ v_0, \ldots ,v_k\right\}$. Note that $k^*$ indicates the index of $v$ that has the maximum value in $\left\{ v_0,\ldots, v_k\right\}$. Similarly we can define $\psi_{k^*}$ and $S_{k^*}$.    
\newpage
\section{Additional Experimental Results}
\label{app:experiments}
\textbf{Supervised Learning:} We evaluate our framework on the LibSVM benchmarks~\citep{chang2011libsvm}, a standard suite of convex-optimization problems. Here consider two datasets -- \texttt{mushrooms}, and \texttt{rcv1}, two losses -- squared loss and logistic loss, and four batch sizes -- \{25, 125, 625, full-batch\} for the linear parameterization ($f = X^\top \theta$). Each optimization algorithm is run for $500$ epochs (full passes over the data).  

\subsection{Stochastic Surrogate Optimization}
\label{app:experiments:sso}
Comparisons of \texttt{SGD}, \texttt{SLS}, \texttt{Adam}, \texttt{Adagrad}, and \texttt{SSO} evaluated on three SVMLib benchmarks \texttt{mushrooms}, \texttt{ijcnn}, and \texttt{rcv1}, two losses -- squared loss and logistic loss, and four batch sizes -- \{25, 125, 625, full-batch\}. Each run was evaluated over three random seeds following the same initialization scheme. All plots are in log-log space to make trends between optimization algorithms more apparent. All algorithms and batch sizes are evaluated for 500 epochs and performance is represented as a function of total optimization steps. We compare stochastic surrogate optimization (\texttt{SSO}), against \texttt{SGD} with the standard theoretical $\nicefrac{1}{2 L_\theta}$ step-size, \texttt{SGD} with the step-size set according to a stochastic line-search~\citep{vaswani2019painless} \texttt{SLS}, and finally \texttt{Adam}~\citep{kingma2014adam} using default hyper-parameters. Since \texttt{SSO} is equivalent to projected SGD in the target space, we set $\eta$ (in the surrogate definition) to $\nicefrac{1}{2 L}$ where $L$ is the smoothness of $\ell$ w.r.t $\z$. For squared loss, $L$ is therefore set to $1$, while for logistic it is set to $2$. These figures show (i) \texttt{SSO} improves over \texttt{SGD} when the step-sizes are set theoretically, (ii) \texttt{SSO} is competitive with \texttt{SLS} or \texttt{Adam}, and (iii) as $m$ increases, on average, the performance of \texttt{SSO} improves as projection error decreases. For further details see the attached code repository. Below we include three different step-size schedules: constant, $\frac{1}{\sqrt{t}}$~\cite{orabona2019modern}, and $(1/T)^{t/T}$~\citep{vaswani2022towards}. 
\begin{figure}[ht]
    \centering
    \includegraphics[width=0.9\textwidth]{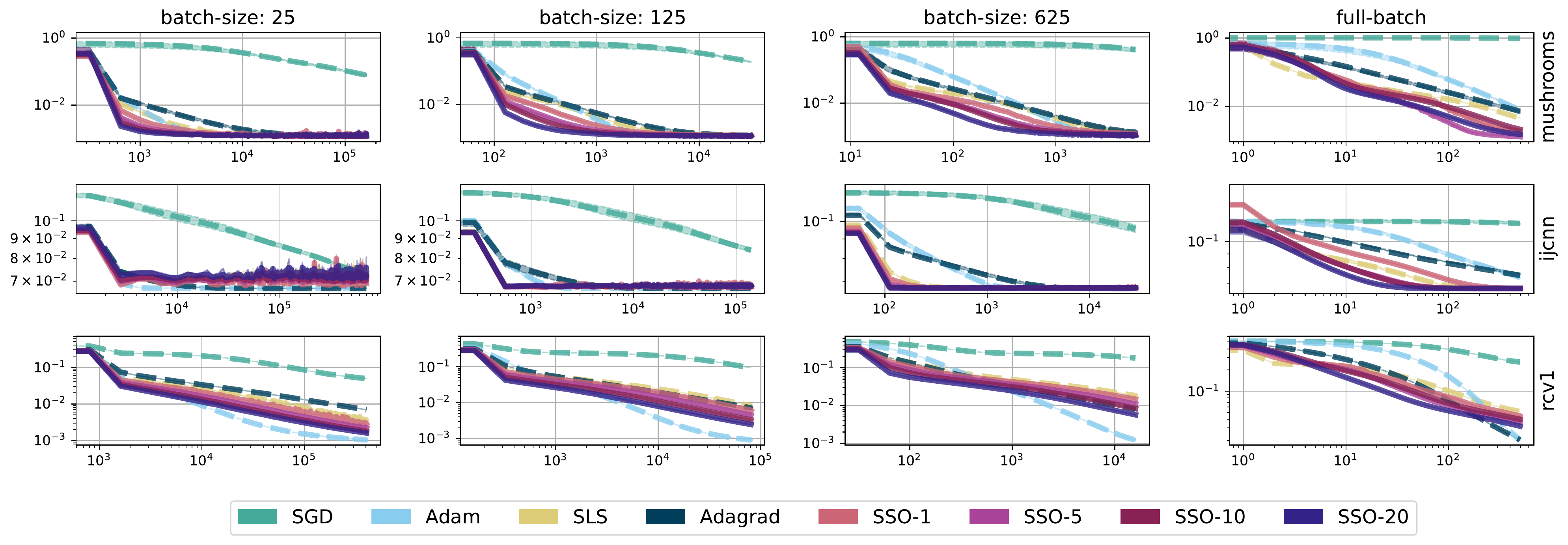}
    \caption{\textbf{Constant step-size:} comparison of optimization algorithms under a \textbf{mean squared error loss}. We note, \texttt{SSO} significantly outperforms its parametric counterpart, and maintains performance which is on par with both \texttt{SLS} and \texttt{Adam}. Additionally we note that taking additional steps in the surrogate generally improves performance.}
    \label{app:fig:sgd_const_mse}
\end{figure}
\begin{figure}[ht]
    \centering
    \includegraphics[width=0.9\textwidth]{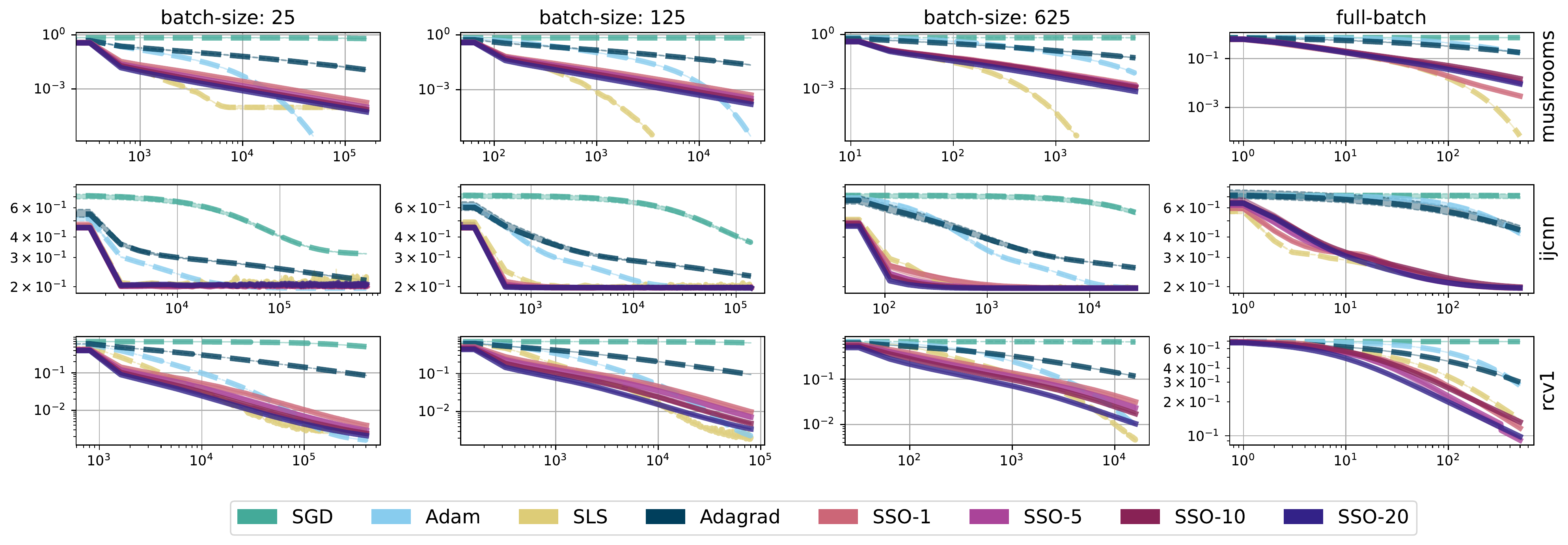}
    \caption{\textbf{Constant step-size:} comparison of optimization algorithms under a average \textbf{logistic loss}. We note, \texttt{SSO} significantly outperforms its parametric counterpart, and maintains performance which is on par with both \texttt{SLS} and \texttt{Adam}. Additionally we note that taking additional steps in the surrogate generally improves performance}
    \label{app:fig:sgd_const_logistic}
\end{figure}
\newpage 
\begin{figure}[ht]
    \centering
    \includegraphics[width=0.9\textwidth]{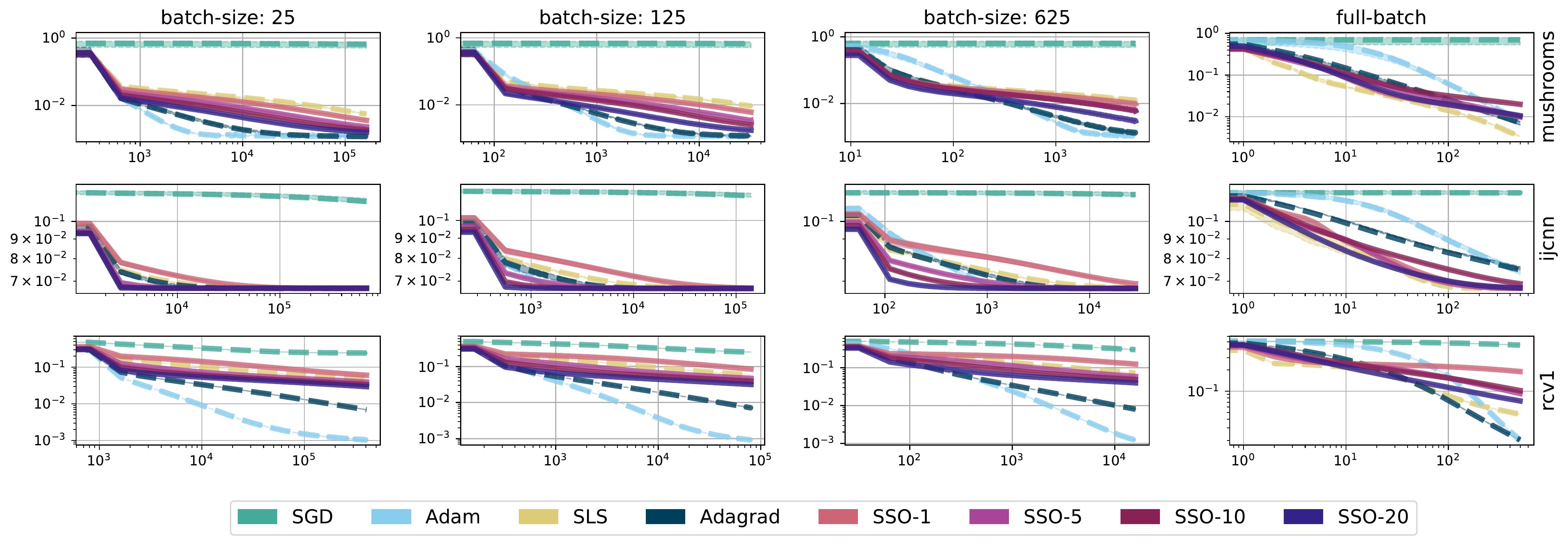}
    \caption{\textbf{Decreasing step-size:} comparison of optimization algorithms under a \textbf{mean squared error loss}. We compare examples which include a decaying step size of $\frac{1}{\sqrt{t}}$ alongside both \texttt{SSO} as well as \texttt{SGD} and \texttt{SLS}. \texttt{Adam} (and \texttt{Adagrad}). Again, we note that taking additional steps in the surrogate generally improves performance. Additionally the decreasing step-size seems to help maintain strict monotonic improvement.}
    \label{app:fig:sgd_stoch_mse}
\end{figure}
\begin{figure}[ht]
    \centering
    \includegraphics[width=0.9\textwidth]{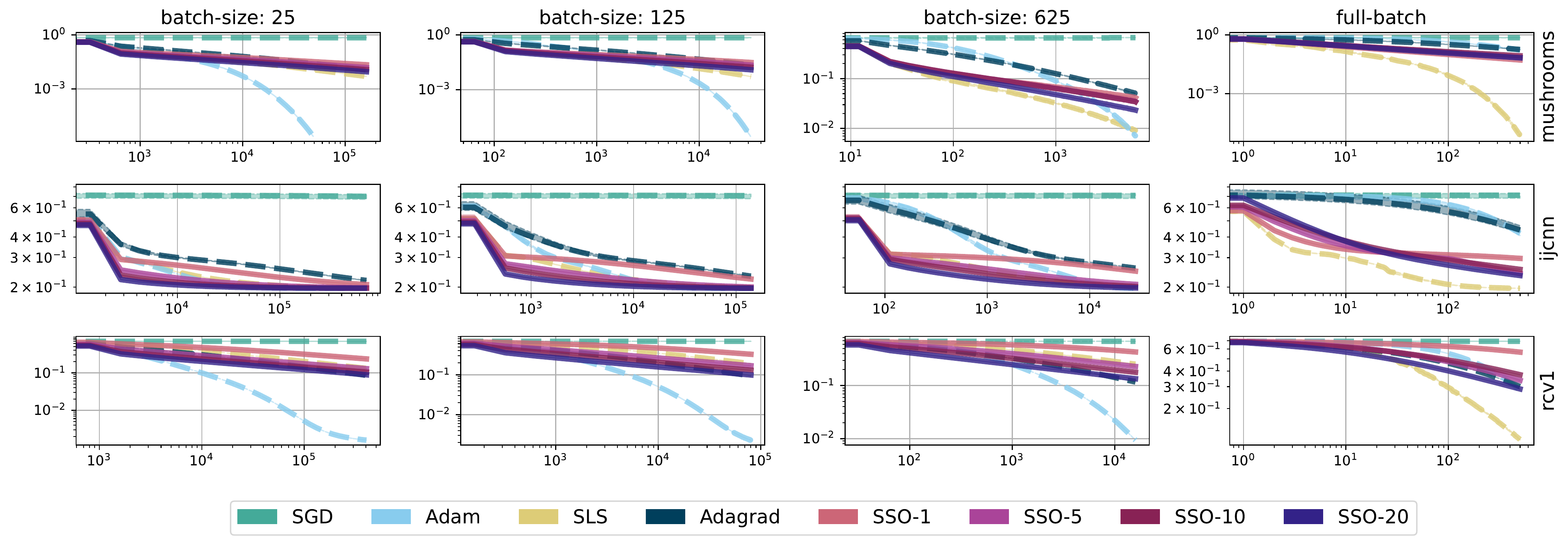}
     \caption{\textbf{Decreasing step-size:} comparison of optimization algorithms under a \textbf{logistic loss}. We compare examples which include a decaying step size of $\frac{1}{\sqrt{t}}$ alongside both \texttt{SSO} as well as \texttt{SGD} and \texttt{SLS}. \texttt{Adam} (and \texttt{Adagrad}). Again, we note that taking additional steps in the surrogate generally improves performance. Additionally the decreasing step-size seems to help maintain strict monotonic improvement.}
    \label{app:fig:sgd_stoch_logistic}
\end{figure}
\newpage 
\begin{figure}[ht]
    \centering
    \includegraphics[width=0.9\textwidth]{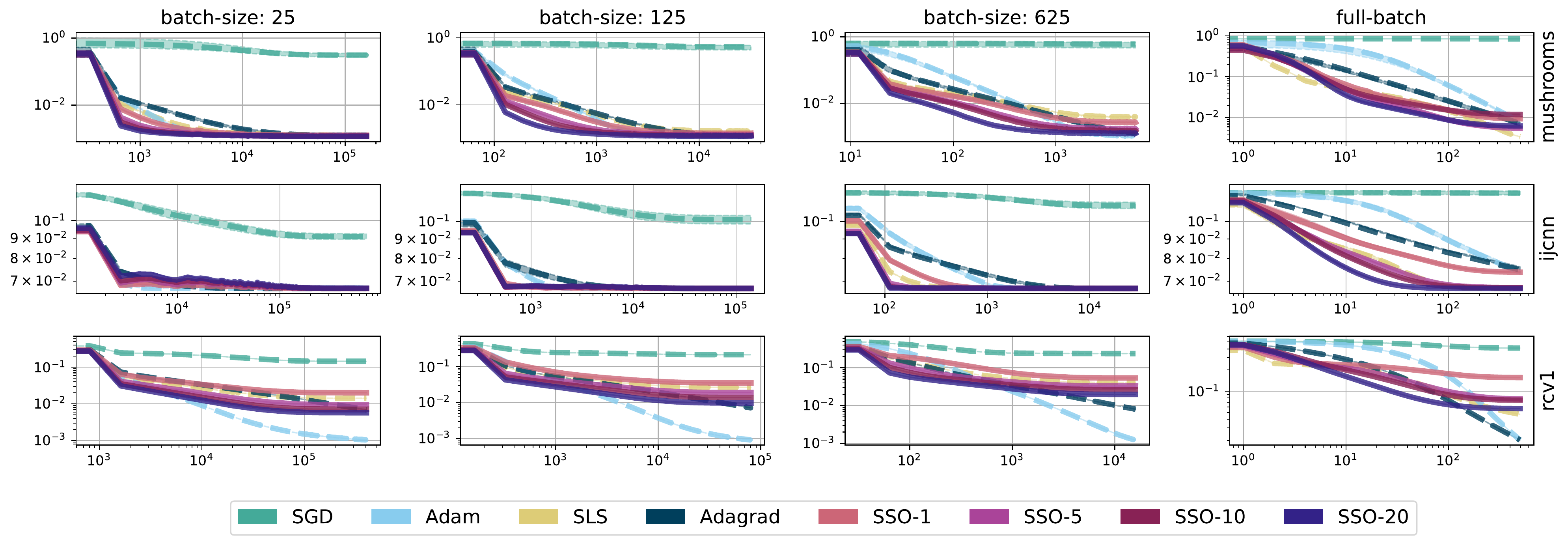}
    \caption{\textbf{Exponential step-size:} comparison of optimization algorithms under a \textbf{mean squared error loss}. We compare examples which include a decaying step size of $(\frac{1}{T})^{t/T}$ alongside both \texttt{SSO} as well as \texttt{SGD} and \texttt{SLS}. \texttt{Adam}. Again, we note that taking additional steps in the surrogate generally improves performance. Additionally the decreasing step-size seems to help maintain strict monotonic improvement. Lastly, because of a less aggressive step size decay, the optimization algorithms make more progress then their stochastic $\frac{1}{\sqrt{t}}$ counterparts.}
    \label{app:fig:sgd_expo_mse}
\end{figure}
\begin{figure}[ht]
    \centering
    \includegraphics[width=0.9\textwidth]{Appendix/figures/workshop-sls_sso-SGD_FMDOptMSELossexponential.pdf}
    \caption{\textbf{Exponential step-size:} comparison of optimization algorithms under a \textbf{logistic loss}. We compare examples which include a decaying step size of $(\frac{1}{T})^{t/T}$ alongside both \texttt{SSO} as well as \texttt{SGD} and \texttt{SLS}. \texttt{Adam} remains the same as aboves. Again, we note that taking additional steps in the surrogate generally improves performance. Additionally the decreasing step-size seems to help maintain strict monotonic improvement. Lastly, because of a less aggressive step size decay, the optimization algorithms make more progress then their stochastic $\frac{1}{\sqrt{t}}$ counterparts.}
    \label{app:fig:sgd_expo_logistic}
\end{figure}
\newpage

\subsection{Stochastic Surrogate Optimization with a Line-search}
\label{app:experiments:sso:sls}
Comparisons of \texttt{SGD}, \texttt{SLS}, \texttt{Adam}, \texttt{Adagrad}, and \texttt{SSO-SLS} evaluated on three SVMLib benchmarks \texttt{mushrooms}, \texttt{ijcnn}, and \texttt{rcv1}. Each run was evaluated over three random seeds following the same initialization scheme. All plots are in log-log space to make trends between optimization algorithms more apparent. As before in all settings, algorithms use either their theoretical step-size when available, or the default as defined by \cite{paszke2017automatic}. The inner-optimization loop are set according to line-search parameters and heuristics following \citet{vaswani2019painless}. All algorithms and batch sizes are evaluated for 500 epochs and performance is represented as a function of total optimization steps. 
\begin{figure}[ht]
    \centering
    \includegraphics[width=0.9\textwidth]{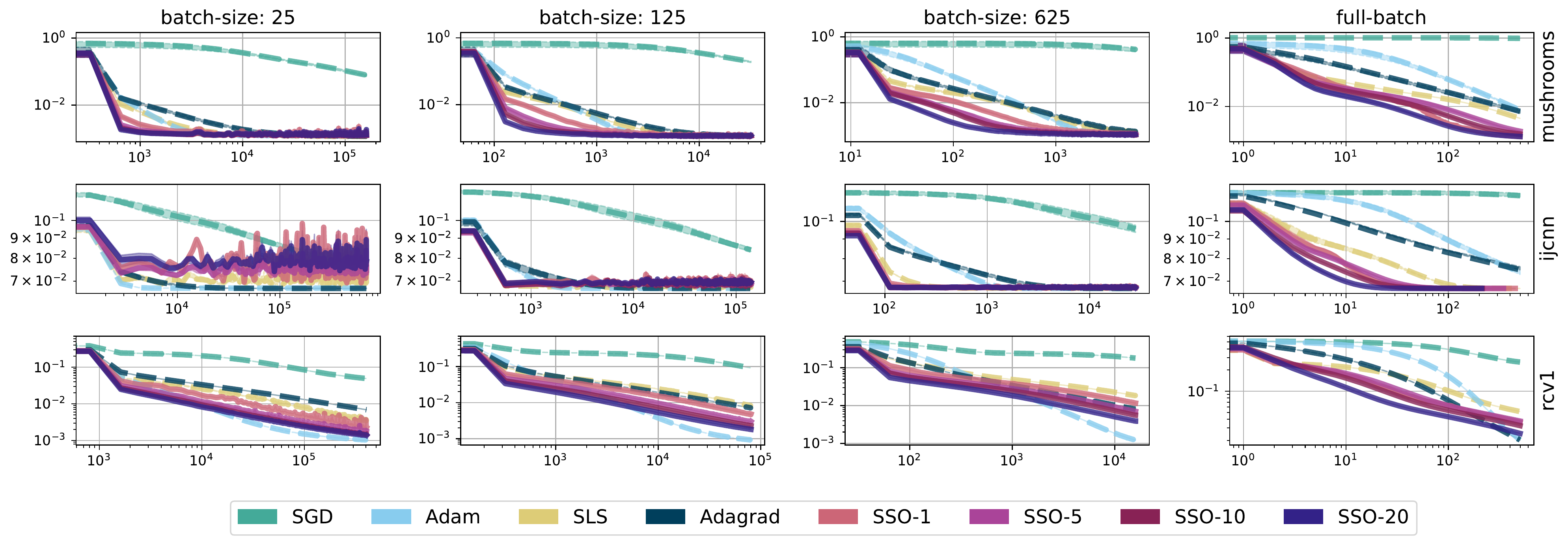}
    \caption{\textbf{Constant step-size:} comparison of optimization algorithms under a \textbf{mean squared error loss}. We note, \texttt{SSO-SLS} outperforms its parametric counterpart, and maintains performance which is on par with both \texttt{SLS} and \texttt{Adam}. Additionally we note that taking additional steps in the surrogate generally improves performance, especially in settings with less noise (full-batch and batch-size 625).}
    \label{app:fig:sls_const_mse}
\end{figure}
\begin{figure}[ht]
    \centering
    \includegraphics[width=0.9\textwidth]{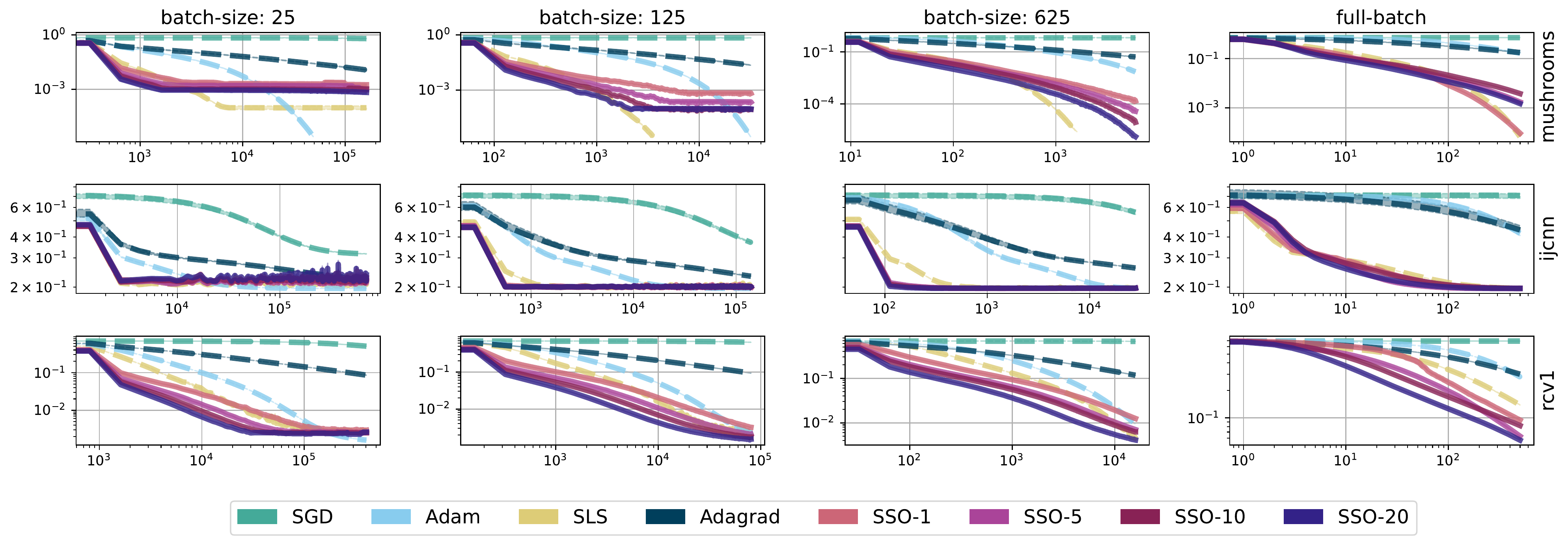}
    \caption{\textbf{Constant step-size:} comparison of optimization algorithms under a average \textbf{logistic loss}. We note, \texttt{SSO-SLS} outperforms its parametric counterpart, and maintains performance which is on par with both \texttt{SLS} and \texttt{Adam}. Additionally we note that taking additional steps in the surrogate generally improves performance.}
    \label{app:fig:sls_const_logistic}
\end{figure}

\newpage  
 
\subsection{Combining Stochastic Surrogate Optimization with Adaptive Gradient Methods}

Comparisons of \texttt{SGD}, \texttt{SLS}, \texttt{Adam}, \texttt{Adagrad}, and \texttt{SSO-Adagrad} evaluated on three SVMLib benchmarks \texttt{mushrooms}, \texttt{ijcnn}, and \texttt{rcv1}. Each run was evaluated over three random seeds following the same initialization scheme. All plots are in log-log space to make trends between optimization algorithms more apparent. As before in all settings, algorithms use either their theoretical step-size when available, or the default as defined by \cite{paszke2017automatic}. The inner-optimization loop are set according to line-search parameters and heuristics following \citet{vaswani2019painless}. All algorithms and batch sizes are evaluated for 500 epochs and performance is represented as a function of total optimization steps. Here we update the $\eta$ according to the same schedule as scalar \texttt{Adagrad} (termed AdaGrad-Norm in~\citet{ward2020adagrad}). Because Adagrad does not have an easy to compute optimal theoretical step size, for our setting we set the log learning rate (the negative of log $\eta$) to be $2.$. For further details see the attached coding repository.

\begin{figure}[ht]
    \centering
    \includegraphics[width=0.9\textwidth]{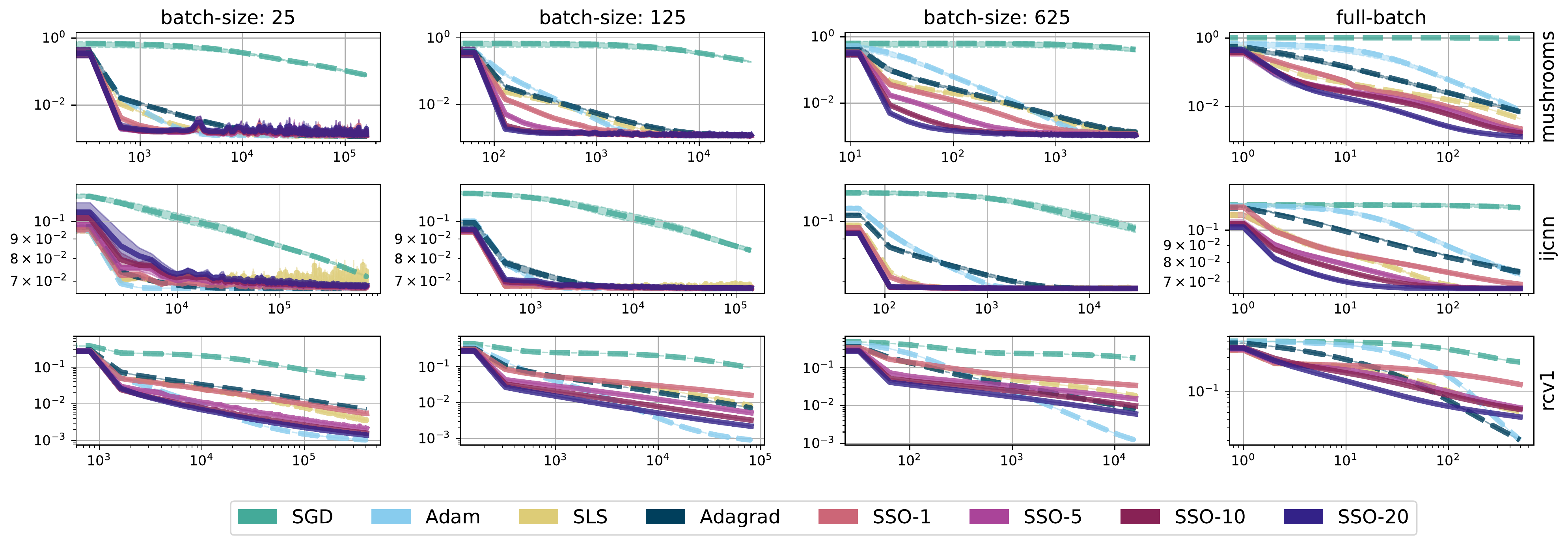}
    \caption{Comparison in terms of average MSE loss of \texttt{SGD}, \texttt{SLS}, \texttt{Adam}, and \texttt{SSO-Adagrad} evaluated under a \textbf{mean squared error loss}. These plots show that \texttt{SSO-Adagrad} outperforms its parametric counterpart, and maintains performance which is on par with both \texttt{SLS} and \texttt{Adam}. Additionally, we again find that taking additional steps in the surrogate generally improves performance.}
    \label{app:fig:ada_const_mse}
\end{figure}
\begin{figure}[ht]
    \centering
    \includegraphics[width=0.9\textwidth]{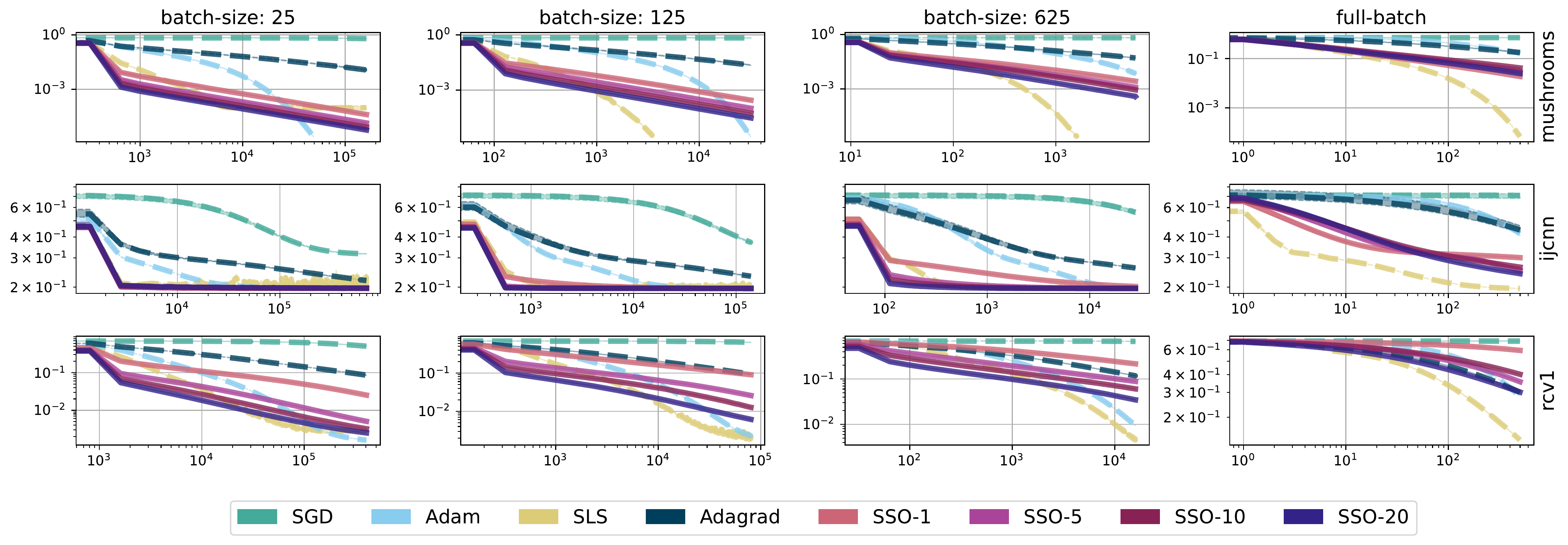}
    \caption{Comparison in terms of average MSE loss of \texttt{SGD}, \texttt{SLS}, \texttt{Adam}, and \texttt{SSO-Adagrad} evaluated under a \textbf{logistic loss}. These plots show that \texttt{SSO-Adagrad} outperforms its parametric counterpart, and maintains performance which is on par with both \texttt{SLS} and \texttt{Adam}. Additionally, we again find that taking additional steps in the surrogate generally improves performance. }
    \label{app:fig:ada_const_logistic}
\end{figure}

\newpage 
 
\subsection{Combining Stochastic Surrogate Optimization With Online Newton Steps}

Comparisons of \texttt{SGD}, \texttt{SLS}, \texttt{Adam}, \texttt{Adagrad}, and \texttt{SSO-Newton} evaluated on three SVMLib benchmarks \texttt{mushrooms}, \texttt{ijcnn}, and \texttt{rcv1}. Each run was evaluated over three random seeds following the same initialization scheme. All plots are in log-log space to make trends between optimization algorithms more apparent. As before, in all settings, algorithms use either their theoretical step-size when available, or the default as defined by \cite{paszke2017automatic}. The inner-optimization loop are set according to line-search parameters and heuristics following \citet{vaswani2019painless}. All algorithms and batch sizes are evaluated for 500 epochs and performance is represented as a function of total optimization steps. Here we update the $\eta$ according to the same schedule as \texttt{Online Newton}. We omit the MSE example as \texttt{SSO-Newton} in this setting is equivalent to \texttt{SSO}. In the logistic loss setting however, which is displayed below, we re-scale the regularization term by $(1-p)p$ where $p = \sigma{(f(x))}$ where $\sigma$ is this sigmoid function, and $f$ is the target space. This operation is done per-data point, and as can be seen below, often leads to extremely good performance, even in the stochastic setting. In the plots below, if the line vanishes before the maximum number of optimization steps have occurred, this indicates that the algorithm has converged to the minimum and is no longer executed. Notably, \texttt{SSO-Newton} achieves this in for multiple data-sets and batch sizes. 

\begin{figure}[ht]
    \centering
    \includegraphics[width=0.9\textwidth]{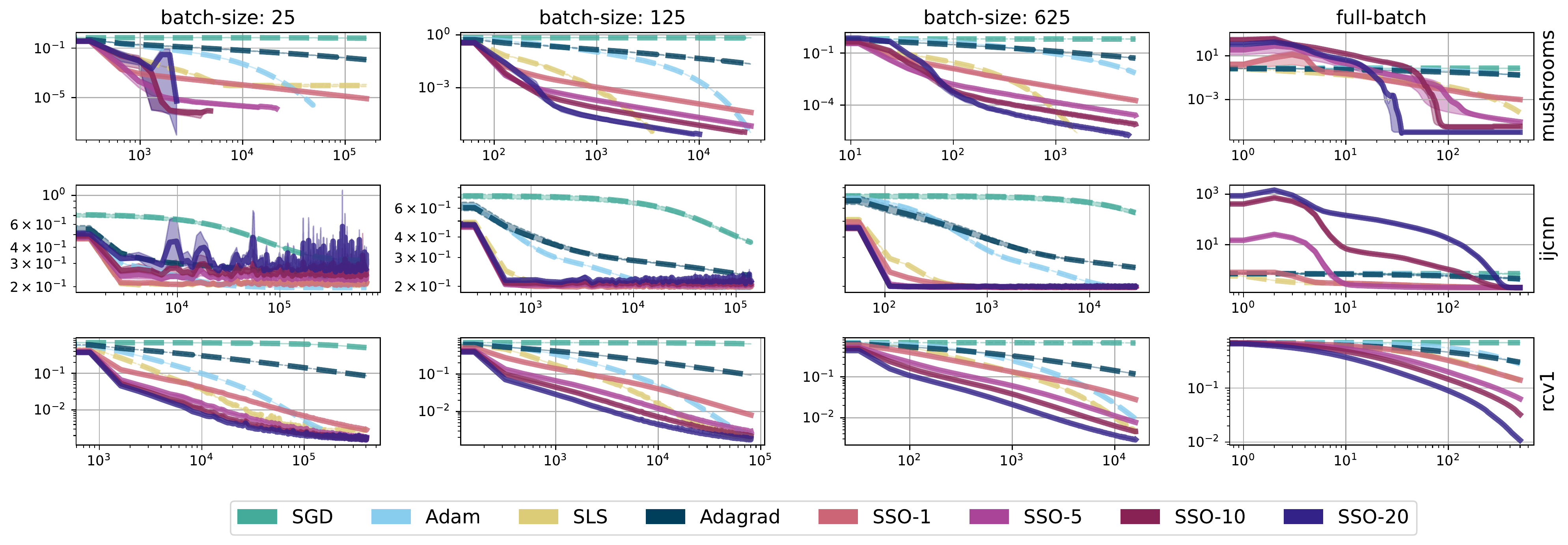}
    \caption{Comparison in terms of average logistic loss of \texttt{SGD}, \texttt{SLS}, \texttt{Adam}, and \texttt{SSO-Newton} evaluated on the \textbf{logistic loss}. This plot displays that significant improvement can be made at no additional cost by re-scaling the regularization term correctly. Note that in the case of mushrooms, \texttt{SSO-Newton} in all cases for $m=20$ reaches the stopping criteria before the 500th epoch. Second, even in many stochastic settings, \texttt{SSO-Newton} outperforms both \texttt{SLS} and \texttt{Adam}.}
    \label{app:fig:newton_const_logistic}
\end{figure}

\newpage
\subsection{Comparison with SVRG}
Below we include a simple supervised learning example in which we compare the standard variance reduction algorithm SVRG~\cite{johnson2013accelerating} to the $SSO$ algorithm in the supervised learning setting. Like the examples above, we test the optimization algorithms using the \citet{chang2011libsvm} rcv1 dataset, under an MSE loss. This plot shows that SSO can outperform SVRG for even small batch-size settings. For additional details on the implementation and hyper-parameters used, see \url{https://github.com/WilderLavington/Target-Based-Surrogates-For-Stochastic-Optimization}. We note that it is likely, that for small enough batch sizes SVRG may become competitive with SSO, however we leave such a comparison for future work.

\begin{figure}[h]
    \centering
    \includegraphics[width=0.995\textwidth]{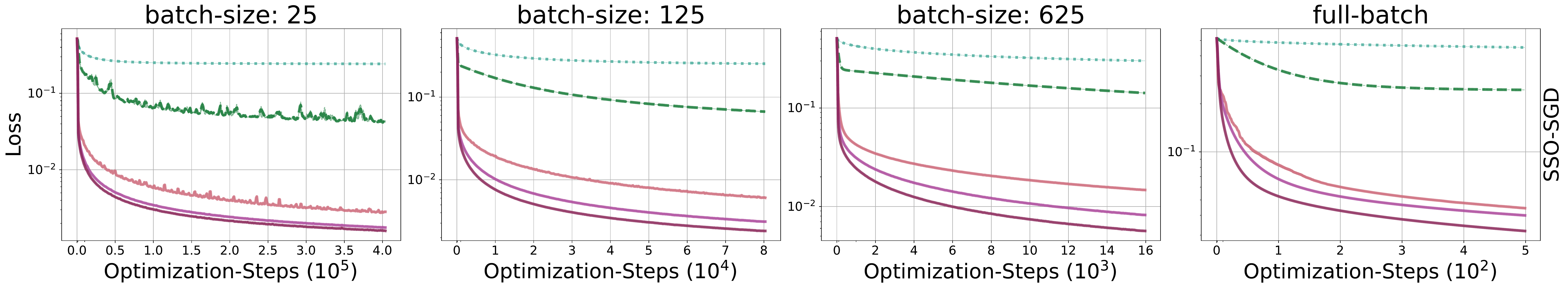}
    \includegraphics[width=0.715\textwidth]{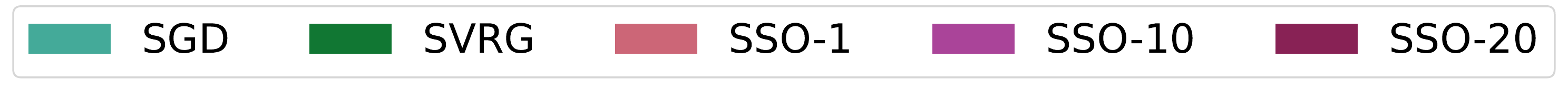}
    \caption{Comparison of the average mean-squared error between \texttt{SGD}, \texttt{SVRG}, and \texttt{SSO-SGD}. This plot displays that even in settings where the batch-size is small, \texttt{SSO} can even improve over variance reduction techniques like SVRG.}
    \label{app:fig:svrg}
\end{figure}

\clearpage

\subsection{Imitation Learning}
\begin{figure}[!ht]
\centering 
    \begin{subfigure}{.4325\linewidth}
    \centering
         \includegraphics[width=\textwidth]{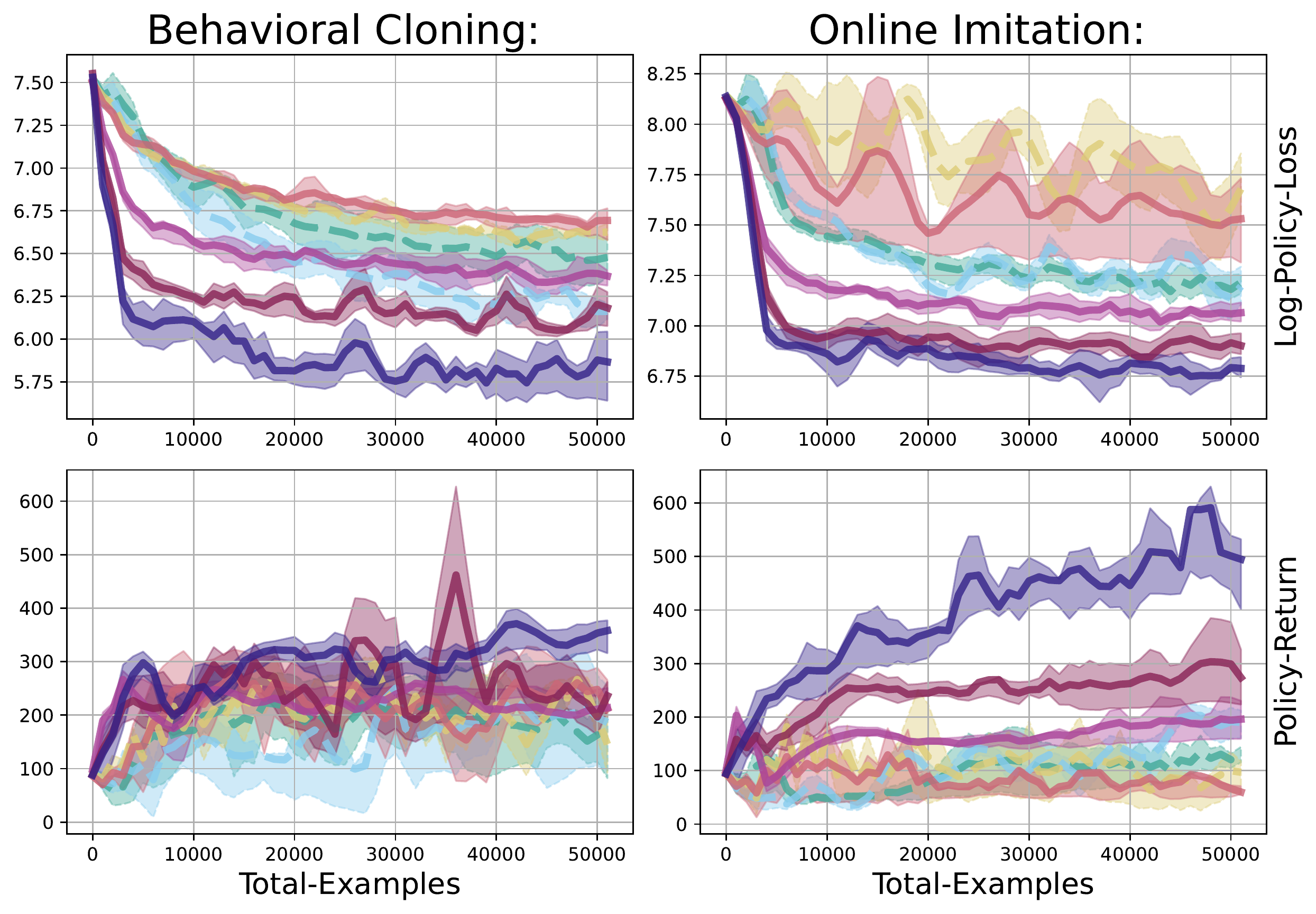}
        \caption{Walker2d-v2 Environment}
    \end{subfigure}
    \begin{subfigure}{.56\linewidth}
    \centering
        \includegraphics[width=\textwidth]{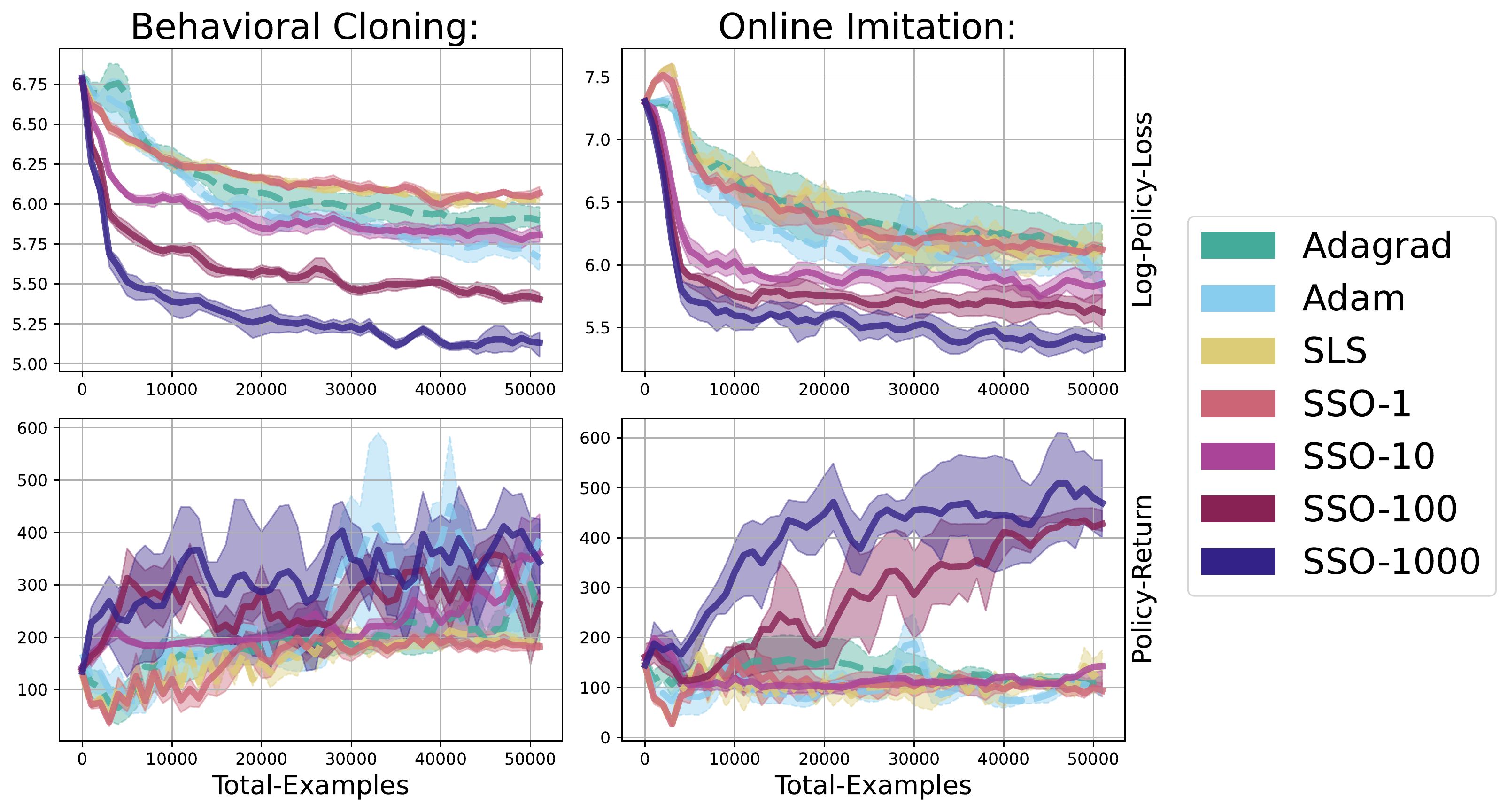}
        \caption{Hopper-v2 Environment}
     \end{subfigure} 
    \caption{Comparison of policy return, and log policy loss incurred by \texttt{SGD}, \texttt{SLS}, \texttt{Adam}, \texttt{Adagrad}, and \texttt{SSO} as a function of the total interactions. Unlike \cref{sec:experiments}, the mean of the policy is parameterized by a \textbf{neural network model}. In both environments, for both behavioral policies, \texttt{SSO} outperforms all other online-optimization algorithms. Additionally, as m in increases, so to does the performance of \texttt{SSO} in terms of both the return as well as the loss.}  
    \label{fig:oil_fig-app-nn}
\end{figure}

\begin{figure}[!ht]
\centering 
    \begin{subfigure}{.4325\linewidth}
    \centering
         \includegraphics[width=\textwidth]{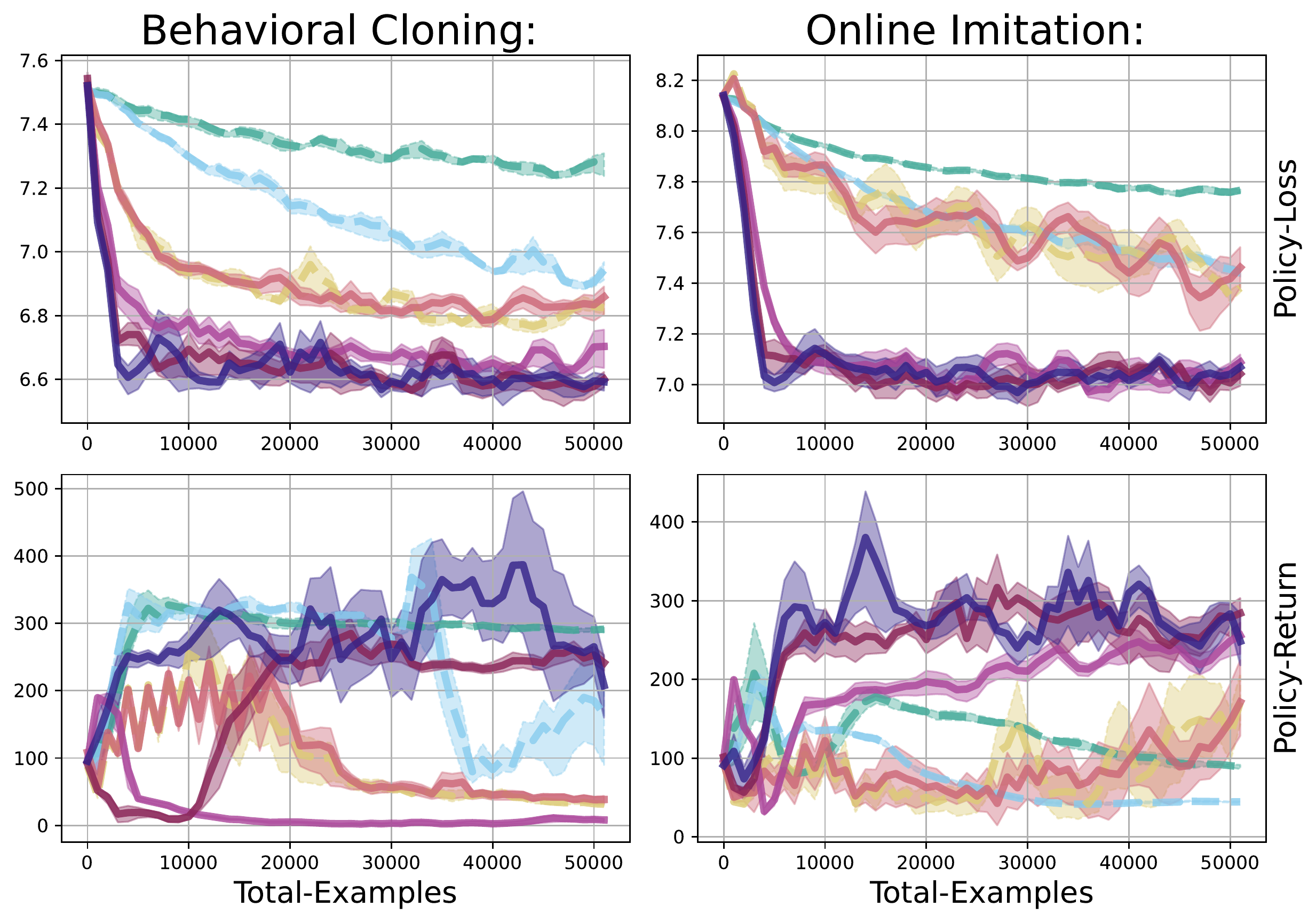}
        \caption{Walker2d-v2 Environment}
    \end{subfigure}
    \begin{subfigure}{.56\linewidth}
    \centering
        \includegraphics[width=\textwidth]{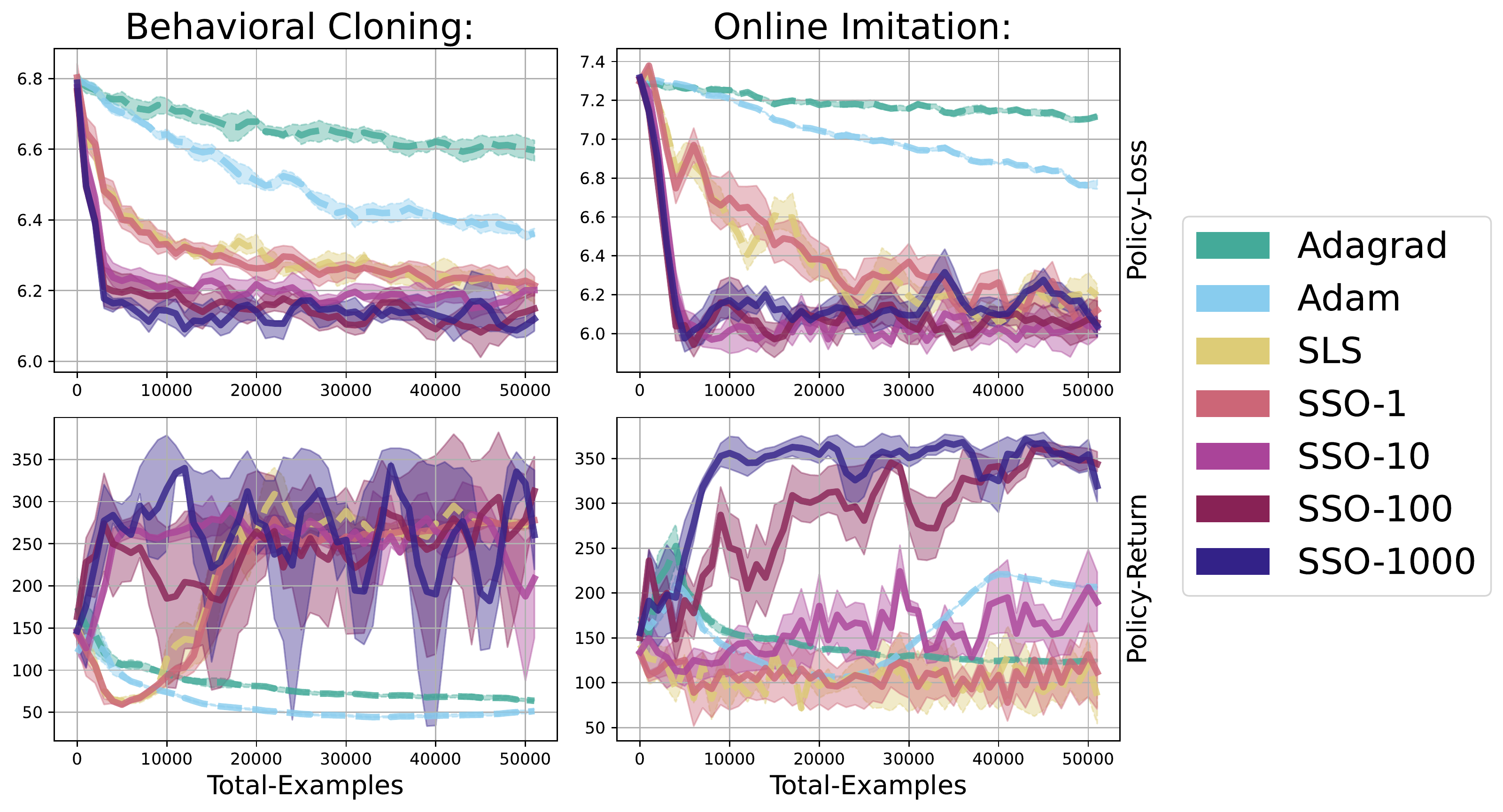}
        \caption{Hopper-v2 Environment}
     \end{subfigure} 
    \caption{Comparison of policy return, and log policy loss incurred by \texttt{SGD}, \texttt{SLS}, \texttt{Adam}, \texttt{Adagrad}, and \texttt{SSO} as a function of the total interactions. Unlike \cref{sec:experiments}, the mean of the policy is parameterized by a \textbf{linear model}. In both environments, for both behavioral policies, \texttt{SSO} outperforms all other online-optimization algorithms. Additionally, as m in increases, so to does the performance of \texttt{SSO} in terms of both the return as well as the loss.}  
    \label{fig:oil_fig-app-linear}
\end{figure}

\begin{figure}[!ht]
\centering 
    \includegraphics[width=0.85\textwidth]{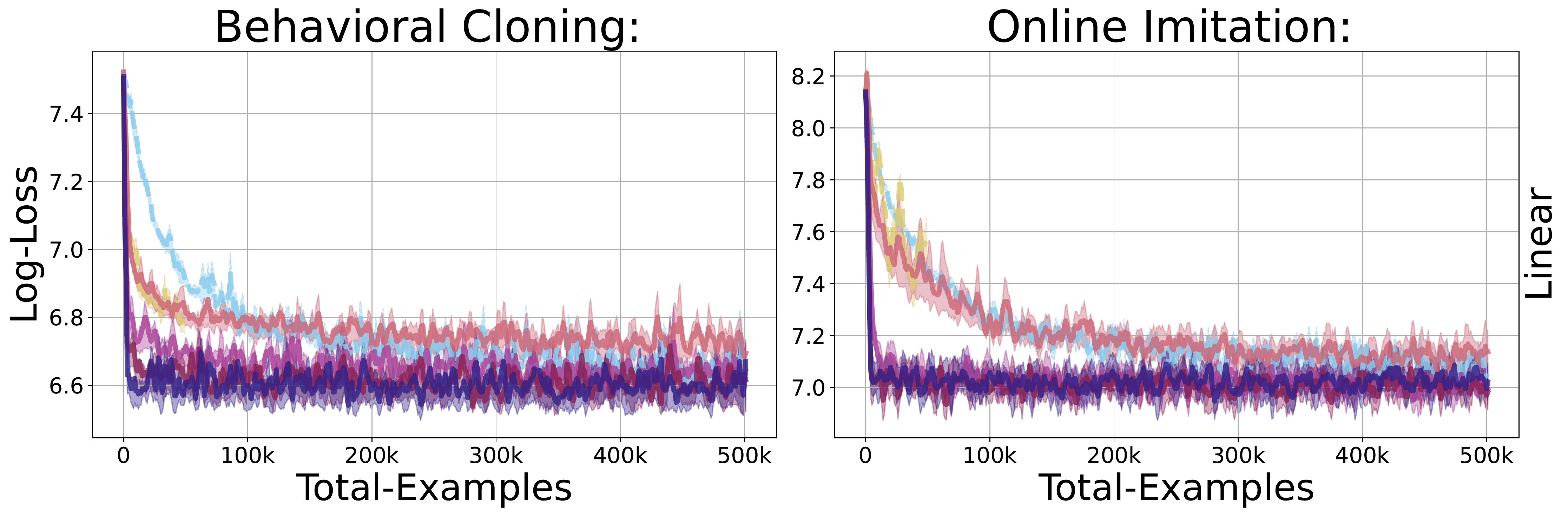} 
    \includegraphics[width=\textwidth]{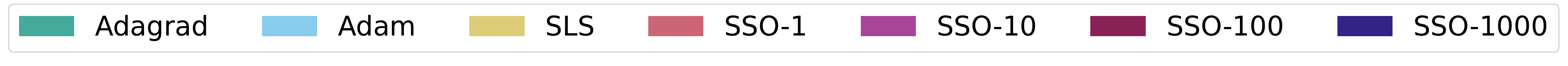}
    \caption{Comparison of log policy loss incurred by \texttt{SGD}, \texttt{SLS}, \texttt{Adam}, \texttt{Adagrad}, and \texttt{SSO} as a function of the total interactions for the Walker2d gym environment. Unlike \cref{sec:experiments}, the mean of the policy is parameterized by a \textbf{linear model}. Unlike \cref{fig:oil_fig-app-linear} and \cref{fig:oil_fig-app-nn},this plot displays 500 thousand iterations instead of only 50. We again see that \texttt{SSO} outperforms all other online-optimization algorithms, \textit{even} for a very large number of iterations. Additionally, as m in increases, we again see the performance of \texttt{SSO} log loss improves as well.}  
    \label{fig:oil_fig-app-linear-cr}
\end{figure}

Comparisons of \texttt{Adagrad}, \texttt{SLS}, \texttt{Adam}, and \texttt{SSO} evaluated on two Mujoco~\citep{todorov2012mujoco} imitation learning benchmarks~\citep{lavington2022improved}, \texttt{Hopper-v2}, and \texttt{Walker-v2}. In this setting training and evaluation proceed in rounds. At every round, a behavioral policy samples data from the environment, and an expert labels that data. The goal is guess at the next stage (conditioned on the sampled states) what the expert will label the examples which are gathered. Here, unlike the supervised learning setting, we receive a stream of new data points which can be correlated and drawn from following different distributions through time. Theoretically this makes the optimization problem significantly more difficult, and because we must interact with a simulator, querying the stochastic gradient can be expensive. Like the example in \cref{app:experiments}, in this setting we will interact under both the experts policy distribution (behavioral cloning), as well as the policy distribution induced by the agent (online imitation). We parameterize a standard normal distribution whose mean is learned through a mean squared error loss between the expert labels and the mean of the agent policy. Again, the expert is trained following soft-actor-critic. All experiments were run using an NVIDIA GeForce RTX 2070 graphics card. with a AMD Ryzen 9 3900 12-Core Processor.

In this this setting we evaluate two measures: the per-round log-policy loss, and the policy return. The log policy loss is as described above, while the return is a measure of how well the imitation learning policy actually solves the task. In the Mujoco benchmarks, this reward is defined as a function of how quickly the agent can move in space, as well as the power exerted to move. Imitation learning generally functions by taking a policy which has a high reward (e.g. can move through space with very little effort in terms of torque), and directly imitating it instead of attempting to learn a cost to go function as is done in RL~\cite{sutton2000policy}. 

Each algorithm is evaluated over three random seeds following the same initialization scheme. As before, in all settings, algorithms use either their theoretical step-size when available, or the default as defined by \cite{paszke2017automatic}. The inner-optimization loop is set according to line-search parameters and heuristics following \citet{vaswani2019painless}. All algorithms and batch sizes are evaluated for 50 rounds of interaction (accept in the case of \cref{fig:oil_fig-app-linear-cr}, which is evaluated for 500) and performance is represented as a function of total interactions with the environment. Below we learn a policy which is parameterized by a two layer perception with 256 hidden units and relu activations (as was done in the main paper). For further details, please see the attached code repository.

\end{document}